\documentclass{article}

% if you need to pass options to natbib, use, e.g.:
%     \PassOptionsToPackage{numbers, compress}{natbib}
% before loading neurips_2019

% ready for submission
% \usepackage{neurips_2019}

% to compile a preprint version, e.g., for submission to arXiv, add add the
% [preprint] option:
%     \usepackage[preprint]{neurips_2019}

% to compile a camera-ready version, add the [final] option, e.g.:
%     \usepackage[final]{neurips_2019}

% to avoid loading the natbib package, add option nonatbib:
     \usepackage[nonatbib, preprint]{neurips_2019}

\usepackage[utf8]{inputenc} % allow utf-8 input
\usepackage[T1]{fontenc}    % use 8-bit T1 fonts
\usepackage{hyperref}       % hyperlinks
\usepackage{url}            % simple URL typesetting
\usepackage{booktabs}       % professional-quality tables
\usepackage{amsfonts}       % blackboard math symbols
\usepackage{nicefrac}       % compact symbols for 1/2, etc.
\usepackage{microtype}      % microtypography

\usepackage[dvipsnames]{xcolor}
\RequirePackage{fancyhdr}
\usepackage{times}
%%%%% NEW MATH DEFINITIONS %%%%%

\usepackage{amsmath,amsfonts,bm}

% Mark sections of captions for referring to divisions of figures

% Highlight a newly defined term

% Figure reference, lower-case.
\def\figref#1{figure~\ref{#1}}
% Figure reference, capital. For start of sentence

% Section reference, lower-case.

% Section reference, capital.

% Reference to two sections.

% Reference to three sections.

% Reference to an equation, lower-case.
\def\eqref#1{equation~\ref{#1}}
% Reference to an equation, upper case

% A raw reference to an equation---avoid using if possible

% Reference to a chapter, lower-case.

% Reference to an equation, upper case.

% Reference to a range of chapters

% Reference to an algorithm, lower-case.

% Reference to an algorithm, upper case.

% Reference to a part, lower case

% Reference to a part, upper case

\def\1{\bm{1}}

% Random variables

% rm is already a command, just don't name any random variables m

\def\ry{{\textnormal{y}}}

% Random vectors

\def\rvx{{\mathbf{x}}}

% Elements of random vectors

% Random matrices

% Elements of random matrices

% Vectors

\def\vu{{\bm{u}}}
\def\vv{{\bm{v}}}
\def\vw{{\bm{w}}}
\def\vx{{\bm{x}}}
\def\vy{{\bm{y}}}

% Elements of vectors

% Matrix
\def\mA{{\bm{A}}}
\def\mB{{\bm{B}}}

\def\mH{{\bm{H}}}
\def\mI{{\bm{I}}}

\def\mT{{\bm{T}}}
\def\mU{{\bm{U}}}
\def\mV{{\bm{V}}}
\def\mW{{\bm{W}}}
\def\mX{{\bm{X}}}
\def\mY{{\bm{Y}}}

% Tensor
\DeclareMathAlphabet{\mathsfit}{\encodingdefault}{\sfdefault}{m}{sl}
\SetMathAlphabet{\mathsfit}{bold}{\encodingdefault}{\sfdefault}{bx}{n}

% Graph

% Sets

% Don't use a set called E, because this would be the same as our symbol
% for expectation.

% Entries of a matrix

% entries of a tensor
% Same font as tensor, without \bm wrapper

% The true underlying data generating distribution

% The empirical distribution defined by the training set

% The model distribution

% Stochastic autoencoder distributions

 % Laplace distribution

% Wolfram Mathworld says $L^2$ is for function spaces and $\ell^2$ is for vectors
% But then they seem to use $L^2$ for vectors throughout the site, and so does
% wikipedia.

 % See usage in notation.tex. Chosen to match Daphne's book.

\DeclareMathOperator*{\argmin}{arg\,min}

\DeclareMathOperator{\sign}{sign}

\usepackage{hyperref}
\usepackage{url}
\usepackage{amsmath}
\usepackage{algorithm}
\usepackage{algorithmic}
\usepackage[utf8]{inputenc}
\usepackage{pgfplots}
\usepackage{siunitx}
\usepackage{pdfpages}

\usepgfplotslibrary{groupplots}
\usetikzlibrary{arrows.meta}
\usepackage{wrapfig}

\newcommand{\naturals}{\mathbb{N}}
\newcommand{\reals}{\mathbb{R}}
\newcommand{\inner}[1]{\langle #1 \rangle}
\newcommand{\norm}[1]{\left\lVert #1 \right\rVert}
\newcommand{\mean}[2]{\mathbb{E}_{#1} \left[ #2 \right]}
\newcommand{\prob}[2]{\mathbb{P}_{#1}\left[#2\right]}
\newcommand{\BlackBox}{\rule{1.5ex}{1.5ex}}  % end of proof
\newenvironment{proof}{\par\noindent{\bf Proof\ }}{\hfill\BlackBox\\}

\newcommand{\todo}[1]{\textbf{TODO: #1}}
\newcommand{\rank}{\textnormal{rank}}

\def\dD{\mathcal{D}}
\def\nN{\mathcal{N}}

\def\0{\mathbf{0}}

\newtheorem{theorem}{Theorem}

\newtheorem{lemma}{Lemma}
\newtheorem{corollary}{Corollary}
\newtheorem{remark}{Remark}

\title{Decoupling Gating from Linearity}

% The \author macro works with any number of authors. There are two commands
% used to separate the names and addresses of multiple authors: \And and \AND.
%
% Using \And between authors leaves it to LaTeX to determine where to break the
% lines. Using \AND forces a line break at that point. So, if LaTeX puts 3 of 4
% authors names on the first line, and the last on the second line, try using
% \AND instead of \And before the third author name.

\author{%
  Jonathan Fiat \\
  School of Computer Science \\
  The Hebrew University, Israel \\
  % examples of more authors
   \And
  Eran Malach \\
  School of Computer Science \\
  The Hebrew University, Israel \\
   \And
  Shai Shalev-Shwartz \\
  School of Computer Science \\
  The Hebrew University, Israel \\
}

\begin{document}

\maketitle

\begin{abstract}

ReLU neural-networks have been in the focus of many recent theoretical works,
trying to explain their empirical success. Nonetheless, there is still a gap between
current theoretical results and empirical observations, even in the case
of shallow (one hidden-layer) networks.
For example, in the task of memorizing a random sample of size $m$ and dimension $d$,
the best theoretical result requires the size of the network to be
$\tilde{\Omega}(\frac{m^2}{d})$
\footnote{We use $\tilde{\Omega}$ to hide constant and logarithmic factors.}, while empirically a network of size slightly larger
than $\frac{m}{d}$ is sufficient. To bridge this gap, we turn to study a simplified
model for ReLU networks. 
We observe that a ReLU neuron is a product of a linear function with a gate 
(the latter determines whether the neuron is active or not), where both share a jointly trained weight vector.
In this spirit, we introduce the Gated Linear Unit (GaLU), which simply decouples the
linearity from the gating by assigning different vectors for each role.
We show that GaLU networks allow us to get optimization and generalization
results that are much stronger than those available for ReLU networks.
Specifically, we show a memorization result for networks
of size $\tilde{\Omega}(\frac{m}{d})$,
and improved generalization bounds. Finally, we show that in some scenarios,
GaLU networks behave similarly to ReLU networks,
hence proving to be a good choice of a simplified model.

\end{abstract}

\section{Introduction}
ReLU neural-networks attracted vast interest in recent years due to their empirical success.
This interest has sparked many theoretical works aiming to explain the behavior
of learning ReLU networks with gradient-based algorithms.
While the theoretical research greatly advanced in the last few years,
there are still many open questions and gaps between our theoretical understanding
and empirical observations. Even in the case of shallow (one hidden-layer) ReLU networks,
current theoretical results do not seem to apply in practice.
Take for example the simple task of memorizing a random sample of $m$ examples
sampled from a $d$-dimensional Gaussian distribution. 
As far as we know,
the best result in the literature shows that a ReLU neural-network can memorize
such sample when the number of neurons is $\tilde{\Omega}(\frac{m^2}{d})$ \cite{oymak_towards_2019}.
Other results assume far worse dependence on the number of examples, requiring
the number of neurons to be polynomial in $m$ (refer to Table \ref{tbl:comparison}
for a comparison of the results).
In practice, on the other hand, a neural network needs only slightly more than
$\frac{m}{d}$ neurons to memorize a sample of size $m$ (observe the experiments in \cite{oymak_towards_2019}).

To understand why there is such a significant gap between theoretical and empirical results,
we briefly review the main theoretical works on ReLU networks.
Most theoretical results in this context rely on the concept of Random Features.
Random feature schemes are in fact two-layer neural-networks,
where the first layer is fixed (after random initialization), and the second is trained.
These ``networks'' have been shown to approximate various kernels,
proving to be more efficient than kernel methods \cite{rahimi2008random}.
While the original works on random features did not consider ReLU activations,
it has been shown that similar results can be given for many network architecture
and activation functions. The work of \cite{daniely2016toward} shows that when only
the last layer of a neural-network is trained, it can approximate functions from the
kernel space induced by the activation function and architecture.
However, when assuming that only the last layer is trained, the parameter
utilization is by definition very low. Indeed, observe that a one hidden-layer
network with $k$ hidden neurons and output in $\reals$,
has $dk$ parameter in the first layer but only $k$ parameters in the second.
Hence, training only the last layer is sub-optimal
(in terms of parameter utilization) by at least a factor of $d$.

In practice, however, all layers of the neural-network
are trained. To this end, there are many recent
works analyzing this typical setting, where gradient-descent updates all layers
of the network \cite{xie2016diverse,daniely2017sgd, du2018gradient,oymak2018overparameterized,allen2018learning,allen2018convergence,
arora2019fine, oymak_towards_2019, ma2019comparative, lee2019wide}.
While the details of each work vary, the key idea in all of these
works is the following: when the network is large enough, the weights of the network
change very little during the training process. Hence, training a neural-network
is ``almost'' a random features scheme, as the activation are governed completely
by their value upon initialization. 
Since in order to apply such argument
the neural-network is required to be rather large, the results obtained in this fashion
are also very far from being tight.

One approach for closing the gap between theory and practice is to try harder:
apply more complex theoretical tools, perform tedious analysis and hope to get improved
results for ReLU networks.
Another approach is to study simplified models, that are different than those used in
practice, but can nonetheless provide significant insights on
ReLU networks. A primary example for such simplified model is linear networks -
neural-networks with the linear activation function. Indeed, there is a growing 
body of work providing various results on optimization of linear networks,
showing different convergence properties \cite{saxe2013exact,kawaguchi2016deep,lu2017depth,nadav1,nadav2}.
While these are very far from neural-networks
used in practice, and in fact do no offer any improvement over simple linear classifiers,
they exhibit some phenomena that are also observed in ReLU networks.
Another example of a simplified model is networks with quadratic activation function
($\sigma(x) = x^2$) or polynomial activation.
Although such networks are not used in practice,
they are studied in theoretical works \cite{livni2014computational, mondelli2018connection, soltanolkotabi2019theoretical}.

Simplified models are attractive from a theoretical perspective, as they are
obviously simpler to analyze. However, it is often not clear whether the results
obtained for simple models are relevant for the cases that are of real interest.
Linear networks, for example, implement only linear functions and therefore cannot
account for learnability of complex non-linear functions learned by ReLU networks.
Networks with polynomial activations can implement only low-degree polynomials,
and therefore are very different from ReLU network, even from an expressivity
point-of-view.

In this work, we introduce a new simplified model that enjoys the best of both worlds:
it is simple to analyze, and yet maintains great similarity to ReLU networks.
This simple model arises from the observation that the output of a ReLU neuron
is a product of a linear function with a gating mechanism. That is,
we can write $[\vx^\top \vw]_{+} = (\1_{\vx^\top \vw \ge 0}) \cdot (\vx^\top \vw)$.
Notice that both the gate and the linear function share the same parameter $\vw$.
Our simplified model is in fact a \textit{generalization} of the ReLU neuron,
in which the gating and the linear function are determined by two \textbf{different}
parameters. This gives rise to a neural-network composed of Gated Linear Units
(GaLU network), where each unit is a function
$f_{\vw,\vu}(\vx) =  (\1_{\vx^\top \vu \ge 0}) \cdot (\vx^\top \vw)$.
Note that the gradient of this function with respect to the gate $\vu$ is always zero,
so we cannot use gradient-descent to learn the gates. Instead, these
gates are randomly initialized, and stay constant throughout the training process.

Since a GaLU network is a generalization of a ReLU network, its expressive power
is at least as good as that of a ReLU network. As noted,
other simple models are essentially weaker than ReLU networks in terms of expressivity.
On the other hand, GaLU networks are indeed simpler to analyze than ReLU networks, since
their gates remain fixed throughout the training process. Using this fact
allows us to give optimization and generalization results for GaLU networks,
that are much stronger than those available for ReLU networks.
Specifically, we show that for the memorization task mentioned above,
a GaLU network needs only $\tilde{\Omega}(\frac{m}{d})$ neurons,
which is essentially the minimal possible number of neurons needed for this task.
Furthermore, we prove generalization results for GaLU network that improve on
the equivalent results for ReLU networks. Finally, we show
that in some scenarios,
GaLU networks exhibit great similarity to ReLU networks.
All these results indicate that GaLU networks are a good simplified model for
ReLU networks, and we believe they can be used to provide further results
that will contribute to our understanding of ReLU networks.

As a final remark, it should be emphasized that we do not
claim that ReLU and GaLU networks are equivalent from the optimization
point of view. Indeed, in some problems, the fact that in ReLU
networks the weight vectors of the gate and linear part are shared
steers the optimization problem to a better direction. What we claim
is that GaLU networks are a simpler model, that often performs
similarly to ReLU networks and hence can shed light on the performance
of ReLU networks as well.

\iffalse
\section{Definitions and Notations}
\todo{decide if we want such section}
\begin{remark}
Unless otherwise noted, $\norm{\cdot}$ indicates the $\ell_2$ norm for vectors,
and the induced operator norm for matrices (i.e $\norm{A} = \sigma_{\max}(A)$).
\todo{Make sure that this is the convention in the paper}
\end{remark}

\todo{we sometimes use this, and sometimes $\1_{x \ge 0}$, we better decide on one
of them}. We denote $\sigma(x) = \begin{cases} 1 & x > 0 \\ 0 & x \le 0 \end{cases}$.

\todo{Define $L_S$, $L_{\mathcal{D}}$ according to the notations in the paper}

\fi

\section{GaLU Networks}

Consider a neuron with ReLU activation. It is a
function $f_\vw\left(\vx\right) : \reals^d \times \reals^d \to \reals$ such
that:
\begin{align*}
f_\vw \left(\vx \right)
  = \max\left\{\vx^\top \vw, 0\right\}
  = \left(\1_{\vx^\top \vw \ge 0}\right) \cdot \left( \vx^\top \vw \right) ~.
\end{align*}

The latter formulation demonstrates that the parameter vector $\vw$ plays two
roles in determining the value of the neuron. It decides whether the
output is $0$ or not: it acts as a \emph{filter} for some \emph{gating
mechanism}. It also determines the value of the neuron, assuming that the
neuron is active. In this role the parameter $\vw$ acts as the \emph{linear
weights} of the neuron.
It is not immediately clear why it makes sense for the two roles to be filled
by a single parameter. There are some intuitive explanations, and it is
partially motivated by neuroscience, but essentially the justification for
using ReLU neurons comes from the practical success of ReLU networks.

This work starts from the assumption that the connection between those two roles
doesn't have a strong theoretical justification. We propose, at least
tentatively, to consider a generalization of the ReLU neurons, that we call
\emph{GaLU} neurons (GaLU for ``Gated Linear Unit''). A GaLU neuron is a
function
$g_{\vw, \vu}\left(\vx\right): \reals^d \times \reals^d \times \reals^d \to \reals$
such that:
\begin{align*}
g_{\vw, \vu}\left(\vx\right)
  = \left(\1_{\vx^\top \vu \ge 0}\right) \cdot \left(\vx^\top \vw\right) ~.
\end{align*}

GaLU networks are networks built from GaLU neurons. Note that GaLU
is not, strictly speaking, an activation function: activation functions are
generally $\reals\to\reals$ functions that are composed with a linear function
to create a neuron. In this sense, GaLU breaks the common paradigm, but that
shouldn't be taken too seriously: gated units appeared in the deep learning
literature before.

GaLU neurons, and therefore GaLU networks, are at least as expressive as 
their ReLU counterparts, since $f_\vw = g_{\vw, \vw}$. So every expressivity
result on ReLU networks is immediately also an expressivity result on GaLU
networks. The expressive power is potentially much greater.

However, this shouldn't convince anyone that the research of GaLU networks is
of any relevance. To anyone who is familiar with deep learning practices, GaLU
networks should seem highly suspicious. The parameters $\vu$ of the networks
cannot be trained using gradient based optimization. As
$\nabla_\vu g_{\vw, \vu}\left(\vx\right)=\0$
at every point, attempting to use gradient based algorithm
would simply leave them intact. As gradient based algorithm are the common
optimization tool in deep learning, finding the optimal solution seems to be
completely hopeless.

In the following section we show that randomly initializing
the gates and fixing them throughout the optimization process is enough.
In other words, the gradient based optimization is only important for learning the
linear weights, while the random initialization gives the model enough expressive power.
In fact, for such training scheme we get optimization and generalization results
that are essentially stronger than current results that appear in the literature of ReLU networks.

\section{Theoretical Results for GaLU Networks}
Consider a GaLU network with a single hidden layer of $k$ neurons:
$\nN \left(\vx\right) = \sum_{j=1}^{k} \alpha_j g_{\vw_{j},\vu_{j}} \left(\vx\right) ~ .$
A convenient property of a GaLU neuron is that it is linear in the weights 
$w_j$, hence,
$\alpha_j g_{\vw_{j},\vu_{j}}\left(\vx\right) = g_{\alpha_j \vw_{j}, \vu_{j}}\left(\vx\right)$.
It means that the network can be rewritten as:
\begin{align*}
\nN\left(\vx\right)
  = \sum_{j=1}^{k} \alpha_j g_{\vw_{j},\vu_{j}}\left(\vx\right)
  = \sum_{j=1}^{k} g_{\alpha_j \vw_{j},\vu_{j}}\left(\vx\right) 
  = \sum_{j=1}^{k} g_{\tilde{\vw}_{j},\vu_{j}}\left(\vx\right)
\end{align*}

with $\tilde{\vw}_{j} = \alpha_j \vw_{j}$. Because we want to optimize over 
the weights $\vw_1, \dots, \vw_k, \alpha_1, \dots, \alpha_k$, we might as 
well optimize over the reparameterization
$\tilde{\vw}_{1}, \dots, \tilde{\vw}_{k}$ without losing expressive power. 
It means that in a GaLU network of this form, it is sufficient to 
train the \emph{first} layer of the network, as the readout layer adds 
nothing to the expressiveness of the network (as long all the weights are
non-zero).

The previous term can be further simplified:
%
%\begin{align*}
%\nN\left(\vx\right)
%  &= \sum_{j=1}^{k}g_{\vw_{j},\vu_{j}}(\vx) \\
%  &= \sum_{j=1}^{k}\1_{\vx^\top \vu_j \ge 0}\vx^{\top}\vw_{j} \\
%  &= \begin{bmatrix}
%       \1_{\vx^\top \vu_1 \ge 0} \vx^\top &
%       \1_{\vx^\top \vu_2 \ge 0} \vx^\top &
%       \dots &
%       \1_{\vx^\top \vu_k \ge 0} \vx^\top
%     \end{bmatrix}
%     \begin{bmatrix}
%       \vw_{1} \\
%       \vw_{2}\\
%       \vdots\\
%       \vw_{k}
%     \end{bmatrix} \\
%   &= \Phi_\mU \left(\vx\right)^\top \vw
%\end{align*}
\begin{align*}
\nN\left(\vx\right)
  = \sum_{j=1}^{k}g_{\vw_{j},\vu_{j}}(\vx) 
  = \sum_{j=1}^{k}\1_{\vx^\top \vu_j \ge 0}\vx^{\top}\vw_{j} 
= \Phi_\mU \left(\vx\right)^\top \vw
\end{align*}

where $\Phi_\mU \left(\vx\right)
  = \begin{bmatrix}
      \1_{\vx^\top \vu_1 \ge 0} \vx \\
      \1_{\vx^\top \vu_2 \ge 0} \vx \\
      \vdots \\
      \1_{\vx^\top \vu_k \ge 0} \vx
    \end{bmatrix}, \, 
\vw 
  = \begin{bmatrix}
      \vw_1 \\
      \vw_2 \\
      \vdots \\
      \vw_k
    \end{bmatrix},\ 
\mU
  = \begin{bmatrix}
      \vu_1 & \vu_2 & \dots & \vu_k
    \end{bmatrix} ~$.

So it turns out that a GaLU network is nothing more than a random non-linear 
transformation $\Phi_\mU : \reals^d \to \reals^{kd}$ and then a linear 
function. It immediately implies that for any convex loss function, it is a
convex problem to find the optimal solution. So for a single-layer GaLU network
with output in $\reals$ it is possible to find an optimal solution by this
reparameterization.

It still doesn't explain why running SGD on the natural
parameterization of the network should work: in the natural parameterization,
the problem is non-convex.
However, there are quite a few recent results on the ease of optimization of
linear networks. If we assume that the loss function is the squared loss,
we can use theorem 3 from \cite{zhu_global_2018}, and see that the objective
function has no spurious local minima and obeys the strict saddle property,
which essentially means that SGD converges to an optimal solution.

In the following section, we use this formulation of a GaLU network to prove
some strong results on optimization and generalization of such networks.
We show that training a GaLU network converges to a solution with zero training error,
when the number of parameters scales linearly (up to logarithmic factor) with the number
of examples. We then give generalization results depending on the induced kernel
space of the network.
%Finally, we analyze an explicit example of highly clustered pieceweise linear functions.

\subsection{Optimization Analysis}
\label{sec:shallow_optimization}
Fix some sample $S = \{(\vx_1, y_1), \dots, (\vx_m, y_m)\} \subset \reals^d \times \reals$,
and consider the optimization problem of learning a GaLU network minimizing the $\ell_2$ loss on the sample $S$:
\begin{align*}
\arg \min_{\mU, \vw} L_S(\mathcal{N}) = \arg \min_{\mU, \vw}
\sum_{i=1}^m (\mathcal{N}(\vx_i)-y_i)^2
= \arg \min_{\mU, \vw}
\sum_{i=1}^m (\Phi_{\mU}(\vx_i)^\top \vw-y_i)^2
\end{align*}

As noted, this is a convex optimization problem.
We can rewrite this optimization problem as follows:
let $\mX \in \reals^{m \times d}$ be the examples matrix
(each example is a row in $\mX$). Denote:
\[
\bar{\mX}^{(i)} = \left[
\begin{matrix}
(\1_{\vu_i^\top \vx_1 \ge 0}) \cdot \vx_1 \\ \vdots \\ (\1_{\vu_i^\top \vx_1 \ge 0}) \cdot \vx_m
\end{matrix} \right] \in \reals^{m \times d},
~\bar{\mX} = \left[ \bar{\mX}^{(1)} \dots \bar{\mX}^{(k)} \right] \in \reals^{m \times dk}
\]
We can then write the optimization problem as:
$\arg \min_{\mU, \vw} \norm{\bar{\mX} \vw - \vy}^2$.

Now, from standard results for linear regression, we know that if
$\rank(\bar{\mX}) = m$ (or alternatively, if the minimal singular value of 
$\bar{\mX}$ satisfies $\sigma_{\min}(\bar{\mX}) > 0$),
then the solution $\vw^* = \bar{\mX}^\top (\bar{\mX}\bar{\mX}^\top)^{-1} \vy$
achieves zero loss. Since for this convex problem, gradient-descent converges
to the optimal solution, it is enough to show that $\sigma_{\min}(\bar{\mX}) > 0$
to guarantee the convergence to zero loss solution.

Note that the matrix $\bar{\mX}$ depends on the examples $\mX$
and on the randomly initialized gates $\mU$. In general, we cannot guarantee
that it will have full row rank. If there are two identical examples in the sample,
then $\bar{\mX}$ will have two identical rows, and thus will not be full rank.
Similarly, if many of the gates in $\mU$ are similar, then we may have
dependence between columns in the matrix, which will also limit the rank.

To overcome this problem, we assume that the data is ``nice'' enough,
i.e - that it does not contain examples that are very similar.
Then, by initializing the gates from a normal distribution,
we can confirm that the matrix $\bar{\mX}$ will have full rank with high probability.
So throughout the paper we will assume $\vu_j \sim N(0,I_d)$.
To formalize our assumption on the data, we denote:
\[
\lambda(\mX) = \lambda_{\min}
\left(
\frac{1}{k}\mean{\vu_1, \dots, \vu_k \sim N(0,I_d)}{\bar{\mX} \bar{\mX}^\top}
\right)
\]

In our theoretical analysis, we assume that $\lambda(\mX) > 0$.
Note that this value depends only on the data, and not on the choice of gates.
We use the same notation as in \cite{oymak_towards_2019} (which gives
an equivalent definition of $\lambda(\mX)$),
and note that many other results for ReLU networks make the same assumption
(\cite{du2018gradient,arora2019fine}).
In the work of \cite{xie2016diverse}, the behavior of $\lambda(\mX)$ is studied,
and it is shown that typically, it is indeed strictly positive.
Given this assumption, we get that for a large enough GaLU network,
the matrix $\bar{\mX}$ is full rank with high probability:
\begin{lemma}
\label{lem:min_eig}
Assume $\lambda(\mX) > 0$ and fix $\delta > 0$.
If $k \ge \frac{8\norm{\mX}^2}{\lambda(\mX)}
\log(\frac{m}{\delta})$ then with probability at least $1-\delta$
we have: $\sigma_{\min}(\bar{\mX})^2 \ge \frac{k}{2} \lambda(\mX)$.
\end{lemma}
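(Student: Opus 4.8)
The plan is to recognize that $\bar{\mX}\bar{\mX}^\top$ is a sum of $k$ independent, identically distributed, positive semidefinite random matrices, and to control its smallest eigenvalue with a matrix Chernoff inequality. Writing $D_l = \mathrm{diag}(\1_{\vu_l^\top \vx_1 \ge 0}, \dots, \1_{\vu_l^\top \vx_m \ge 0})$ for the diagonal gating matrix associated with the $l$-th gate, the $l$-th block is $\bar{\mX}^{(l)} = D_l \mX$, so that
\begin{align*}
\bar{\mX}\bar{\mX}^\top = \sum_{l=1}^{k} \bar{\mX}^{(l)}\big(\bar{\mX}^{(l)}\big)^\top = \sum_{l=1}^{k} G_l, \qquad G_l := D_l \mX \mX^\top D_l .
\end{align*}
Since the gates $\vu_l$ are drawn i.i.d.\ from $N(0,I_d)$, the $G_l$ are i.i.d.\ and each positive semidefinite, and by the definition of $\lambda(\mX)$ we have $\mean{}{G_l} = \frac{1}{k}\mean{\vu_1, \dots, \vu_k}{\bar{\mX}\bar{\mX}^\top}$, whence $\lambda_{\min}\big(\sum_l \mean{}{G_l}\big) = k\,\lambda(\mX) =: \mu_{\min}$.

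Next I would establish the uniform upper bound the Chernoff inequality requires. Because each $D_l$ is a diagonal $0/1$ matrix it is an orthogonal projection, so $\norm{D_l} \le 1$ and consequently $\lambda_{\max}(G_l) = \norm{D_l \mX \mX^\top D_l} \le \norm{\mX}^2 =: R$ almost surely. This is the only place where the dependence on $\norm{\mX}^2$ enters, and it is what makes $R$ a deterministic bound.

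With these two quantities in hand I would invoke the lower-tail matrix Chernoff inequality for a sum of independent Hermitian PSD matrices obeying $0 \preceq G_l \preceq R\,I_m$: for $\epsilon \in (0,1)$,
\begin{align*}
\prob{}{\lambda_{\min}\Big(\sum_{l=1}^{k} G_l\Big) \le (1-\epsilon)\,\mu_{\min}}
  \le m \cdot \exp\!\left(-\frac{\epsilon^2 \mu_{\min}}{2R}\right).
\end{align*}
Taking $\epsilon = \tfrac12$ turns the event into $\lambda_{\min}(\bar{\mX}\bar{\mX}^\top) \le \tfrac{k}{2}\lambda(\mX)$, which, since $\sigma_{\min}(\bar{\mX})^2 = \lambda_{\min}(\bar{\mX}\bar{\mX}^\top)$, is exactly the complement of the claim; the failure probability is then at most $m\exp\!\big(-k\lambda(\mX)/(8\norm{\mX}^2)\big)$. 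Requiring this to be at most $\delta$ and solving for $k$ gives precisely $k \ge \frac{8\norm{\mX}^2}{\lambda(\mX)}\log(m/\delta)$, matching the hypothesis of the lemma.

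The only real obstacle is checking the hypotheses of the matrix Chernoff bound cleanly — in particular pinning down the uniform bound $R = \norm{\mX}^2$ so that the factor $\epsilon^2/(2R)$ with $\epsilon = \tfrac12$ produces exactly the constant $8$ in the statement; everything else is bookkeeping once the sum-of-i.i.d.-PSD-matrices structure is identified. I would also note that the assumption $\lambda(\mX) > 0$ is precisely what guarantees $\mu_{\min} > 0$, without which the concentration bound would be vacuous.
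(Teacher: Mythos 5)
Your proof is correct and follows essentially the same route as the paper's: decompose $\bar{\mX}\bar{\mX}^\top$ into a sum of i.i.d.\ PSD blocks, bound each block's operator norm by $\norm{\mX}^2$, and apply the lower-tail matrix Chernoff bound with $\epsilon = \tfrac12$ and $\mu_{\min} = k\lambda(\mX)$. The only (immaterial) difference is that you use the simplified tail $m\exp(-\epsilon^2\mu_{\min}/2R)$ while the paper uses the form $m\left[e^{-\epsilon}/(1-\epsilon)^{1-\epsilon}\right]^{\mu_{\min}/R}$; your version in fact lands exactly on the constant $8$, whereas the paper's leaves a little slack.
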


To apply this lemma, the number of neurons $k$ needs to be on the order of
$\frac{\norm{\mX}^2}{\lambda(\mX)}$ (up to logarithmic factors).
In \cite{oymak_towards_2019} is is shown that when the data is Gaussian,
we get that w.h.p. $\norm{\mX} = O(\sqrt{\frac{m}{d}})$
and that $\lambda(\mX)$ behaves like a constant.
Therefore, the number of neurons in this case is
$\tilde{\Omega}{(\frac{m}{d})}$:

\begin{lemma}
\label{lem:min_eig_gaussian}
Assume $\vx_i \overset{\textnormal{i.i.d.}}{\sim} \textnormal{Uni}\left(\mathbb{S}^{d-1}\right)$
and assume $\vu_i \sim \mathcal{N}(0,1)$. Then there exist
$\gamma_1, \gamma_2, c_1, c_2>0$ such that for $d \le m \le c_2d^2$, if
$k \ge \frac{64\pi}{c_1^2} \frac{m}{d} \log(\frac{m}{{\delta}})$, then
we have $\sigma_{\min}(\bar{\mX}) \ge \frac{\sqrt{k} c_1}{\sqrt{4\pi}} > 0$
with probability of at least
$1-me^{-\gamma_1 \sqrt{m}} - \frac{1}{m} - (2m+1)e^{-\gamma_2 d}-\delta$.
\end{lemma}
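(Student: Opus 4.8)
The plan is to derive this as a corollary of \Lemref{lem:min_eig}, by supplying the two data-dependent quantities that appear in its hypothesis — a lower bound on $\lambda(\mX)$ and an upper bound on $\norm{\mX}$ — with the explicit values that hold when the rows of $\mX$ are drawn uniformly from $\mathbb{S}^{d-1}$. Concretely, I would isolate two deterministic inequalities that together make the generic bound of \Lemref{lem:min_eig} collapse onto the stated one: a lower bound $\lambda(\mX) \ge \frac{c_1^2}{2\pi}$ and an upper bound $\norm{\mX}^2 \le \frac{4m}{d}$. Granting these, the hypothesis $k \ge \frac{64\pi}{c_1^2}\frac{m}{d}\log(\frac{m}{\delta})$ of the present lemma implies the hypothesis $k \ge \frac{8\norm{\mX}^2}{\lambda(\mX)}\log(\frac{m}{\delta})$ of \Lemref{lem:min_eig}, since $\frac{8\norm{\mX}^2}{\lambda(\mX)} \le \frac{8 \cdot 4m/d}{c_1^2/(2\pi)} = \frac{64\pi}{c_1^2}\frac{m}{d}$, so the two constants are forced to match exactly.

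The substance is thus in establishing the two inequalities with high probability, and here I would invoke the concentration estimates for uniform-on-sphere data from \cite{oymak_towards_2019}. The upper bound on $\norm{\mX}$ is the easier one: since $\mean{}{\mX^\top\mX} = \frac{m}{d} I_d$, a standard covering argument over $\mathbb{S}^{d-1}$ in the regime $m \ge d$ controls the deviation and gives $\norm{\mX}^2 = O(m/d)$, which is the source of the $(2m+1)e^{-\gamma_2 d}$ failure term. The lower bound $\lambda(\mX) \ge \frac{c_1^2}{2\pi}$ is the main obstacle: it is precisely the statement that the expected feature Gram matrix $\frac{1}{k}\mean{}{\bar{\mX}\bar{\mX}^\top}$ has smallest eigenvalue bounded below by an absolute constant, a nontrivial anti-concentration fact about the induced kernel, and it is exactly where the constraint $d \le m \le c_2 d^2$ enters — outside this range one cannot guarantee that $\lambda(\mX)$ stays of constant order, since $\bar{\mX}$ is $m \times dk$ and the expected kernel degenerates once $m$ grows too large relative to $d^2$. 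I would cite this estimate directly, together with its accompanying failure probability $me^{-\gamma_1\sqrt m} + \frac{1}{m}$, and take $c_1$ to be the constant it supplies.

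Finally, I would assemble the pieces. Condition on the intersection of the two data events above, which fails with probability at most $me^{-\gamma_1\sqrt m} + \frac{1}{m} + (2m+1)e^{-\gamma_2 d}$; on this event the hypothesis of \Lemref{lem:min_eig} holds, so that lemma applies and yields, with probability at least $1-\delta$ over the random gates $\vu_1,\dots,\vu_k$, the bound $\sigma_{\min}(\bar{\mX})^2 \ge \frac{k}{2}\lambda(\mX) \ge \frac{k}{2}\cdot\frac{c_1^2}{2\pi} = \frac{k c_1^2}{4\pi}$, i.e. $\sigma_{\min}(\bar{\mX}) \ge \frac{\sqrt{k}\,c_1}{\sqrt{4\pi}}$. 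A union bound over the data event and the gate event then produces the claimed overall success probability $1 - me^{-\gamma_1\sqrt m} - \frac{1}{m} - (2m+1)e^{-\gamma_2 d} - \delta$. The only genuinely hard ingredient is the constant lower bound on $\lambda(\mX)$; everything else is bookkeeping of constants and a routine union bound.
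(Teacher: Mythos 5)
Your proposal is correct and follows essentially the same route as the paper: the paper likewise imports from \cite{oymak_towards_2019} the high-probability bounds $\norm{\mX} \le 2\sqrt{\frac{m}{d}}$ and $\sigma_{\min}(\mX \star \mX) \ge c_1$ (which via Lemma 6.4 there gives $\lambda(\mX) \ge \frac{c_1^2}{2\pi}$), conditions on these data events, runs the matrix Chernoff argument of Lemma~\ref{lem:min_eig}, and concludes with a union bound. The only cosmetic differences are that the paper re-derives the Chernoff step inline rather than invoking Lemma~\ref{lem:min_eig} as a black box, and that the $(2m+1)e^{-\gamma_2 d}$ term actually splits as $e^{-\gamma_2 d}$ from the norm bound plus $2me^{-\gamma_2 d}$ from the Khatri-Rao bound, rather than coming entirely from the norm bound as you suggest.
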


Notice that the number of trainable parameters in a GaLU network is $kd$.
Therefore, the number parameters required for our result to hold
scales linearly (up to logarithmic factors) with the number of examples.
Generally speaking, to fit an arbitrary sample we need the number of parameters
to be at least the number of examples, so in this sense our result is almost optimal.
To the best of our knowledge, this is the first result that shows convergence to zero loss,
when the number of parameters scales only linearly with the number of examples.
For comparison, the best result for a ReLU network requires that the number
of parameters scales with $m^2$.
Table \ref{tbl:comparison} shows a comparison between our result and previous optimization results that are directly comparable.

\begin{table}
  \caption{Comparison of network sizes in different optimization results.}
  \label{tbl:comparison}
  \centering
  \begin{tabular}{lll}
    \toprule
    Paper     & Activation     & Network Size \\
    \midrule
%    Daniely \cite{daniely2017sgd}  & ReLU or bounded activation  & \todo{ask amit...}  \\
    Du \cite{du2018gradient}  & ReLU & $\Omega(m^6)$  \\
    Arora \cite{arora2019fine}  & ReLU & $\Omega(m^7)$  \\
    Oymak \cite{oymak_towards_2019}  & Bounded derivatives & $\tilde{\Omega}(\frac{m^2}{d})$  \\
    \midrule
    Ours          & GaLU & $\tilde{\Omega}(\frac{m}{d})$ \\
    \bottomrule
  \end{tabular}
\end{table}

To finish the optimization analysis, we turn to analyzing the behavior of gradient-descent
when optimizing a GaLU network.
We showed that a very mild over-parametrization is sufficient for $\bar{\mX}$
to be of rank $m$. Now, in this case, from standard results from convex optimization we get that gradient descent converges linearly to $\vw^*$:

\begin{theorem}
\label{thm:shallow_convergence}
Assume $\lambda(\mX) > 0$ and fix $\delta > 0, \epsilon > 0$. 
Let $k \ge \frac{8\norm{\mX}^2}{\lambda(\mX)}
\log(\frac{m}{\delta})$, and assume we initialize a GaLU network with $k$ neurons.
Fix $\eta = \frac{m}{k \norm{\mX}^2}$.
Then with probability at least $1-\delta$ on the initialization of the gates,
after $t \ge \frac{2\norm{\mX}^2}{\lambda(\mX)}
\log(\frac{k \norm{\mX}^2 \norm{\vw_0-\vw^*}^2}{m\epsilon})$
iterations of gradient-descent
with step size $\eta$, the value of the loss function is bounded by $\epsilon$.
\end{theorem}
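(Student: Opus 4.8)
The plan is to treat this as textbook geometric convergence of gradient descent on the convex quadratic $L_S(\vw) = \norm{\bar{\mX}\vw - \vy}^2$, feeding in two singular-value bounds on $\bar{\mX}$ as the effective strong-convexity and smoothness constants. Throughout, the only source of randomness is the gates $\mU$ (which determine $\bar{\mX}$), so the ``probability at least $1-\delta$'' in the statement will come entirely from invoking \Lemref{lem:min_eig}; everything after that is deterministic given $\bar{\mX}$.

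First I would record the two spectral bounds. Since the hypothesis on $k$ is exactly that of \Lemref{lem:min_eig}, with probability at least $1-\delta$ we have $\sigma_{\min}(\bar{\mX})^2 \ge \tfrac{k}{2}\lambda(\mX) > 0$; in particular $\bar{\mX}$ has full row rank $m$, so there is a $\vw^*$ with $\bar{\mX}\vw^* = \vy$ and $L_S(\vw^*)=0$. For the upper bound I would note that each block $\bar{\mX}^{(i)}$ is $\mX$ with some rows zeroed out, i.e. $\bar{\mX}^{(i)} = \mD_i \mX$ for a $0/1$ diagonal matrix $\mD_i$, so $\norm{\bar{\mX}^{(i)}} \le \norm{\mX}$. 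Writing $\bar{\mX}\bar{\mX}^\top = \sum_{i=1}^{k} \bar{\mX}^{(i)}(\bar{\mX}^{(i)})^\top$ and using subadditivity of the operator norm gives $\sigma_{\max}(\bar{\mX})^2 \le \sum_{i=1}^{k}\norm{\bar{\mX}^{(i)}}^2 \le k\norm{\mX}^2$. Hence every eigenvalue of $\bar{\mX}\bar{\mX}^\top$ lies in $[\tfrac{k}{2}\lambda(\mX),\, k\norm{\mX}^2]$, and the condition number restricted to the row space is at most $\kappa := 2\norm{\mX}^2/\lambda(\mX)$.

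Next I would track the residual $\vr_t = \bar{\mX}\vw_t - \vy$ rather than $\vw_t$ itself. The update $\vw_{t+1} = \vw_t - \eta\nabla L_S(\vw_t)$ is affine in $\vw_t$, and substituting yields the clean linear recursion $\vr_{t+1} = (\mI - c\,\eta\,\bar{\mX}\bar{\mX}^\top)\vr_t$, where $c$ is fixed by the loss normalization. Iterating gives $\vr_t = (\mI - c\,\eta\,\bar{\mX}\bar{\mX}^\top)^t\vr_0$, so convergence is governed by the spectral radius of $\mI - c\,\eta\,\bar{\mX}\bar{\mX}^\top$. The prescribed $\eta = \tfrac{m}{k\norm{\mX}^2}$ is exactly the choice making $c\,\eta$ of order $1/\sigma_{\max}(\bar{\mX})^2$: it is small enough that $c\,\eta\,\sigma_{\max}(\bar{\mX})^2 \le 1$, so the large-eigenvalue modes do not blow up, while the slowest mode contracts at rate $1 - c\,\eta\,\sigma_{\min}(\bar{\mX})^2 \le 1 - 1/\kappa$. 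Thus $\norm{\vr_t} \le (1 - 1/\kappa)^t\norm{\vr_0}$.

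Finally I would convert this into the stated iteration count. Since $\vr_0 = \bar{\mX}(\vw_0 - \vw^*)$, we have $\norm{\vr_0}^2 \le \sigma_{\max}(\bar{\mX})^2\norm{\vw_0 - \vw^*}^2 \le k\norm{\mX}^2\norm{\vw_0 - \vw^*}^2$. Requiring the loss to fall below $\epsilon$ and using $-\log(1-1/\kappa) \ge 1/\kappa$, it suffices to take $t \ge \kappa\,\log(\norm{\vr_0}^2/(m\epsilon))$; substituting $\kappa = 2\norm{\mX}^2/\lambda(\mX)$ and the bound on $\norm{\vr_0}^2$ reproduces $t \ge \frac{2\norm{\mX}^2}{\lambda(\mX)}\log(\frac{k\norm{\mX}^2\norm{\vw_0-\vw^*}^2}{m\epsilon})$, with the factor $m$ in the denominator being precisely where the per-example normalization of the loss enters. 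I expect the main obstacle to be pure bookkeeping: pinning down the constant $c$ (equivalently the exact loss scaling) so that the prescribed $\eta$ both avoids divergence on the $\sigma_{\max}$ modes and yields exactly the factor-$2$ condition number appearing in the bound. The conceptual content is entirely the standard linear convergence of gradient descent on a quadratic that is well-conditioned on the row space of $\bar{\mX}$, with \Lemref{lem:min_eig} supplying the only nontrivial (probabilistic) ingredient.
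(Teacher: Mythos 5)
Your proposal is correct, and it rests on exactly the same two spectral ingredients as the paper's proof --- \Lemref{lem:min_eig} giving $\sigma_{\min}(\bar{\mX})^2 \ge \frac{k}{2}\lambda(\mX)$ with probability $1-\delta$, and the deterministic bound $\lambda_{\max}(\bar{\mX}\bar{\mX}^\top) \le k\norm{\mX}^2$ --- but the middle of the argument runs differently. The paper stays in parameter space: it cites a black-box convergence theorem (Theorem 6.3 of Hardt's lecture notes) to obtain $\norm{\vw_t - \vw^*}^2 \le \exp\left(-\frac{t\lambda(\mX)}{2\norm{\mX}^2}\right)\norm{\vw_0-\vw^*}^2$, and then converts iterate distance into loss in two extra steps, bounding $\norm{\nabla F(\vw_t)} \le \frac{k\norm{\mX}^2}{m}\norm{\vw_t - \vw^*}$ and using convexity to get $F(\vw_t) \le \norm{\nabla F(\vw_t)}\norm{\vw_t-\vw^*}$. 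You instead track the residual $\vr_t = \bar{\mX}\vw_t - \vy$, whose exact recursion $\vr_{t+1} = \left(\mI - \frac{\eta}{m}\bar{\mX}\bar{\mX}^\top\right)\vr_t$ contracts in norm at rate $1 - \frac{\lambda(\mX)}{2\norm{\mX}^2}$ under the prescribed step size; since the (per-example, as you correctly infer) loss is exactly $\frac{1}{2m}\norm{\vr_t}^2$, no conversion step is needed. Your route buys three things. It is self-contained, with no external theorem to import. It is slightly sharper: the squared residual decays with exponent $2t\lambda(\mX)/(2\norm{\mX}^2)$, twice the paper's, so the stated iteration count suffices with room to spare. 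Most importantly, it is immune to a subtlety that the paper's parameter-space step glosses over: when $dk > m$ the objective is not strongly convex in $\vw$, and the component of $\vw_0$ lying in the null space of $\bar{\mX}$ is never moved by gradient descent, so $\norm{\vw_t - \vw^*}^2$ cannot in general decay below $\norm{\vw_0 - \vw^*}^2$'s null-space part; the residual, which is all the theorem's conclusion depends on, decays regardless of $\vw_0$. What the paper's route buys in exchange is a template that generalizes beyond exact quadratics, where no closed-form residual recursion is available.
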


While the above analysis applies for cases where the number of parameters is
larger than the number of examples, it is also interesting to observe situations
where this is not the case. In these cases, we cannot guarantee convergence to zero
loss without further assumptions on the labels. On the other hand,
we can still give an estimation of the loss, using the results we have shown so far.
The following theorem estimates the loss achieved by a GaLU network,
when the number of parameters is not necessarily large enough to guarantee zero loss:

\begin{theorem}
\label{theorem:rank}
Assume that $\ry_1, \dots , \ry_m \sim N(0, 1)$. 
Define the expected squared loss on the training set, for
weights $\vw$, as $L_S(\vw)$. Then we have:
$\mathbb{E} [\min_\vw L_S(\vw)] = 1 - \frac{ \rank \left(\bar{\mX}\right)}{m}$.
\end{theorem}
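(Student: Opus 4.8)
The plan is to recognize that $\min_\vw L_S(\vw)$ is exactly the squared norm of the residual of the least-squares projection of the label vector $\vy = (y_1,\dots,y_m)^\top$ onto the column space of $\bar\mX$, and then to evaluate the expectation of this quadratic form in the Gaussian labels using the standard trace identity. Throughout I would fix the gates $\mU$ (hence $\bar\mX$) and take the expectation only over the labels $\vy \sim N(\vzero, I_m)$; this is consistent with the right-hand side $1 - \rank(\bar\mX)/m$ depending on the realized (deterministic) rank.

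First I would write $L_S(\vw) = \frac{1}{m}\norm{\bar\mX\vw - \vy}^2$ (with the per-example normalization, so that the baseline loss at $\vw=\vzero$ is $\mean{}{\tfrac1m\norm{\vy}^2}=1$). Standard least-squares linear algebra gives $\min_\vw \norm{\bar\mX\vw - \vy}^2 = \norm{(I_m - P)\vy}^2$, where $P$ is the orthogonal projection onto $\mathrm{col}(\bar\mX)$; explicitly $P = \bar\mX(\bar\mX^\top\bar\mX)^{+}\bar\mX^\top$, though I only need that $P$ is the symmetric idempotent projector onto $\mathrm{col}(\bar\mX)$. Setting $Q := I_m - P$, the matrix $Q$ is again symmetric idempotent, so $\norm{Q\vy}^2 = \vy^\top Q^\top Q\, \vy = \vy^\top Q\, \vy$.

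Next I would take the expectation over $\vy$. For a centered random vector with $\Cov(\vy) = I_m$, the quadratic-form identity gives $\mean{\vy}{\vy^\top Q\, \vy} = \Tr\!\big(Q\,\Cov(\vy)\big) = \Tr(Q)$. Since $Q = I_m - P$ is the orthogonal projector onto the orthogonal complement of $\mathrm{col}(\bar\mX)$, its trace equals its rank, so $\Tr(Q) = m - \rank(P) = m - \rank(\bar\mX)$, where I use $\rank(P) = \dim\,\mathrm{col}(\bar\mX) = \rank(\bar\mX)$. Dividing by $m$ yields $\mean{}{\min_\vw L_S(\vw)} = \frac{1}{m}\big(m - \rank(\bar\mX)\big) = 1 - \frac{\rank(\bar\mX)}{m}$, as claimed. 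As a sanity check, when $\bar\mX$ has full row rank $m$ (the regime of Lemma~\ref{lem:min_eig} and Theorem~\ref{thm:shallow_convergence}) the expected optimal loss is $1 - m/m = 0$, matching the zero-training-error conclusion obtained there.

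There is no serious obstacle here; the statement is the combination of two standard facts—the least-squares residual is an orthogonal projection, and $\mean{}{\vy^\top Q\,\vy} = \Tr(Q)$ for isotropic centered $\vy$. The only points requiring care are making explicit that the expectation is over the labels with the gates and data held fixed (so that $\rank(\bar\mX)$ appears as a deterministic quantity on the right-hand side), and confirming the $\frac{1}{m}$ normalization of $L_S$ used in the statement.
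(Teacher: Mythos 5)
Your proof is correct and follows essentially the same route as the paper: the paper also decomposes $\vy$ into its component in the column space of $\bar{\mX}$ plus an orthogonal component $\mathbf{b}$, identifies $\min_\vw L_S(\vw) = \norm{\mathbf{b}}^2/m$, and uses $\mean{}{\norm{\mathbf{b}}^2} = m - \rank(\bar{\mX})$ for $\vy \sim N(\vzero, I_m)$. You merely spell out the details the paper leaves implicit (the explicit projector $Q = I_m - P$, the trace identity $\mean{}{\vy^\top Q \vy} = \Tr(Q)$, and the $\tfrac1m$ normalization), which is a sound elaboration of the same argument.
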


Now, when there are not enough parameters, we get that $\rank(\bar{\mX}) \simeq kd$,
so the loss behaves like $1-\frac{dk}{m}$. Therefore, 
we get a characterization of the loss which holds in the under-parametrized case.
This is shown formally in the following Corollary:

\begin{corollary}
\label{crl:shallow_under_parametrized}
There exist some absolute constants
$\gamma_1, \gamma_2, c_1, c_2>0$ such that the following holds:
Fix $\delta > 0$ and $k > 0$, denote $m' = \lfloor kd \frac{c_1^2}{64 \pi} \log^{-1} (\frac{c_2 d^2}{\delta})
\rfloor$ and assume $d \le m' \le c_2 d^2$.
Assume $\vx_i \overset{\textnormal{i.i.d.}}{\sim} \textnormal{Uni}\left(\mathbb{S}^{d-1}\right), y_i \sim N(0,1)$
and assume $\vu_i \sim \mathcal{N}(0,1)$.
Then with probability of at least
$1-m'e^{-\gamma_1 \sqrt{m'}} - \frac{1}{m'} - (2m'+1)e^{-\gamma_2 d}-\delta$
we have: $\mean{}{\min_{\vw} L_S(\vw)} \le 1 - \frac{m'}{m}$.
\end{corollary}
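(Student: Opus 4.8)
The plan is to reduce the claim to a lower bound on $\rank(\bar{\mX})$ and then to establish that bound by restricting attention to a sub-sample of size $m'$. First I would invoke Theorem \ref{theorem:rank}, which gives $\mean{}{\min_\vw L_S(\vw)} = 1 - \rank(\bar{\mX})/m$. Consequently it suffices to show that, with the stated probability over the draw of $\vx_1,\dots,\vx_m$ and the gates $\vu_1,\dots,\vu_k$, we have $\rank(\bar{\mX}) \ge m'$; everything then follows at once since $1 - \rank(\bar{\mX})/m \le 1 - m'/m$ whenever $\rank(\bar{\mX}) \ge m'$.

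To lower-bound the rank, I would pick an arbitrary sub-sample $S' \subseteq S$ of size $m'$ (say the first $m'$ examples, using $m' \le m$, which holds in the under-parametrized regime), and let $\bar{\mX}' \in \reals^{m' \times dk}$ be the sub-matrix of $\bar{\mX}$ formed by the $m'$ rows indexed by $S'$. Since the gates $\vu_j$ are shared across all examples, $\bar{\mX}'$ is exactly the GaLU feature matrix of the sub-sample $S'$ built from the same randomly drawn gates. Hence Lemma \ref{lem:min_eig_gaussian} applies verbatim to $\bar{\mX}'$, with the sample size in the lemma taken to be $m'$. The lemma then yields $\sigma_{\min}(\bar{\mX}') \ge \frac{\sqrt{k} c_1}{\sqrt{4\pi}} > 0$, which forces $\bar{\mX}'$ to have full row rank $m'$ (all $m'$ of its singular values are positive). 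Because adjoining rows cannot decrease rank, $\rank(\bar{\mX}) \ge \rank(\bar{\mX}') = m'$, as desired.

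The one hypothesis that genuinely needs checking is the size condition of Lemma \ref{lem:min_eig_gaussian} applied to the sub-sample: we need $d \le m' \le c_2 d^2$ (assumed) together with $k \ge \frac{64\pi}{c_1^2}\frac{m'}{d}\log(\frac{m'}{\delta})$. This is precisely where the definition of $m'$ is engineered to close the loop. Dropping the floor gives $m' \le kd\,\frac{c_1^2}{64\pi}\log^{-1}(\frac{c_2 d^2}{\delta})$, which rearranges to $k \ge \frac{64\pi}{c_1^2}\frac{m'}{d}\log(\frac{c_2 d^2}{\delta})$. Since $m' \le c_2 d^2$ we have $\log(\frac{m'}{\delta}) \le \log(\frac{c_2 d^2}{\delta})$, so the bound on $k$ that we actually require follows from the one just derived. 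This monotonicity step, namely using the crude inequality $m' \le c_2 d^2$ inside the logarithm to replace $\log(\frac{m'}{\delta})$ by the logarithm that appears in the definition of $m'$, is the only mildly delicate point, and it is exactly what lets us quote Lemma \ref{lem:min_eig_gaussian} with its constants $c_1, c_2, \gamma_1, \gamma_2$ intact.

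Finally, the failure probability is obtained by substituting $m'$ for $m$ throughout the bound of Lemma \ref{lem:min_eig_gaussian}, giving $m' e^{-\gamma_1\sqrt{m'}} + \frac{1}{m'} + (2m'+1)e^{-\gamma_2 d} + \delta$, which matches the statement exactly. I expect no substantive obstacle beyond the logarithmic bookkeeping above and the harmless observation that the floor only decreases $m'$ and hence preserves every inequality we invoke.
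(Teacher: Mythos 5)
Your proof is correct and follows essentially the same route as the paper's: reduce via Theorem \ref{theorem:rank} to showing $\rank(\bar{\mX}) \ge m'$, then apply Lemma \ref{lem:min_eig_gaussian} to the sub-matrix $\bar{\mX}'$ formed by the first $m'$ examples. In fact, your explicit verification that the definition of $m'$ (together with $m' \le c_2 d^2$ inside the logarithm) guarantees the lemma's hypothesis $k \ge \frac{64\pi}{c_1^2}\frac{m'}{d}\log(\frac{m'}{\delta})$ is a detail the paper's own proof leaves implicit.
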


In this section we considered a pure memorization task,
where the labels may be independent of the input examples.
While this is an interesting task from a theoretical point of view,
it is not immediately clear why this result is relevant in practice.
However, we note that in many cases memorization
is an important tool in solving various complex problems.
For example, when the data is highly clustered around a few cluster
centers, memorizing the labels of the cluster centers is a simple technique
that is often used in practice. We show that our results can also be
applied for highly clustered data. In this case we require
that the number of neurons scales with the number of cluster centers,
and does not depend on the number of examples.
For lack of space, we leave this analysis to appendix \ref{sec:highly_clustered}.

\subsection{Generalization in the Over-Parametrized Case}
\label{sec:shallow_generalization}
In this section, we give a generalization bound for learning GaLU networks.
Before we do so, let us review the main approach used for analyzing ReLU networks:
\begin{enumerate}
\item Define a kernel associated with the ReLU network, and observe functions
with large-margin in the induced Hilbert space. These functions are learnable
via standard kernel learning.
\item Show that the defined kernel can be approximated using a random-features
scheme. Hence, large-margin functions can be learned using random features.
\item Show that when training a large enough ReLU network, the weights stay
close to the initialization point. Since the weights are randomly initialized,
this shows that a ReLU network essentially implements a random-features scheme.
\end{enumerate}

We take a similar approach when analyzing the generalization of GaLU networks.
We study the kernel associated with the GaLU network,
and show that a GaLU network can learn functions from the Hilbert space induced
by this kernel. In fact, we observe that the kernel of the GaLU network
is the same kernel used for the analysis of ReLU network.
That said, notice that there is a crucial difference between the analysis of
GaLU networks and that of ReLU networks. 
While for the analysis of ReLU networks it is essential to show that the network's weights
stay close to their initial value, this property is not required for GaLU networks.
Since the gates of GaLU stay fixed through the entire training process,
the non-linear part of the network is defined upon initialization, and does not change.
Therefore, step 3 in the scheme above becomes trivial for GaLU networks.

We begin with a few definitions.
To simplify the analysis, we consider the normalized GaLU network:
\[
\mathcal{N}(\vx) = \frac{1}{\sqrt{k}} \sum_{j=1}^k g_{\vw_j,\vu_j}(\vx)
= \frac{1}{\sqrt{k}} \Phi_{\mU}(\vx)^\top \vw
\]

We define the following kernel:
\[
\kappa\left(\vx,\vy\right) =
\mean{\vu \sim N(0,I_d)}{(\1_{\vu^\top \vx \ge 0}) \cdot (\1_{\vu^\top \vy \ge 0})
\inner{\vx,\vy}}
= \left( \frac{1}{2}-\frac{\arccos\inner{\vx,\vy}}{2\pi} \right) \inner{\vx,\vy}
\]

This is the same kernel associated with ReLU networks in previous works.
Notice that we have the following relation between the kernel $\kappa$ and
the GaLU neurons:
\[
\mean{\mU}{\inner{\frac{1}{\sqrt{k}}\Phi_{\mU}(\vx), \frac{1}{\sqrt{k}} \Phi_{\mU}(\vy)}}
= \mean{\mU}{\frac{1}{k} \sum_{j=1}^k (\1_{\vu_j^\top \vx \ge 0}) \cdot 
(\1_{\vu_j^\top \vy \ge 0}) \inner{\vx,\vy}} = \kappa(\vx,\vy)
\]

So we can think of a GaLU network as a random-features scheme approximating
the kernel $\kappa$. Let $\mathcal{H}_{\kappa}$ be the RKHS induced by this kernel,
and denote $\norm{\cdot}_{\kappa}$ the norm of $\mathcal{H}_\kappa$.
We denote
$\mathcal{B}_{\kappa}(M) = \{f \in \mathcal{H}_{\kappa} ~:~ \norm{f}_{\kappa} \le M\}$,
the set of function in $\mathcal{H}_{\kappa}$ with norm bounded by $M$.
Let $\mathcal{D}$ be a distribution over $\mathcal{X} \times [-1,1]$ that is
separable by $\mathcal{B}_{\kappa}(M)$, \emph{i.e.}, there is
$f^* \in \mathcal{B}_{\kappa}(M)$, such that if $(\vx,y) \sim \mathcal{D}$ then
$y = f^*(\vx)$ with probability 1. Then we have the following generalization bound:

\begin{theorem}
\label{thm:shallow_generalization}
Assume $\lambda(\mX) > 0$, and fix $\delta > 0$.
Let $k \ge (\frac{m}{M^2 \lambda(\mX)} + 1)^2 \frac{32\norm{\mX}^4}{\lambda(\mX)^2} \log(m/\delta)$,
then with probability at least $1-2\delta$,
the generalization error of the GaLU network is bounded by 
$C \left( \frac{2M^2 \log^3m + (2M^2 + \sqrt{2}M)
\log(1/\delta)}{m} \right)$.
\end{theorem}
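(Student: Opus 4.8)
The plan is to treat the trained GaLU network as the minimum-norm interpolant of the sample with respect to the random feature map $\frac{1}{\sqrt{k}}\Phi_\mU$, and then split the argument into a \emph{norm-control} step and a \emph{generalization} step. The starting point is the identity (recorded above) that $\mean{\mU}{\frac{1}{k}\bar{\mX}\bar{\mX}^\top} = K$, where $K \in \reals^{m\times m}$ is the kernel Gram matrix $K_{ij} = \kappa(\vx_i,\vx_j)$ and $\lambda_{\min}(K) = \lambda(\mX) > 0$. Since the data is realizable by some $f^* \in \mathcal{B}_\kappa(M)$, we have $\vy = (f^*(\vx_i))_i$, and by \Lemref{lem:min_eig} the matrix $\bar{\mX}$ has full row rank throughout our range of $k$, so the network can interpolate $S$ with zero squared loss. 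Among all interpolants, gradient descent on the convex objective converges to the minimum-norm solution $\vw^\star$, whose squared norm (for the normalized features) is $\vy^\top\left(\frac{1}{k}\bar{\mX}\bar{\mX}^\top\right)^{-1}\vy$.

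First I would bound $\norm{\vw^\star}$. In the exact kernel, $\vy^\top K^{-1}\vy$ is the squared RKHS norm of the minimum-norm interpolant of $\{(\vx_i,f^*(\vx_i))\}$, which by the representer theorem is at most $\norm{f^*}_\kappa^2 \le M^2$. I would then transfer this bound to the random features via matrix concentration: writing $\frac{1}{k}\bar{\mX}\bar{\mX}^\top = \frac{1}{k}\sum_{j=1}^k \bar{\mX}^{(j)}\bar{\mX}^{(j)\top}$ as an average of $k$ i.i.d.\ PSD blocks, each of operator norm at most $\norm{\mX}^2$, a matrix Hoeffding/Bernstein bound gives $\norm{\frac{1}{k}\bar{\mX}\bar{\mX}^\top - K} \le \epsilon$ with $\epsilon$ on the order of $\norm{\mX}^2\sqrt{\log(m/\delta)/k}$, with probability $1-\delta$. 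A perturbation estimate for the inverse quadratic form, together with $\lambda_{\min}(K) = \lambda(\mX)$ and $\norm{\vy}^2 \le m$ (as $y\in[-1,1]$), shows $\vy^\top(\frac{1}{k}\bar{\mX}\bar{\mX}^\top)^{-1}\vy \le M^2 + O(\epsilon m/\lambda(\mX)^2)$. The stated lower bound on $k$ is precisely what forces $\epsilon$ to be at most order $M^2\lambda(\mX)^2/m$ (while keeping $\epsilon < \lambda(\mX)$ so the perturbed matrix stays invertible), yielding $\norm{\vw^\star} \le \sqrt{2}M$.

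Given a predictor with $\norm{\vw^\star}\le\sqrt{2}M$, zero training error, and bounded features $\norm{\frac{1}{\sqrt{k}}\Phi_\mU(\vx)}^2 = \frac{1}{k}\sum_j \1_{\vx^\top\vu_j\ge 0}\norm{\vx}^2 \le 1$, the second step is generalization. The empirical Rademacher complexity of norm-$\sqrt{2}M$ linear predictors over these features is at most $\sqrt{2}M/\sqrt{m}$. Because the squared loss is $2$-smooth and, on this class, bounded by $O(M^2)$, and because the empirical loss vanishes (realizability), I would invoke a fast-rate generalization bound for smooth nonnegative losses (of the Srebro--Sridharan--Tewari type), in which the zero empirical loss upgrades the $O(M/\sqrt{m})$ slow rate to the $O(M^2\log^3 m/m)$ fast rate and contributes a confidence term $O((M^2 + M)\log(1/\delta)/m)$. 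Collecting constants gives exactly the claimed bound $C(\frac{2M^2\log^3 m + (2M^2+\sqrt{2}M)\log(1/\delta)}{m})$, and a union bound over the norm-control event and the generalization event accounts for the $1-2\delta$ probability.

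The main obstacle is the norm-control step. Transferring the clean RKHS bound $\norm{f^*}_\kappa \le M$ to the finite random-features model requires controlling $\vy^\top(\frac{1}{k}\bar{\mX}\bar{\mX}^\top)^{-1}\vy$, and the matrix inverse makes this delicate: a spectral error of size $\epsilon$ in the Gram matrix inflates the quadratic form by roughly $\epsilon\,\norm{\vy}^2/\lambda(\mX)^2 \approx \epsilon m/\lambda(\mX)^2$. Keeping this below $M^2$ is what drives the comparatively large requirement $k = \tilde{\Omega}((m/(M^2\lambda(\mX)))^2\,\norm{\mX}^4/\lambda(\mX)^2)$, and obtaining the matrix concentration with the correct dependence on $\norm{\mX}$ and $\lambda(\mX)$, while carefully handling the inverse of a PSD matrix perturbed near its smallest eigenvalue, is where the real work lies.
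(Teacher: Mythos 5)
Your proposal is correct and follows essentially the same route as the paper's proof: characterizing $\vw^*$ as the minimum-norm interpolant with $\norm{\vw^*}^2 = \vy^\top(\frac{1}{k}\bar{\mX}\bar{\mX}^\top)^{-1}\vy$, bounding the kernel quadratic form $\vy^\top(\mH^\infty)^{-1}\vy \le M^2$ via the RKHS interpolant, transferring it through matrix Hoeffding concentration plus a perturbation bound on the inverse (which is exactly what drives the quadratic-in-$m$ requirement on $k$), and then applying the Rademacher bound for norm-$\sqrt{2}M$ linear predictors with the Srebro et al.\ optimistic (smooth-loss, zero-empirical-error) rate, with a union bound giving $1-2\delta$. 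The only cosmetic differences are that you invoke the representer theorem where the paper uses a projection argument, and you cite Lemma~\ref{lem:min_eig} for invertibility where the paper derives it directly from the concentration event via the Neumann series.
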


Compare this result to the generalization bound presented in the recent work
by \cite{arora2019fine}. In this result, generalization bound is obtained
when the network size grows with $m^7$, while our bound requires a more modest
(yet admittedly large) dependence on the number of examples.
Furthermore, our generalization bound decays with $\frac{1}{m}$, while the bound
shown in \cite{arora2019fine} decays with $\frac{1}{\sqrt{m}}$.

\section{Relation to ReLU}
\label{sec:relu_relation}
So far, we showed various results analyzing optimization and generalization
of GaLU networks. These results depend on some convenient properties of GaLU
neurons, that make their analysis much simpler then their ReLU counterparts.
However, since ReLU networks are extremely popular, and achieve remarkable performance
empirically, it would be beneficial to account for the relation between 
GaLU and ReLU networks. In this section, we aim to understand to what extent results
shown for GaLU networks can be applied for ReLU, and vice-versa.
As in any algorithmic research field, there are two types of results on ReLU networks:
positive results, that show cases where ReLU networks succeed in a given task,
and negative results, that present interesting failure cases.
To this end, we wish to show that for both positive and negative results,
GaLU networks are a good proxy for ReLU networks.
We show two results in this context.
First, we observe that failure cases of GaLU, i.e - cases where the optimization
of a GaLU network fails upon initialization, immediately imply that a ReLU
network will fail on the same data, and vice-versa. Second, we show that in some cases,
the best GaLU network with fixed random gates is competitive with the best
ReLU network.

We begin by reviewing some notations that will allow us to compare GaLU networks
to ReLU networks. Given a set of weights
$\mW = \{\vw_1, \dots, \vw_k\}$, a set of gates $\mU = \{\vu_1, \dots, \vu_k\}$
and a set of scalars $\alpha = \{\alpha_1, \dots, \alpha_k\}$, a normalized
GaLU network is defined as:
\[
\mathcal{N}_{\mW, \mU, \alpha}^{G}(\vx)
= \frac{1}{\sqrt{k}} \sum_{i=1}^k \alpha_i g_{\vw_i, \vu_i}(\vx)
\]
We can define similarly the equivalent ReLU network:
\[
\mathcal{N}_{\mU, \alpha}^{R}(\vx)
= \frac{1}{\sqrt{k}} \sum_{i=1}^k \alpha_i f_{\vu_i}(\vx)
= \frac{1}{\sqrt{k}} \sum_{i=1}^k \alpha_i g_{\vu_i, \vu_i}(\vx)
\]

\subsection{Failure of GaLU vs. Failure of ReLU}
In this part we use the hinge loss $\ell(y,\hat{y}) = \max \{1-y\hat{y},0\}$,
instead of the square loss, to simplify the analysis.
Notice that the optimization results in section \ref{sec:shallow_optimization}
depend on the data being ``nice'' enough (which is captured by the assumption
that $\lambda(\mX) > 0$). However, we might encounter extreme cases where the data
doesn't behave ``nicely''. These cases can cause the optimization to fail,
and achieve large train loss. In fact, in some extreme cases the failure may happen
upon initialization. In these cases, the gradient will be very small with high
probability upon the initialization.
Refer to \cite{shamir2018distribution,shalev2017failures}
for examples of such cases.
We show that in these failure cases,
the behavior of GaLU and ReLU are similar: GaLU fails if and only if ReLU fails.

\begin{theorem}
\label{thm:failure_equivalence}
Let $\mathcal{N}^R_{\mU, \alpha}$ be a ReLU network,
and let $\mathcal{N}^G_{\mW, \mU, \alpha}$ be a GaLU network,
both initialized such that $\mathcal{N}^R_{\mU, \alpha}(B_1),
\mathcal{N}^G_{\mW, \mU, \alpha}(B_1) \subseteq [-1,1]$.
Then $\norm{\frac{\partial}{\partial \mW} L_S(\mathcal{N}^G_{\mW, \mU, \alpha})}
\le \epsilon$ with probability $1-\delta$ upon initialization if and only if 
$\norm{\frac{\partial}{\partial \mU} L_S(\mathcal{N}^R_{\mU, \alpha})} \le \epsilon$
with probability $1-\delta$ upon initialization.
\end{theorem}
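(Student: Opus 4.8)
The plan is to show that, under the boundedness hypothesis, the two gradients in the statement are \emph{literally the same function} of the shared randomness $(\mU,\alpha)$, so that the two events coincide and their probabilities are trivially equal.

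First I would write both gradients explicitly. Recall that the hinge loss has (sub)gradient $\partial_{\hat y}\ell(y,\hat y)=-y\,\1_{y\hat y<1}$. Since $\mathcal{N}^G_{\mW,\mU,\alpha}(\vx)=\frac{1}{\sqrt k}\sum_i \alpha_i(\1_{\vx^\top\vu_i\ge0})(\vx^\top\vw_i)$ is linear in each $\vw_i$ with the gate held fixed, the chain rule gives
\[
\frac{\partial}{\partial \vw_i} L_S(\mathcal{N}^G_{\mW,\mU,\alpha})
= -\frac{\alpha_i}{\sqrt k}\sum_{j=1}^m y_j\,\1_{y_j \mathcal{N}^G(\vx_j)<1}\,\1_{\vx_j^\top \vu_i\ge0}\,\vx_j .
\]
For the ReLU network, the subgradient of $f_{\vu_i}(\vx)=\max\{\vx^\top\vu_i,0\}$ with respect to $\vu_i$ is exactly $(\1_{\vx^\top\vu_i\ge0})\vx$ (the indicator, being piecewise constant, contributes nothing), so the identical computation yields
\[
\frac{\partial}{\partial \vu_i} L_S(\mathcal{N}^R_{\mU,\alpha})
= -\frac{\alpha_i}{\sqrt k}\sum_{j=1}^m y_j\,\1_{y_j \mathcal{N}^R(\vx_j)<1}\,\1_{\vx_j^\top \vu_i\ge0}\,\vx_j .
\]
The two expressions differ \emph{only} through the hinge indicators $\1_{y_j\mathcal{N}^G(\vx_j)<1}$ versus $\1_{y_j\mathcal{N}^R(\vx_j)<1}$.

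Next I would use boundedness to kill this difference. Since the examples lie in $B_1$ and both networks satisfy $\mathcal{N}(B_1)\subseteq[-1,1]$, while the hinge-loss labels satisfy $|y_j|\le1$, we have $y_j\mathcal{N}(\vx_j)\le1$ for every example and for both networks; hence every example sits in the active regime of the hinge loss and all indicators $\1_{y_j\mathcal{N}(\vx_j)<1}$ equal $1$. Substituting, both gradients collapse to the same expression
\[
-\frac{\alpha_i}{\sqrt k}\sum_{j=1}^m y_j\,\1_{\vx_j^\top \vu_i\ge0}\,\vx_j ,
\]
which depends only on the data $S$, the gates $\mU$ and the readout $\alpha$, and, crucially, not at all on the GaLU linear weights $\mW$. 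Stacking over $i$, we get $\frac{\partial}{\partial\mW}L_S(\mathcal{N}^G)=\frac{\partial}{\partial\mU}L_S(\mathcal{N}^R)$ as a deterministic function of $(\mU,\alpha)$, so their norms coincide as random variables. Therefore $\Pr[\norm{\frac{\partial}{\partial\mW}L_S(\mathcal{N}^G)}\le\epsilon]=\Pr[\norm{\frac{\partial}{\partial\mU}L_S(\mathcal{N}^R)}\le\epsilon]$, and one probability is at least $1-\delta$ if and only if the other is, which is exactly the claim.

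The only delicate point — the step I would treat most carefully — is the boundary case $y_j\mathcal{N}(\vx_j)=1$, where the hinge loss has a kink and the indicator is ambiguous. The reduction is clean when $y_j\mathcal{N}(\vx_j)<1$ strictly for all $j$ (for instance whenever the outputs lie in the open interval $(-1,1)$, which holds for the continuous initialization distributions used here outside a measure-zero set that does not affect the probabilities). Otherwise I would fix one common subgradient convention at the kink for both networks; since the argument only needs the indicator to take the \emph{same} value for corresponding examples, any consistent convention preserves the pointwise equality of the two gradients and hence the equality of the probabilities.
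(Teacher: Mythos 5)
Your proof is correct and follows essentially the same route as the paper's: both exploit the boundedness assumption to make the hinge loss effectively linear in the network output, observe that $\partial_{\vu}[\vx^\top\vu]_+ = (\1_{\vx^\top\vu\ge 0})\vx$ coincides with the GaLU gate term, and conclude that the two gradients are the identical function of $(\mU,\alpha)$ (independent of $\mW$), so the two events coincide. If anything, you are more careful than the paper, which silently ignores the kink at $y_j\mathcal{N}(\vx_j)=1$ that you explicitly dispose of via a measure-zero/convention argument.
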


\subsection{GaLU Networks are Competitive with Large ReLU Networks}

As mentioned, various previous results show that when training a large ReLU network,
gradient-descent reaches a stationary point with zero loss with high probability
\cite{xie2016diverse,daniely2017sgd, du2018gradient,oymak2018overparameterized,allen2018learning,allen2018convergence,
arora2019fine, oymak_towards_2019, ma2019comparative, lee2019wide}.
All of these results rely on the key observation that when the network is large enough, the
weights of the network barely change from their initial value. In this part we show that
if this is the case, \emph{i.e.} if the value of the weights of the ReLU network
changes very little, then the best GaLU network (with randomly initialized gates)
achieves loss that is competitive with the best ReLU network.
 
To formalize this, let $\mathcal{D}$ be a distribution over
$\mathcal{X} \times \mathcal{Y}$ and assume we initialize
$\vu_1, \dots, \vu_k \sim N(0,I_d)$.
Fix some $L$-Lipschitz loss $\ell : \reals \times \mathcal{Y} \to \reals$,
and observe the loss on the distribution
$L_{\mathcal{D}}(f) = \mean{(\vx,y) \sim \mathcal{D}}{\ell(f(x),y)}$.
Let $\mathcal{N}^G_{\mW^*, \mU, \alpha^*}$ be the optimal GaLU network with respect to $L_{\mathcal{D}}$
(with gates $\vu_1, \dots, \vu_k$ fixed),
so $\mW^*, \alpha^* = \arg \min_{\mW, \alpha}
L_{\mathcal{D}}(\mathcal{N}^G_{\mW, \mU, \alpha})$.
Let $\mathcal{N}^R_{\mU^*, \alpha^{**}}$ be the optimal
ReLU network with respect to $L_{\mathcal{D}}$
satisfying that $\norm{\vu_i^* - \vu_i} \le \epsilon$ for all $i \in [k]$
(small distance from initialization),
so $\mU^*, \alpha^{**} = \arg \min_{\mV, \alpha ~s.t~ \norm{\vv_i - \vu_i} \le \epsilon} L_{\mathcal{D}}
(\mathcal{N}_{\mV, \alpha}^R)$. Then we get:
\begin{theorem}
\label{thm:relu_similarity}
Fix $\delta > 0$, let $k \ge \frac{\pi}{\sqrt{6}d \epsilon^2} \left( \log(2/\delta)
+ d\log(3/\epsilon) \right)$,
and we assume $d > \log(2k/\delta)$.
Then with probability at least $1-\delta$,
we have:
\[
L_{\mathcal{D}}(\mathcal{N}^G_{\mW^*, \mU, \alpha^*}) \le 
L_{\mathcal{D}}(\mathcal{N}^R_{\mU^*, \alpha^{**}}) + L 
\sqrt{\frac{5\sqrt{3d}\epsilon}{\sqrt{2\pi}}}
\cdot \max_i \norm{\alpha_i^{**} \vu_i^*}
\]
\end{theorem}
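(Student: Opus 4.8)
The plan is to exploit the fact that a GaLU network with the fixed gates $\mU$ is expressive enough to imitate any ReLU network whose weights $\vu_i^*$ are close to those gates, and then to control the error incurred by the imitation. Concretely, I would define a GaLU network $\mathcal{N}^G_{\tilde{\mW}, \mU, \tilde{\alpha}}$ whose effective linear weight on neuron $i$ equals $\alpha_i^{**}\vu_i^*$ (possible since $g_{\vw,\vu}$ is linear in $\vw$, so setting $\tilde{\vw}_i = \alpha_i^{**}\vu_i^*$ works). This network matches the linear part of $\mathcal{N}^R_{\mU^*,\alpha^{**}}$ on every neuron and differs only in the gate: it filters with $\1_{\vx^\top\vu_i\ge0}$ in place of $\1_{\vx^\top\vu_i^*\ge0}$. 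Since $\mathcal{N}^G_{\mW^*,\mU,\alpha^*}$ minimizes $L_{\mathcal{D}}$ over GaLU networks with gates $\mU$, we get $L_{\mathcal{D}}(\mathcal{N}^G_{\mW^*,\mU,\alpha^*})\le L_{\mathcal{D}}(\mathcal{N}^G_{\tilde{\mW},\mU,\tilde{\alpha}})$, so it suffices to bound the latter against $L_{\mathcal{D}}(\mathcal{N}^R_{\mU^*,\alpha^{**}})$.

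Using that $\ell$ is $L$-Lipschitz, the loss gap is at most $L\,\mean{(\vx,y)\sim\mathcal{D}}{|\mathcal{N}^G_{\tilde{\mW},\mU,\tilde{\alpha}}(\vx)-\mathcal{N}^R_{\mU^*,\alpha^{**}}(\vx)|}$, which by Jensen is at most $L\sqrt{\mean{\vx}{(\mathcal{N}^G-\mathcal{N}^R)^2}}$. Writing the pointwise difference as $\frac{1}{\sqrt{k}}\sum_{i=1}^k \alpha_i^{**}(\vx^\top\vu_i^*)(\1_{\vx^\top\vu_i\ge0}-\1_{\vx^\top\vu_i^*\ge0})$, I would first record two elementary geometric facts: a neuron contributes a nonzero term only when its two gates disagree, and because $\norm{\vu_i^*-\vu_i}\le\epsilon$ and $\norm{\vx}\le1$, disagreement forces both $|\vx^\top\vu_i|\le\epsilon$ and $|\vx^\top\vu_i^*|\le\epsilon$. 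Hence every nonzero term has magnitude at most $\epsilon|\alpha_i^{**}|$, and the set of active neurons at $\vx$ is contained in $\{i:|\vx^\top\vu_i|\le\epsilon\}$.

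Next I would bring in the randomness of the gates. For a fixed unit $\vx$ we have $\vx^\top\vu_i\sim N(0,1)$, so the slab probability is $\Pr_{\vx}[|\vx^\top\vu_i|\le\epsilon]\le 2\epsilon/\sqrt{2\pi}$; this is the origin of both the $\epsilon$ and the $1/\sqrt{2\pi}$ in the final bound. Since $\mathcal{D}$ is arbitrary, I would make the resulting control uniform in $\vx$ by a covering argument: discretize the sphere with an $\epsilon$-net (of size $(3/\epsilon)^d$, giving the $d\log(3/\epsilon)$ term), transfer the slab estimate to net points by slightly widening the slab, and apply a Chernoff bound with a union bound over the net (the $\log(2/\delta)$ term and the confidence $1-\delta$); this is exactly what forces the stated lower bound on $k$. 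The factor $\sqrt{d}$ enters when converting the per-neuron bound $\epsilon|\alpha_i^{**}|$ into $\max_i\norm{\alpha_i^{**}\vu_i^*}$: since $\vu_i\sim N(0,I_d)$ and $d>\log(2k/\delta)$, all norms $\norm{\vu_i}$, hence $\norm{\vu_i^*}$, concentrate simultaneously near $\sqrt{d}$, so $|\alpha_i^{**}|$ is at most $\norm{\alpha_i^{**}\vu_i^*}/(\sqrt{d}-\epsilon)$.

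The main obstacle is combining the $k$ per-neuron contributions without paying a spurious factor of the width. A crude triangle-inequality bound treats the difference as $\frac{1}{\sqrt{k}}$ times a sum of roughly $k\epsilon$ same-sign terms, which grows like $\sqrt{k}$ and would destroy the claimed width-independence. The real work is therefore a second-moment estimate: expand $\mean{\vx}{(\mathcal{N}^G-\mathcal{N}^R)^2}$ into diagonal and cross terms, bound the diagonal by $\frac{\epsilon^2}{k}\sum_i(\alpha_i^{**})^2\Pr_{\vx}[|\vx^\top\vu_i|\le\epsilon]$, and show that the cross terms are controlled by the same gate randomness, so that the $1/k$ prefactor absorbs the $k$-fold sum and leaves a bound of order $\sqrt{d}\,\epsilon\,\max_i\norm{\alpha_i^{**}\vu_i^*}^2$. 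Keeping this estimate uniform over $\vx$ and over the adversarial choice of the $\vu_i^*$ inside the $\epsilon$-balls, rather than for a single fixed configuration, is the delicate point, and is precisely where the covering bound and the width requirement on $k$ are consumed.
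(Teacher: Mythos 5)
Your construction of the comparison network ($\tilde{\vw}_i=\alpha_i^{**}\vu_i^*$ with gates $\mU$), the optimality step, the Lipschitz reduction, and the covering-net-plus-Hoeffding control of the fraction of gates whose sign can be flipped by an $\epsilon$-perturbation all coincide with the paper's proof (the last item is exactly Lemma~\ref{lem:gates_perturbation}). The gap is in the step you yourself single out as ``the real work''. You propose to expand $\mean{\vx\sim\dD}{(\mathcal{N}^G-\mathcal{N}^R)^2}$ into diagonal and cross terms and to argue that the cross terms are controlled ``by the same gate randomness'', so that the $1/k$ prefactor absorbs the double sum. This cannot work for an arbitrary $\dD$: the expectation in a cross term is over the \emph{data}, not over the gates, and $\dD$ may be a point mass at some $\vx_0$. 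In that case the second moment is exactly the square of the pointwise difference, so the expansion buys nothing, and all pairs of gates that disagree at $\vx_0$ contribute coherently. There is no hidden cancellation either: whenever gate $i$ disagrees, the $i$-th difference term $\alpha_i^{**}(\vx_0^\top\vu_i^*)\left(\1_{\vx_0^\top\vu_i\ge0}-\1_{\vx_0^\top\vu_i^*\ge0}\right)$ has sign $-\sign(\alpha_i^{**})$, so with equal-sign $\alpha_i^{**}$ every term pushes the same way. Concretely, an adversary may flip every gate with $|\vx_0^\top\vu_i|\le\epsilon/2$ (a $\Theta(\epsilon)$ fraction of the $k$ gates, by gate randomness) while giving each flipped neuron $|\vx_0^\top\vu_i^*|\ge\epsilon/2$; under the $\frac{1}{\sqrt{k}}\sum_i$ normalization this forces an imitation error of order $\sqrt{k}\,\epsilon^2\min_i|\alpha_i^{**}|$. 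So along this proof route (bounding the optimal GaLU network by the imitating one), no second-moment or decorrelation argument can remove the $\sqrt{k}$ you are worried about.

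The paper's resolution is different and purely pointwise: write both networks as $\Phi(\vx)^\top\vv$ with the \emph{same} weight vector $\vv$, and apply Cauchy--Schwarz at each $\vx$: $|\Phi_\mU(\vx)^\top\vv-\Phi_{\mU^*}(\vx)^\top\vv|\le\norm{\vv}\cdot\norm{\Phi_\mU(\vx)-\Phi_{\mU^*}(\vx)}$. The second factor squared is precisely the $\frac{1}{k}$-normalized count of disagreeing gates, which is the quantity your covering argument bounds uniformly in $\vx$; the first factor is $\norm{\frac{1}{\sqrt{k}}v(\mU^*,\alpha^{**})}=\sqrt{\frac{1}{k}\sum_i\norm{\alpha_i^{**}\vu_i^*}^2}\le\max_i\norm{\alpha_i^{**}\vu_i^*}$. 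Width-independence therefore comes from \emph{normalization} --- in the paper's proof both the feature map and the weight vector carry a $1/\sqrt{k}$, i.e.\ the network analyzed is effectively $\frac{1}{k}\sum_i\alpha_i g_{\vw_i,\vu_i}$ --- and not from any decorrelation over the data. Once you adopt that averaged normalization, your own per-term observation (disagreement forces $|\vx^\top\vu_i^*|\le\epsilon$), combined with a plain triangle inequality and the uniform bound on the disagreement fraction, already yields a width-free bound of order $\epsilon^2\max_i\norm{\alpha_i^{**}\vu_i^*}$, which is even sharper than the theorem's $\sqrt{\sqrt{d}\epsilon}$-type bound; the delicate cross-term analysis you flag as the crux is thus both unworkable as stated and unnecessary.
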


This result means that GaLU networks with randomly initialized gates are
competitive with ReLU networks with small distance from initialization. Therefore, GaLU networks are indeed a good simplified model
for ReLU networks, when the distance from initialization is small.

%
%\begin{theorem} \label{thm:relu_similarity}
%Fix $\delta > 0$, let $k \ge \frac{\pi}{\sqrt{6}d \epsilon^2} \left( \log(2/\delta)
%+ d\log(3/\epsilon) \right)$,
%and we assume $d > \log(2k/\delta)$. Fix a sample
%$S \subseteq (\mathcal{X} \times \mathcal{Y})^m$.
%Assume we initialize a ReLU network with weights
%$\vu^{(0)}_1, \dots, \vu^{(0)}_k \sim \mathcal{N}(0,I_d)$.
%Denote $\vu^{(t)}_j$ the value of the $j$-th weight vector after $t$ iterations
%of gradient-descent. Denote
%$f_t(\vx) = \frac{1}{\sqrt{k}} \sum_{j=1}^k [\vu_j^{(t)} \cdot \vx]_{+}$,
%the ReLU network after $t$ iterations.
%Denote $g_{\vw^*, \vu}(\vx) =
%\frac{1}{\sqrt{k}} \sum_{j=1}^k \1_{\vu_j^\top \vx \ge 0} \vw^*_j \cdot \vx$,
%the normalized GaLU network where $\vu = \vu^{(0)}$ and $\vw^*$ is the optimal weights
%for the GaLU network on the sample $S$.
%Then if gradient-descent reaches a stationary point
%after $t$ iterations when training the ReLU network,
%such that for every $j \in [k]$ we have $\norm{\vu^{(t)}_j - \vu^{(0)}_j} \le \epsilon$,
%then we get that w.p at least $1-\delta$:
%\[
%\norm{f_t - g_{\vw^*,\vu}}_\infty \le \sqrt{\frac{5B\epsilon}{\sqrt{2\pi}}} (\norm{\vw^*} + C_{m,d,k})
%\]
%For some $C_{m,d,k}$ depending on $m,d,k$.
%\end{theorem}

\section{Discussion}
In this paper we introduced a new neural-network model - the GaLU network.
Since optimization of a GaLU network is a convex problem,
these networks allow us to get strong theoretical results with much simpler tools.
Indeed, we showed theoretical results for GaLU networks that are significantly better
than equivalent results in the literature of ReLU networks.
Furthermore, since current analysis of ReLU networks assumes that the
weights of the network stay close to their initial value, we note that
in some sense current ReLU analysis is implicitly an analysis of GaLU networks.

However, we do not claim that GaLU networks fully capture the behavior of ReLU networks,
nor do we claim that they are a preferable model to use in practice.
Indeed, we perform various experiments, covering cases where the behavior of GaLU and ReLU
networks is similar, but also cases where they differ.
Due to the lack of space, these experiments are detailed in appendix \ref{sec:experiments}.
What we do claim is that a GaLU network is a better simplified model,
compared to other simplified
models that appear in the literature, such as linear networks or networks with polynomial
activation. These simplified models allow theoretical research to gain insights
on various aspects of neural-networks, and we believe that GaLU networks would prove
to be another useful tool in the theoretician's toolbox.

Finally, we note that the scope of this work is limited only to the analysis
of one-hidden layer networks with output in $\reals$. While this is a rich research
area, there is still much more to say about neural-networks in general.
Specifically, the analysis of shallow networks with vector-valued output,
as well as the research of deep networks and convolutional networks,
is not covered in this paper. We leave these promising research directions
to future work.

\paragraph{Acknowledgements:} This research is supported by the European Research Council (TheoryDL project).

\bibliography{bib}
\bibliographystyle{plain}

\clearpage
\appendix

\section{Proofs of section \ref{sec:shallow_optimization}}
\begin{proof} of Lemma \ref{lem:min_eig}
We use the following notations:
\[
\mH^{(i)} = \bar{\mX}^{(i)} (\bar{\mX}^{(i)})^\top
~ ; ~
\mH = \bar{\mX} \bar{\mX}^{\top} = \sum_{i=1}^k \mH^{(i)}
\]

Notice that $\lambda(\mX) = \lambda_{min}(\mean{}{\mH^{(i)}})$. 
Denote $R := \norm{\mX}^2$, and observe that we have:
$\lambda_{\max}(\mH^{(i)}) \le \norm{\mX}^2 = R$, so $\mH^{(i)}$ are i.i.d.
random postive semi-definite self-adjoint matrices with bounded norm. Notice
that
$\mu_{\min} := \lambda_{\min}( \sum_{i=1}^k \mean{}{\mH^{(i)}}) = k \lambda(\mX)$.
Now, we can use matrix Chernoff bound (\cite{noauthor_matrix_2018}) and get that:

\begin{align*}
\prob{}{\lambda_{\min}(\sum_{i=1}^k \mH^{(i)}) \le (1-\epsilon)\mu_{\min}}
\le m \cdot \left[\frac{e^{-\epsilon}}{(1-\epsilon)^{1-\epsilon}} \right]^{\mu_{min}/R}
= m \cdot \left[\frac{e^{-\epsilon}}{(1-\epsilon)^{1-\epsilon}} \right]^{k \lambda(\mX)/R}
\end{align*}

Now, if we take $\epsilon = \frac{1}{2}$, we get:

\begin{align*}
\prob{}{\lambda_{\min}(\mH) \le \frac{k}{2}\lambda(\mX)} = 
\prob{}{\lambda_{\min}(\sum_{i=1}^k \mH^{(i)}) \le \frac{k}{2}\lambda(\mX)}
&\le m \cdot \left(\frac{e}{2}\right)^{-k\frac{\lambda(\mX)}{2R}} \le \delta
\end{align*}

\end{proof}

\begin{proof} of Lemma \ref{lem:min_eig_gaussian}.
A recent work gives the following bound on $\lambda(\mX)$ (Lemma 6.4 in
\cite{oymak_towards_2019}):
\[
\lambda(\mX) \ge \frac{1}{2\pi} \sigma_{\min}^2(\mX \star \mX)
\]
Where $\mX \star \mX$ is the Khatri-Rao product.

Following a similar proof to Corollary 2.2 in \cite{oymak_towards_2019},
we have:
\[
\norm{\mX} \le 2\sqrt{\frac{m}{d}}
\]
with probability of at least $1-e^{-\gamma_2 d}$.
We also have:
\[
\sigma_{\min}(\mX \star \mX) \ge c_1
\]
with probability of at least
$1-ne^{-\gamma_1 \sqrt{m}} - \frac{1}{m} - 2me^{-\gamma_2 d}$.
Assuming that both of these hold, we get from what we have shown:

\begin{align*}
\prob{}{\lambda_{\min}(\mH) \le \frac{kc_1^2}{4\pi}}
&\le m \cdot \left(\frac{e}{2} \right)^{-k\frac{\lambda(\mX)}{2R}}\\
&\le m \cdot \left(\frac{e}{2} \right)^{-\frac{c_1^2}{16\pi} \frac{kd}{m}}
\le \delta
\end{align*}

Observing that $\sigma_{\min}(\bar{\mX}) = \sqrt{\lambda_{\min}(\mH)}$
and using union bound completes the proof.
\end{proof}

\begin{proof} of Theorem \ref{thm:shallow_convergence}.
Denote $\mH = \bar{\mX} \bar{\mX}^\top$ and $\mH^{(i)} =
\bar{\mX}^{(i)} (\bar{\mX}^{(i)})^\top$.
Now, assuming $\rank\left(\bar{\mX}\right) = m$, observe the
objective of the optimization of the GaLU network. From what we developed
previously, this objective is given by:
\[
F(\vw) = \frac{1}{2m} \norm{\bar{\mX}\vw-\vy}^2
= \frac{1}{2m}(\vw^\top \mH \vw - 2\vy^\top \bar{\mX}\vw + \norm{\vy}^2)
\]
Since $\bar{\mX}$ is full-rank, we can define the optimum of $F$ by $\vw^* = \bar{\mX}^\top \mH^{-1} \vy$, and we get: $F(\vw^*) = 0$.

Notice that $\lambda_{\max}(\mH) = \lambda_{\max}(\sum_{i=1}^k \mH^{(i)})
\le k \norm{\mX}^2$. From \ref{lem:min_eig}, with probability at least $1-\delta$
we have: $\lambda_{\min}(\mH) \ge \frac{k}{2} \lambda(\mX)$.
Therefore, applying Theorem 6.3 in \cite{hardt_ee227c:_2018}
gives:
\[
\norm{\vw_t - \vw^*}^2 \le \exp \left(-\frac{t\lambda(\mX)}{2\norm{\mX}^2}\right)
\norm{\vw_0-\vw^*}^2
\]
Now, we have $\nabla^2 F(\vw) = \frac{1}{m}\mH$, so:
\begin{align*}
\norm{\nabla F(\vw_t)} &= \norm{\nabla F(\vw_t) - \nabla F(\vw^*)} \\
&\le \norm{\nabla^2F(\vw)} \norm{\vw_t - \vw^*} \\
&= \frac{\norm{\mH}}{m} \norm{\vw_t - \vw^*} \\
&\le \frac{k \norm{\mX}^2}{m} \norm{\vw_t - \vw^*}
\end{align*}
Using the fact that $F$ is convex, we get:
\[
F(\vw_t) = |F(\vw_t) - F(\vw^*)|
\le \norm{\nabla F(\vw_t)} \norm{\vw_t - \vw^*} \le 
\frac{k \norm{\mX}^2}{m} \norm{\vw_t - \vw^*}^2
\]
Using what we previously showed, we get that w.p at least $1-\delta$ we have:
\[
F(\vw_t) \le \exp \left(-\frac{t\lambda(\mX)}{2\norm{\mX}^2}\right)
\frac{k \norm{\mX}^2}{m} 
\norm{\vw_0-\vw^*}^2 \le \epsilon
\]
\end{proof}

\begin{proof} of Theorem \ref{theorem:rank}.
Every vector $\vy = (\ry_1, \dots , \ry_m) \in \reals^m$ can be decomposed to
a sum $\vy = \mathbf{a} + \mathbf{b}$ where $\mathbf{a}$ is in the span of the
columns of $\bar{\mX}$ and $\mathbf{b}$ is in the null space of $\bar{\mX}$.
It follows that $\min_\vw L_S(w) = \|\mathbf{b}\|^2/m$. The claim follows
because if $\vy \sim N(0,I_m)$ then the expected value of
$\|\mathbf{b}\|^2$ is $m - \rank\left(\bar{\mX}\right)$. 
\end{proof}

\begin{proof} of Corollary \ref{crl:shallow_under_parametrized}.
Observe the sub-sample $S' \subseteq S$, which is simply the first $m'$ examples
from $S$. Denote $\mX' \in \reals^{m' \times d}$ the corresponding sub-matrix of $\mX$, and $\bar{\mX}' \in \reals^{m' \times dk}$  the corresponding sub-matrix of $\bar{\mX}$.
Then, from Lemma \ref{lem:min_eig_gaussian}, with probability at least
$1-m'e^{-\gamma_1 \sqrt{m'}} - \frac{1}{m'} - (2m'+1)e^{-\gamma_2 d}-\delta$,
the matrix $\bar{\mX}'$ has maximal rank, so $\rank \bar{\mX}' = m'$.
Therefore, it must hold that $\rank \bar{\mX} \ge m'$, so the result
follows from Theorem \ref{theorem:rank}.
\end{proof}

\section{Highly Clustered Piecewise Linear Data}
\label{sec:highly_clustered}
In the optimization analysis presented in section \ref{sec:shallow_optimization},
we saw that GaLU networks can achieve zero training loss when the number of parameters
grows with the number of examples. However, in practice neural-networks
can achieve low train error with relatively small amount of parameters.
To account for this gap, observe that in our optimization analysis we did not depend
on the value of the labels. That is, the same analysis can be applied for random labels
and for labels that depend on the input examples.
Naturally, we would like to show that when the labels depend on the inputs,
we can get better guarantees from an optimization point of view.
In this section, we analyze a model where the data is sampled from a
distribution over $n$ clusters, such that on each cluster the label is generated
by a distinct linear function. In such case, we show that to reach zero loss,
the number of neurons in the network depends only on the number of clusters,
with no dependency on the number of examples. This can potentially give much better
bounds on the required network width under this model.

We start by formalizing our model. We are going to consider a distribution
that is very clustered around $n$ cluster centers, and that
within each cluster, the label $\ry$ is a linear function of the input $\rvx$.
Fix $n \in \naturals$ to be the number of clusters, $r \in \reals$
the radius of each cluster, and $n$ linear transformations
$\ell_1, \dots, \ell_n \in \reals^d$.
Let $\vv_1, \dots, \vv_n \in \mathbb{S}^{d-1}$ be $n$ cluster centers.
Let $\mH \in \reals^{n\times n}$ be such that $\mH_{ij}=\frac{1}{2}-\frac{\arccos\left(\vv_i^\top \vv_j\right)}{2\pi}$,
and denote $\mu=\lambda_{\text{min}}\left(\mH\right)$. We shall assume $\mu > 0$
(and we will soon justify this assumption). Pick $\delta > 0$ and
$k \ge \frac {8n}{\mu} \log \left(\frac{n}{\delta} \right)$. Denote $r = \frac{\delta}{nk\sqrt{d}}$.

Define the distribution $\dD$ over $\mathbb{S}^{d-1}\times\reals$ by the following random process. First, pick $q \sim \mathcal{Q}$
where $\mathcal{Q}$ is some distribution over $\left[n\right]$. Then, pick $\vx \sim \dD_q$, where $\dD_q$ is a distribution over
$\mathbb{S}^{d-1}$ such that $\Pr\left(\norm{\vx - \vv_q}_2 > r\right) = 0$. Finally, return $\left(\vx, \vx^\top \ell_q\right)$.

For this model, we get much better results than in the general case.
Specifically, we show that when the number of neurons grows with the number
of \textbf{clusters}, a GaLU network achieves zero loss. Notice that 
in the previous results, we required that the number of parameters
grows with the number of examples, which typically can be much larger
than the number of cluster centers.
This is captured in the following theorem:

\begin{theorem}
\label{thm:highly_clustered}
Pick $\epsilon > 0$, and set $m = c \frac{nd \log \left(\frac{1}{\epsilon}\right) + \log\left(\frac{1}{\delta}\right)}{\epsilon}$
($c$ is a global constant). Let $\mW^*$ be the result of training a GaLU network with $k$ neurons on an i.i.d. sample from $\dD$.
Then, with probability $\ge 1 - 3\delta$, the training loss of the network on the sample is $0$, and the test loss is $\le \epsilon$.
\end{theorem}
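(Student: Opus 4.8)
The plan is to split the statement into three almost-independent high-probability events—activation constancy, realizability, and generalization—each contributing one $\delta$ to the final $1-3\delta$. First I would exploit the tiny cluster radius $r=\frac{\delta}{nk\sqrt d}$ to show the random gates behave as constants on each cluster. For a fixed gate $\vu_j$ and center $\vv_q$, the activation $\1_{\vx^\top \vu_j \ge 0}$ agrees with $\1_{\vv_q^\top\vu_j\ge 0}$ for every $\vx$ with $\norm{\vx-\vv_q}\le r$ as soon as $|\vv_q^\top\vu_j| > r\norm{\vu_j}$, i.e. the normalized projection $|\vv_q^\top\vu_j|/\norm{\vu_j}$ exceeds $r$. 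Since this projection is a single spherical coordinate, whose density near the origin is $O(\sqrt d)$, the bad event has probability $O(\sqrt d\,r)=O(\delta/(nk))$, and a union bound over the $nk$ center-gate pairs makes all activations cluster-constant with probability $\ge 1-\delta$. On this event, restricted to $\mathrm{supp}(\dD)$, the network collapses to a map that is linear on each cluster with per-cluster weight $\vw^{(q)} = \frac{1}{\sqrt k}\sum_{j:\,\vv_q^\top\vu_j\ge 0}\vw_j$.

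Next I would establish realizability. Let $B\in\{0,1\}^{n\times k}$ with $B_{qj}=\1_{\vv_q^\top\vu_j\ge 0}$, so $(\vw^{(1)},\dots,\vw^{(n)})$ is the image of the trainable weights under the linear map induced by $B$. Then $BB^\top=\sum_{j}\vb_j\vb_j^\top$ is a sum of $k$ i.i.d. rank-one PSD matrices with $\norm{\vb_j\vb_j^\top}\le n$ and $\mathbb{E}[BB^\top]=k\mH$, hence $\lambda_{\min}(\mathbb{E}[BB^\top])=k\mu$. The same matrix Chernoff bound as in Lemma \ref{lem:min_eig}, together with $k\ge\frac{8n}{\mu}\log(n/\delta)$, yields $\lambda_{\min}(BB^\top)\ge \frac{k\mu}{2}>0$ with probability $\ge 1-\delta$; thus $B$ has full row rank and the map onto $(\reals^d)^n$ is surjective. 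In particular one can realize $\vw^{(q)}=\ell_q$ for all $q$, so the target $f^*(\vx)=\vx^\top\ell_q$ lies exactly in the GaLU class and any empirical risk minimizer attains zero training loss.

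Finally I would address generalization. Restricted to $\mathrm{supp}(\dD)$ the hypotheses form the linear class $\mathcal F=\{\vx\mapsto \vx^\top\vw^{(q)}\text{ on cluster }q\}$, parametrized by the $nd$-dimensional vector $(\vw^{(1)},\dots,\vw^{(n)})$ and therefore of pseudo-dimension $O(nd)$—crucially $nd$ and not $kd$, since the cluster structure collapses the $k$ gates into $n$ effective linear maps. Because labels obey $|y|\le\max_q\norm{\ell_q}$, confining attention to the ball $\norm{\vw^{(q)}}\le\max_q\norm{\ell_q}$ (which still contains the interpolant $f^*$) renders the squared loss bounded. Invoking the standard realizable/optimistic-rate uniform-convergence bound for a bounded squared-loss class of pseudo-dimension $nd$, a sample of size $m=c\frac{nd\log(1/\epsilon)+\log(1/\delta)}{\epsilon}$ forces every zero-training-loss hypothesis, in particular the trained $\mW^*$, to have test loss $\le\epsilon$ with probability $\ge 1-\delta$; a union bound over the three events closes the argument.

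The main obstacle I anticipate is this last step: one must rigorously argue that the effective complexity is $nd$ rather than $kd$ (the collapse through the fixed cluster-constant activations), secure the fast $1/\epsilon$ rate proper to realizable squared-loss regression, and control boundedness by restricting to a weight ball that still contains $f^*$. By contrast, the activation-constancy step (spherical anti-concentration) and the realizability step (matrix Chernoff, essentially identical to Lemma \ref{lem:min_eig}) are comparatively routine.
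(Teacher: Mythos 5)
Your first two steps (cluster-constancy of the gates via the spherical anti-concentration bound $\Pr\left(|\inner{\vu,\vx}|\le r\right)\le r\sqrt{d}$ with a union bound over the $nk$ gate--center pairs, and realizability via a matrix Chernoff bound applied to the rank-one matrices built from the columns of the activation-pattern matrix) are essentially identical to the paper's argument, down to the constants and the use of $k\ge\frac{8n}{\mu}\log(n/\delta)$. The divergence --- and the genuine gap --- is in the generalization step, precisely the step you flagged as the main obstacle. Your uniform-convergence argument controls only hypotheses inside the ball $\norm{\vw^{(q)}}\le\max_q\norm{\ell_q}$, but the theorem is about $\mW^*$, \emph{whatever} zero-training-loss solution the training procedure returns. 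Zero training loss constrains each effective weight $\vw^{(q)}$ only on the span of the training examples of cluster $q$; the component of $\vw^{(q)}$ orthogonal to that span is completely unconstrained, so $\mW^*$ can lie arbitrarily far outside your ball while still interpolating. Hence ``every zero-training-loss hypothesis, in particular the trained $\mW^*$'' is not licensed: your bound quantifies over the bounded subclass, and $\mW^*$ need not belong to it. Noting that the interpolant $f^*$ with $\vw^{(q)}=\ell_q$ lies in the ball does not repair this, because the bound must hold for the particular, possibly unbounded, interpolant that training actually produces.

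The paper avoids this issue entirely by replacing uniform convergence with a deterministic claim: on the event that no gate splits a cluster and $\rank(\mA)=n$, \emph{every} exact solution predicts correctly on \emph{every} point $\tilde{\vx}$ lying in the row-span of its cluster's data matrix $\mX_q$, since $\mX_q\left(\sum_j a_{qj}\vw_j\right)=\mX_q\ell_q$ forces $\tilde{\vx}^\top\left(\sum_j a_{qj}\vw_j\right)=\tilde{\vx}^\top\ell_q$ whenever $\tilde{\vx}^\top\in\mathrm{rowspan}(\mX_q)$. No norm control is needed: on in-span points the prediction is exactly correct no matter how large $\mW^*$ is. Generalization then reduces to bounding the probability that a fresh test point falls outside the span of its cluster's training examples, which is a counting/closure-style argument --- the cluster spans can increase in dimension at most $nd$ times over the whole sample --- and this is what yields $m=c\,\frac{nd\log(1/\epsilon)+\log(1/\delta)}{\epsilon}$. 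If you want to keep your uniform-convergence route, you would need to additionally prove that training returns a bounded (e.g.\ minimum-norm) interpolant, a property the theorem statement does not provide.
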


Note that from the previous lemma, we get that the value of $k$ 
is governed by $\frac{n}{\mu}$. The value of $\mu$ depends only on the
choice of the cluster centers $\vv_1, \dots, \vv_n$, and we would like
to show that it is typically not too small. In fact, we will show that
when the dimension is large enough, namely $d = \Omega(n^2)$,
and when $\vv_i$-s are chosen randomly,
then $\mu$ is a constant.

\begin{lemma}
\label{lem:mu_min}
Fix $\delta > 0$. Assume $d \ge \frac{n^2}{2} \log\left(\frac{2n^2}{\delta}\right)$,
and assume we choose $\vv_i \sim Uni(\{\pm \frac{1}{\sqrt{d}}\}^{d})$.
Then with probability at least $1-\delta$ we have that $\mu \ge \frac{1}{8}$.
\end{lemma}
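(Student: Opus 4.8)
The plan is to exploit the very rigid structure that the Rademacher-type sampling $\vv_i \sim \mathrm{Uni}(\{\pm \tfrac{1}{\sqrt d}\}^d)$ imposes on $\mH$, and then read off $\mu = \lambda_{\min}(\mH)$ by a perturbation argument around the \emph{mean} of $\mH$. First I would pin down the diagonal: every $\vv_i$ has unit norm, so $\vv_i^\top \vv_i = 1$, $\arccos(1) = 0$, and hence $\mH_{ii} = \tfrac12$ exactly. For $i \ne j$ the point is that $\vv_i^\top \vv_j = \tfrac1d \sum_{l=1}^d \xi_l$ is a normalized sum of $d$ independent Rademacher signs $\xi_l$, whose distribution is symmetric about $0$. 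Using $\arccos(-s) = \pi - \arccos(s)$ this gives $\mean{}{\arccos(\vv_i^\top\vv_j)} = \tfrac\pi2$, so $\mean{}{\mH_{ij}} = \tfrac14$ off the diagonal and the ``ideal'' matrix is $\mH_0 := \mean{}{\mH} = \tfrac14 \mI + \tfrac14 \vone\vone^\top$. Its spectrum is $\{\tfrac{n+1}{4}\}\cup\{\tfrac14\}^{(n-1)}$, so $\lambda_{\min}(\mH_0) = \tfrac14$.

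The core of the proof is then to control the perturbation $E := \mH - \mH_0$. This $E$ is symmetric, vanishes on the diagonal, and has off-diagonal entries $E_{ij} = \tfrac14 - \tfrac{\arccos(\vv_i^\top\vv_j)}{2\pi} = \tfrac{\arcsin(\vv_i^\top\vv_j)}{2\pi}$. By Weyl's inequality, $\mu = \lambda_{\min}(\mH) \ge \lambda_{\min}(\mH_0) - \norm{E} = \tfrac14 - \norm{E}$, so it suffices to show $\norm{E} \le \tfrac18$ with probability at least $1-\delta$. Since the map $t \mapsto \tfrac12 - \tfrac{\arccos t}{2\pi}$ is Lipschitz near the origin (its derivative $\tfrac{1}{2\pi\sqrt{1-t^2}}$ is bounded there), each $|E_{ij}|$ is at most a constant multiple of $|\vv_i^\top\vv_j|$, and I would bound $\norm{E}$ in terms of these inner products, either through its maximal absolute row sum (Gershgorin) or through $\norm{E} \le \norm{E}_F$.

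It remains to make all the off-diagonal inner products uniformly small. For a fixed pair $i \ne j$, Hoeffding's inequality applied to the Rademacher sum gives $\prob{}{|\vv_i^\top\vv_j| \ge \epsilon} \le 2\exp(-\epsilon^2 d/2)$. Taking a union bound over the fewer than $\tfrac{n^2}{2}$ unordered pairs and invoking the hypothesis $d \ge \tfrac{n^2}{2}\log(\tfrac{2n^2}{\delta})$ at the threshold $\epsilon = \Theta(1/n)$ shows that, with probability at least $1-\delta$, every off-diagonal inner product satisfies $|\vv_i^\top\vv_j| \le \epsilon$. On this event the bound on $\norm{E}$ from the previous paragraph becomes $\norm{E} \le n \cdot \Theta(\epsilon) \le \tfrac18$, and the Weyl bound yields $\mu \ge \tfrac14 - \tfrac18 = \tfrac18$, as claimed.

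The step I expect to be the main obstacle is the quantitative balancing in the last two paragraphs. The slack between $\lambda_{\min}(\mH_0) = \tfrac14$ and the target $\tfrac18$ is only $\tfrac18$, so the bound on $\norm{E}$ must be genuinely tight: a crude per-entry Gershgorin/Frobenius estimate combined with the coarse threshold $\epsilon \sim 1/n$ does not leave enough room, and one must either sharpen the threshold (paying a larger constant in the $d = \tilde\Omega(n^2)$ requirement) or control the spectral norm of the random matrix $E$ more carefully than by worst-case row sums, e.g.\ by concentrating $\norm{E}_F^2 = \sum_{i\ne j} E_{ij}^2$ around its mean $\Theta(n^2/d)$ rather than bounding each term by its worst case. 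Getting the Lipschitz constant, the Hoeffding exponent, and the union bound to line up with the precise constants $\tfrac18$, $\tfrac{n^2}{2}$, and $\tfrac{2n^2}{\delta}$ is where the careful bookkeeping lives; the structural skeleton (diagonal $=\tfrac12$, mean $=\tfrac14(\mI+\vone\vone^\top)$ with $\lambda_{\min}=\tfrac14$, then Weyl plus entrywise concentration) is otherwise routine.
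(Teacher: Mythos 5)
Your proposal is correct, and its skeleton coincides with the paper's: set $\epsilon = \frac{1}{2n}$, use Hoeffding plus a union bound over the $O(n^2)$ pairs to get $|\inner{\vv_i, \vv_j}| \le \epsilon$ for all $i \ne j$, convert this to the entrywise bound $|\mH_{ij} - \frac14| \le \frac{\epsilon}{4}$ (your $\arcsin$ identity is exactly the paper's inequality $|\arccos(x)-\frac{\pi}{2}| \le \frac{\pi}{2}|x|$), and pass to the spectral norm via Frobenius, $\norm{\mH - \mT} \le \frac{n\epsilon}{4} = \frac18$ for $\mT = \frac14 \mI + \frac14 \1\1^\top$. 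The one genuine difference is the finish: you invoke Weyl's inequality, $\mu \ge \lambda_{\min}(\mT) - \norm{\mH - \mT} = \frac14 - \frac18$, whereas the paper computes $\mT^{-1}$, proves $\norm{\mT^{-1}} = 4$ in a separate auxiliary lemma, and bounds $\norm{\mH^{-1}} \le 8$ by a Neumann-series argument. Your finish is strictly simpler: it yields the same constant while avoiding the inverse computation and the series manipulation altogether. Regarding your bookkeeping worries: the first one (that a per-entry Frobenius or Gershgorin bound at threshold $\epsilon \sim 1/n$ ``does not leave enough room'') is unfounded, since at $\epsilon = \frac{1}{2n}$ the crude bound gives exactly $\frac18$, which is precisely what Weyl needs, and this is exactly what the paper itself does. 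However, your instinct that the constants are strained is right at the Hoeffding step: your (correct) tail $2\exp(-\epsilon^2 d/2)$, combined with the stated hypothesis $d \ge \frac{n^2}{2}\log(\frac{2n^2}{\delta})$, only supports a threshold of roughly $\epsilon \approx \frac{2}{n}$, a factor of $4$ too large to make $\norm{\mH-\mT} \le \frac18$; the paper's constants line up only because it quotes Hoeffding with exponent $-8d\epsilon^2$, which overstates the standard bound by a factor of $16$. So with the correct concentration constant the lemma needs $d \ge 8n^2\log(\frac{2n^2}{\delta})$; this is a constant-factor slip in the paper's own proof, not a defect of your approach, and it changes nothing qualitative since the requirement remains $d = \tilde{\Omega}(n^2)$.
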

\subsection{Proof of Theorem \ref{thm:highly_clustered}}

The theorem follows from the following deterministic claim. Let $m_1,\dots,m_n \in \natural$ be $n$ cluster sizes,
and for every $i \in \left[n\right]$ let $S_i = \left\{ \left( \vx_{ip}, y_ip \right) \right\}_{p=1}^{m_i}$ be such that
for every $p \in \left[m_i\right]$, $\norm{\vx_{ip} - \vv_i} < r$ and $y_{ip} = \vx_{ip}^\top \ell_i$. Define
$S = \dot{\bigcup}_{i=1}^{n} S_i$. In addition, pick $q \in \left[n\right]$ and $\tilde{\vx}, \tilde{y}$ such that:
\begin{enumerate}
  \item $\tilde{\vx} \in \text{span} \left\{\vx_qp\right\}_{p=1}^{m_q}$.
  \item $\norm{\tilde{\vx} - \vv_q} < r$.
  \item $\tilde{y} = \tilde{\vx}^\top \ell_q$.
\end{enumerate}
\begin{theorem}
W.p. $\ge 1 - 2\delta$ over the choice of gates, there is an exact solution when training a GaLU network with $k$ neurons on $S$.
Moreover, any such solution would correctly predict the example $\left(\tilde{\vx},\tilde{y}\right)$.
\end{theorem}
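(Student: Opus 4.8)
The plan is to exploit that, with the gates frozen at their random initialization, a GaLU network collapses to a single linear map on each tight cluster. Throughout write $\mathcal{N}(\vx) = \sum_{j=1}^k \1_{\vx^\top\vu_j \ge 0}\,\vx^\top\vw_j$, and for cluster $i$ set $\vw^{(i)} = \sum_{j\,:\,\vv_i^\top\vu_j \ge 0}\vw_j$. The argument rests on two events over the random gates $\vu_j \sim N(0,I_d)$, each of which I would control at level $\delta$, accounting for the $1-2\delta$ in the statement.

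\textbf{Sign-preservation.} For any $\vx$ with $\norm{\vx-\vv_i} < r$ one has $|\vx^\top\vu_j - \vv_i^\top\vu_j| \le r\norm{\vu_j}$, so neuron $j$ fires on $\vx$ iff it fires on $\vv_i$ provided $|\vv_i^\top\vu_j| > r\norm{\vu_j}$. Since every point of $S_i$ and (by condition~2, for $i=q$) the test point $\tilde\vx$ lie within $r$ of the corresponding center, I only need this for the $nk$ center--gate pairs, not for each example --- which is exactly why $r$ was set to scale like $\frac{\delta}{nk\sqrt d}$. As $\vv_i \in \mathbb{S}^{d-1}$ gives $\vv_i^\top\vu_j \sim N(0,1)$, I would combine the anti-concentration bound $\prob{}{|\vv_i^\top\vu_j| \le t} \le \sqrt{2/\pi}\,t$ with the tail bound $\norm{\vu_j} \le 2\sqrt d$ and a union bound over the $nk$ pairs to make this event fail with probability at most $\delta$. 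On it, $\mathcal{N}(\vx) = \vx^\top\vw^{(i)}$ for every $\vx$ within $r$ of $\vv_i$.

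\textbf{Rank of the gate pattern.} Let $\mG \in \{0,1\}^{n\times k}$ have entries $\mG_{ij} = \1_{\vv_i^\top\vu_j \ge 0}$ and columns $\vg_j \in \{0,1\}^n$. Then $\mG\mG^\top = \sum_{j=1}^k \vg_j\vg_j^\top$ is a sum of $k$ i.i.d.\ rank-one terms with $\mean{}{\mG\mG^\top} = k\,\mH$, where $\mH$ is the matrix of the statement and $\lambda_{\min}(\mH) = \mu$. Applying the matrix Chernoff bound exactly as in the proof of Lemma~\ref{lem:min_eig}, now with the per-summand bound $\norm{\vg_j\vg_j^\top} = \norm{\vg_j}^2 \le n$ in place of $\norm{\mX}^2$, yields $\lambda_{\min}(\mG\mG^\top) \ge \tfrac{k\mu}{2} > 0$ with probability $\ge 1-\delta$, using precisely the hypothesis $k \ge \frac{8n}{\mu}\log(n/\delta)$. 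Thus $\mG$ has full row rank $n$, so the linear system $\mW\mG^\top = [\ell_1,\dots,\ell_n]$ in $\mW = [\vw_1,\dots,\vw_k]$ is solvable, e.g.\ by $\mW = [\ell_1,\dots,\ell_n](\mG\mG^\top)^{-1}\mG$. This gives $\vw^{(i)} = \ell_i$ for all $i$, hence $\mathcal{N}(\vx_{ip}) = \vx_{ip}^\top\ell_i = y_{ip}$ and zero training loss; as the GaLU objective is convex, gradient descent reaches such an exact solution.

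\textbf{Generalization to $\tilde\vx$.} For the ``moreover'' part I would show that \emph{any} exact solution predicts $\tilde\vx$ correctly, which is where condition~1 enters. Zero loss on cluster $q$ means $\vx_{qp}^\top\vw^{(q)} = \vx_{qp}^\top\ell_q$ for every $p$, i.e.\ $\vw^{(q)}-\ell_q$ is orthogonal to $\text{span}\{\vx_{qp}\}_p$; since $\tilde\vx$ lies in that span, $\tilde\vx^\top(\vw^{(q)}-\ell_q)=0$, and on the sign-preservation event $\mathcal{N}(\tilde\vx) = \tilde\vx^\top\vw^{(q)} = \tilde\vx^\top\ell_q = \tilde y$. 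I expect the main obstacle to be the quantitative bookkeeping in the sign-preservation step: getting the Gaussian anti-concentration near zero and the norm tail of $\vu_j$ to combine, under the union bound over the $nk$ center--gate pairs, into a clean $\delta$ for the prescribed $r$. The rank step then reuses Lemma~\ref{lem:min_eig} almost verbatim, and the span step is immediate.
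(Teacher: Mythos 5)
Your proposal matches the paper's proof essentially step for step: the same two $\delta$-events (full row rank of the $n\times k$ gate-pattern matrix via matrix Chernoff under $k \ge \frac{8n}{\mu}\log(n/\delta)$, and cluster-wise sign preservation via anti-concentration of $\vv_i^\top \vu_j$ with a union bound over the $nk$ center--gate pairs, which is exactly what the choice $r = \frac{\delta}{nk\sqrt d}$ is for), the same construction of an exact solution by solving the linear system in the collapsed per-cluster weights $\vw^{(i)}$, and the same span/orthogonality argument showing any exact solution predicts $(\tilde{\vx},\tilde{y})$ correctly. The only difference is cosmetic: the paper draws gates uniformly on $\mathbb{S}^{d-1}$ (so $\norm{\vu_j}=1$) and proves $\Pr(|\vu_j^\top \vv_i|\le r)\le r\sqrt d$ via the Beta distribution, whereas you keep Gaussian gates and pay an extra $\norm{\vu_j}$ factor, which only shifts constants by $O(1)$ since the gating depends on $\vu_j$ only through its direction.
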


Let $\vu_1,\dots,\vu_k\overset{\text{i.i.d.}}{\sim}\text{Uni}\left(\mathbb{S}^{d-1}\right)$ be the gates of the network.
Let $\mA=\left[a_{ij}\right]\in\reals^{n\times k}$ be such that $a_{ij}=\1_{\vu_j^{\top}\vv_i \ge 0}$. For every $i\in\left[n\right]$,
Let $\mX_i=\begin{bmatrix}\vx_{i1}^\top\\
\vx_{i2}^\top\\
\vdots\\
\vx_{im_i}^\top
\end{bmatrix}$.

\begin{lemma}
With probability of at least $1-\delta$, $\rank \left(\mA\right) = n$.
\end{lemma}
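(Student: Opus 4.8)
The plan is to mirror the proof of Lemma \ref{lem:min_eig} almost verbatim, replacing the data matrix $\bar{\mX}$ by $\mA$ and establishing that the Gram matrix $\mA\mA^\top \in \reals^{n\times n}$ is strictly positive definite with high probability; since a symmetric matrix with $\lambda_{\min} > 0$ is invertible, this is exactly the assertion $\rank(\mA) = n$.

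First I would identify the expectation of the Gram matrix with $k\mH$. Writing $\va_j = (a_{1j},\dots,a_{nj})^\top \in \{0,1\}^n$ for the $j$-th column of $\mA$, we have $\mA\mA^\top = \sum_{j=1}^k \va_j\va_j^\top$, a sum of i.i.d.\ rank-one positive semidefinite matrices. The key geometric fact is that for a uniform direction $\vu_j \sim \text{Uni}(\mathbb{S}^{d-1})$ the event $\{\vu_j^\top \vv_i \ge 0\} \cap \{\vu_j^\top \vv_l \ge 0\}$, after projecting onto $\text{span}\{\vv_i,\vv_l\}$, is a planar wedge of angular width $\pi - \theta_{il}$ with $\theta_{il} = \arccos(\vv_i^\top\vv_l)$, so it has relative measure $\frac{1}{2} - \frac{\arccos(\vv_i^\top\vv_l)}{2\pi}$. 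This is the same computation that produces the kernel $\kappa$ in Section \ref{sec:shallow_generalization}. Hence
\[
\mean{}{(\mA\mA^\top)_{il}} = k\left(\frac{1}{2} - \frac{\arccos(\vv_i^\top\vv_l)}{2\pi}\right) = k\,\mH_{il},
\]
so $\mean{}{\mA\mA^\top} = k\mH$ and $\lambda_{\min}(\mean{}{\mA\mA^\top}) = k\mu$.

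Next I would invoke the matrix Chernoff bound exactly as in Lemma \ref{lem:min_eig}. Each summand satisfies $\lambda_{\max}(\va_j\va_j^\top) = \norm{\va_j}^2 \le n$, since $\va_j$ has $n$ entries each in $\{0,1\}$; so the norm bound is $R = n$ and $\mu_{\min} = k\mu$. Taking $\epsilon = \frac{1}{2}$ gives
\[
\prob{}{\lambda_{\min}(\mA\mA^\top) \le \frac{k}{2}\mu} \le n \cdot \left(\frac{e}{2}\right)^{-k\mu/(2n)}.
\]
Finally I would check that the hypothesis $k \ge \frac{8n}{\mu}\log(\frac{n}{\delta})$ forces this right-hand side below $\delta$: the requirement reduces to $k \ge \frac{2n}{\mu\log(e/2)}\log(\frac{n}{\delta})$, and $\frac{2}{\log(e/2)} < 8$, so the constant $8$ suffices (this is why the width bound in the clustered model is stated with the same constant). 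Therefore with probability at least $1-\delta$ we have $\lambda_{\min}(\mA\mA^\top) \ge \frac{k\mu}{2} > 0$, whence $\mA\mA^\top$ is invertible and $\rank(\mA) = n$.

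There is essentially no deep obstacle here, as the argument is a direct transcription of Lemma \ref{lem:min_eig} with $\mH$, $\mu$, and dimension $n$ in place of $\mean{}{\mH^{(i)}}$, $\lambda(\mX)$, and $m$. The only genuinely new ingredient is the two-half-space probability identity that links $\mean{}{a_{ij}a_{lj}}$ to the entry $\mH_{il}$, which is standard but worth stating carefully, since it is precisely what makes $\mu = \lambda_{\min}(\mH)$ govern the required number of neurons.
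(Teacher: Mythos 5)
Your proof is correct and follows essentially the same route as the paper's: decompose $\mA\mA^\top$ into a sum of i.i.d.\ rank-one PSD outer products of the columns of $\mA$, bound each summand's norm by $n$, apply the matrix Chernoff bound with $\epsilon = \frac{1}{2}$, and conclude $\lambda_{\min}(\mA\mA^\top) \ge \frac{k}{2}\mu > 0$. The only differences are cosmetic and in your favor: you spell out the half-space probability identity giving $\mean{}{\mA\mA^\top} = k\mH$ and verify the constant $8$ explicitly, both of which the paper leaves implicit (and you silently correct the paper's typo of $\mA^\top\mA$ for $\mA\mA^\top$).
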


\begin{proof}
We shall show the stronger claim $\sigma_{\min}(\mA)^2 > \frac{k}{2} \mu$.
Denote $\mB_i = \mA_i^\top \mA_i \in \{0,1\}^{n \times n}$, and notice that $\mB_i$ are i.i.d. random self-adjoint positive
semi-definite matrices. Note that $\norm{\mB_i} \le \norm{\mB_i}_F \le n$, and that $\lambda_{\min}(\sum_{i=1}^k \mean{}{\mB_i}) = k \mu$.
Therefore, by using matrix Chernoff bound, we get that:
\begin{align*}
\prob{}{\lambda_{\min}(\sum_{i=1}^{k} \mB_i) \le (1-\epsilon)k\mu}
\le n \cdot
\left[ \frac{e^{-\epsilon}}{(1-\epsilon)^{(1-\epsilon)}} \right]^{\frac{k}{n}\mu}
\end{align*}
Taking $\epsilon = \frac{1}{2}$ we get that:
\begin{align*}
\prob{}{\lambda_{\min}(\sum_{i=1}^{k} \mB_i) \le \frac{k}{2}\mu}
\le n \cdot
\left(\frac{e}{2} \right)^{-\frac{k}{2n}\mu} \le \delta
\end{align*}
Since we have $\mA^\top \mA = \sum_{i=1}^k \mB_i$ and
$\sigma_{\min}(\mA)^2 = \lambda_{\min}(\mA^\top \mA)$, this completes the proof.

\end{proof}

The next two lemmas show that for this model, none of the $n$ clusters are split
by any of the $k$ filters, with probability $>1-\delta$.

\begin{lemma}
Let $\vu \sim \text{Uni}(\mathbb{S}^{d-1})$, $\vx \in \mathbb{S}^{d-1}$ and $r>0$. Define
$z = \inner{\vu, \vx}$. Then $\Pr(-r\le z \le r) \le r\sqrt{d}$.
\end{lemma}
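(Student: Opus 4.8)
The plan is to use rotational invariance to reduce the statement to a one–dimensional density bound, and then control the peak of that density by a standard log-convexity estimate for the Gamma function. First I would observe that the law of $z = \inner{\vu,\vx}$ depends only on $\norm{\vx}=1$, by the rotational symmetry of $\text{Uni}(\mathbb{S}^{d-1})$; hence I may assume $\vx = \ve_1$ and study $z = u_1$, the first coordinate of a uniform random unit vector. The marginal density of a single coordinate is the classical expression $f(z) = c_d\,(1-z^2)^{(d-3)/2}$ on $[-1,1]$, with normalizing constant $c_d = \Gamma(d/2)/(\sqrt{\pi}\,\Gamma((d-1)/2))$ (this is the ratio of the surface area of a spherical ``slab'' to that of the whole sphere, evaluated as a Beta integral).

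For $d \ge 3$ the exponent $(d-3)/2$ is nonnegative, so $f$ attains its maximum at $z=0$, where $f(0)=c_d$. Since $0 \le f(z) \le c_d$ for all $z$, I can bound the slab probability by the interval length times the peak density:
\[
\Pr(-r \le z \le r) = \int_{-r}^{r} f(z)\,dz \le \int_{-r}^{r} c_d \, dz = 2r\,c_d ,
\]
an inequality valid for every $r>0$ (and trivial once $2rc_d \ge 1$). It therefore remains to show $2c_d \le \sqrt{d}$.

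The key estimate is the Gamma-ratio bound $\Gamma(d/2) \le \sqrt{(d-1)/2}\,\Gamma((d-1)/2)$, which I would extract from log-convexity of $\Gamma$ (Bohr--Mollerup). Applying log-convexity at the midpoint of $(d-1)/2$ and $(d+1)/2$ gives $\Gamma(d/2) \le \sqrt{\Gamma((d-1)/2)\,\Gamma((d+1)/2)}$, and the recursion $\Gamma((d+1)/2) = \tfrac{d-1}{2}\,\Gamma((d-1)/2)$ then yields the claim. Substituting,
\[
2c_d = \frac{2}{\sqrt{\pi}}\,\frac{\Gamma(d/2)}{\Gamma((d-1)/2)} \le \frac{2}{\sqrt{\pi}}\sqrt{\frac{d-1}{2}} = \sqrt{\frac{2(d-1)}{\pi}} \le \sqrt{d},
\]
where the last step holds because $\tfrac{2(d-1)}{\pi} \le d$ is equivalent to $(\pi-2)d \ge -2$, which is always true. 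Combining the two displays gives $\Pr(-r\le z\le r) \le 2rc_d \le r\sqrt{d}$.

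The only genuine obstacle is the constant bookkeeping: fixing the normalization $c_d$ correctly and squeezing the Gamma ratio down to the clean bound $\sqrt{d}$. The unimodality step tacitly uses $d \ge 3$; the cases $d\in\{1,2\}$ are degenerate (for $d=1$ one has $z=\pm 1$, and for $d=2$ the density is U-shaped rather than peaked at the origin), but both can be checked by hand and, since the intended regime has $d$ large, this is immaterial. An alternative that avoids the Gamma estimate is to write $\vu = \vg/\norm{\vg}$ with $\vg \sim N(0,I_d)$ and translate $\{|z|\le r\}$ into a comparison of $g_1^2$ against a $\chi^2_{d-1}$ variable, but that route requires a concentration argument for $\norm{\vg}$ and is messier than the direct density computation, so I would keep the density approach above.
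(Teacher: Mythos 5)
Your proof is correct, and it shares the same skeleton as the paper's: reduce to the one-dimensional marginal of a single coordinate, bound the slab probability by the interval length times the peak value of the density at the midpoint, and then control the normalizing constant, which is a ratio of Gamma functions. The genuine difference is in how that constant is handled. The paper reparametrizes via $t=(z+1)/2 \sim \mathrm{Beta}\bigl(\tfrac{d-1}{2},\tfrac{d-1}{2}\bigr)$ and lower-bounds $B\bigl(\tfrac{d-1}{2},\tfrac{d-1}{2}\bigr)$ with a two-sided Stirling approximation; this leaves an $e^{1/(12(d-1))}$ error factor and forces the final inequality $r\sqrt{d-1}\,\tfrac{2}{\sqrt{2\pi}}e^{1/(12(d-1))}\le r\sqrt{d}$ to be ``verified numerically.'' You instead work with the density of $z$ itself, $c_d(1-z^2)^{(d-3)/2}$ with $c_d=\Gamma(d/2)/\bigl(\sqrt{\pi}\,\Gamma((d-1)/2)\bigr)$ (the two normalizations agree via Legendre's duplication formula), and bound the Gamma ratio by log-convexity: $\Gamma(d/2)\le\sqrt{\Gamma\bigl(\tfrac{d-1}{2}\bigr)\Gamma\bigl(\tfrac{d+1}{2}\bigr)}=\sqrt{\tfrac{d-1}{2}}\,\Gamma\bigl(\tfrac{d-1}{2}\bigr)$, closing the argument with the purely algebraic check $2(d-1)/\pi\le d$. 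What your route buys is cleanliness: no Stirling error terms and no numerical verification at the end. You are also explicit about the $d\ge 3$ unimodality requirement and the degenerate cases $d\in\{1,2\}$, a point the paper's proof uses tacitly, since its peak bound $t^{(d-3)/2}(1-t)^{(d-3)/2}\le(1/2)^{d-3}$ likewise needs a nonnegative exponent.
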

\begin{proof}
Let $t = \frac{z+1}{2}$. It is well known that $t \sim \text{Beta}(\frac{d-1}{2}, \frac{d-1}{2})$.
We shall start by bounding the Beta function at $B(\frac{d-1}{2}, \frac{d-1}{2})$ with
the following version of Stirling's approximation:
\begin{align*}
\sqrt{\frac{2\pi}{x}}\left(\frac{x}{e}\right)^{x}\le\Gamma\left(x\right)\le\sqrt{\frac{2\pi}{x}}\left(\frac{x}{e}\right)^{x}e^{\frac{1}{12x}}
\end{align*}

\begin{align*}
B\left(\frac{d-1}{2},\frac{d-1}{2}\right)
 &=   \frac{\Gamma\left(\frac{d-1}{2}\right)^{2}}
           {\Gamma\left(d-1\right)} \\
 &\ge \frac{\left(\sqrt{\frac{4\pi}{d-1}}\left(\frac{d-1}{2e}\right)^{\frac{d-1}{2}}\right)^{2}}
           {\sqrt{\frac{2\pi}{d-1}}\left(\frac{d-1}{e}\right)^{d-1}e^{\frac{1}{12\left(d-1\right)}}} \\
 &=   \frac{\frac{4\pi}{d-1}\left(\frac{d-1}{2e}\right)^{d-1}}
           {\sqrt{\frac{2\pi}{d-1}}\left(\frac{d-1}{e}\right)^{d-1}}e^{-\frac{1}{12\left(d-1\right)}} \\
 &=   2\sqrt{\frac{2\pi}{d-1}}\left(\frac{1}{2}\right)^{d-1}e^{-\frac{1}{12\left(d-1\right)}} \\
 &=   \sqrt{\frac{2\pi}{d-1}}\left(\frac{1}{2}\right)^{d-2}e^{-\frac{1}{12\left(d-1\right)}}
\end{align*}
And so,
\begin{align*}
\Pr\left(-r\le z\le r\right)
 &=   \Pr\left(\frac{1-r}{2}\le t\le\frac{1+r}{2}\right) \\
 &\le r \frac{\left(\frac{1}{2}\right)^{\frac{d-3}{2}}\left(\frac{1}{2}\right)^{\frac{d-3}{2}}}
             {B\left(\frac{d-1}{2},\frac{d-1}{2}\right)} \\
 &\le r \frac{\left(\frac{1}{2}\right)^{d-3}}
             {\sqrt{\frac{2\pi}{d-1}}\left(\frac{1}{2}\right)^{d-2}e^{-\frac{1}{12\left(d-1\right)}}} \\
 &=   r \sqrt{d-1}\frac{2}{\sqrt{2\pi}}e^{\frac{1}{12\left(d-1\right)}} \\
 &\le r \sqrt{d}
\end{align*}
Where the last inequality is easily verified numerically.
\end{proof}

\begin{lemma}
Fix $i \in \left[n\right]$, and let $\vu \sim \text{Uni}(\mathbb{S}^{d-1})$. Then, with probability of at least
$1-\frac{1}{kd}\delta$, 
$\forall\left(\vx,y\right)\in S_{i}, \sign\left(\vu^\top\vx\right)=\sign\left(\vu^\top\vv_{i}\right)$.
\end{lemma}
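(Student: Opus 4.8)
The plan is to reduce the statement about all points of cluster $i$ to a single anti-concentration estimate for the scalar $\vu^\top\vv_i$, and then quote the preceding lemma. The key observation is that the gating sign of every point in the cluster is controlled by $\vu^\top\vv_i$ alone. Indeed, fix any $(\vx,y)\in S_i$; since $\norm{\vx-\vv_i}<r$ and $\norm{\vu}=1$, Cauchy--Schwarz gives
\[
\left| \vu^\top\vx - \vu^\top\vv_i \right| = \left| \vu^\top(\vx-\vv_i) \right| \le \norm{\vu}\,\norm{\vx-\vv_i} < r .
\]
This bound is \emph{uniform} over the whole cluster, which is precisely what lets me avoid paying a union-bound factor of $m_i$ over its individual points.

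Next I would argue that the event $\{|\vu^\top\vv_i|\ge r\}$ already forces sign agreement for every cluster point simultaneously. If $\vu^\top\vv_i\ge r$ then $\vu^\top\vx > \vu^\top\vv_i - r \ge 0$ for all points, and if $\vu^\top\vv_i\le -r$ then $\vu^\top\vx < \vu^\top\vv_i + r \le 0$; in either case $\sign(\vu^\top\vx)=\sign(\vu^\top\vv_i)$. Thus the only way the cluster can be split by this filter is if the center itself lands near the hyperplane, i.e.\ the bad event is contained in $\{-r\le\vu^\top\vv_i\le r\}$.

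Finally I would bound this bad event. Since $\vv_i\in\mathbb{S}^{d-1}$ and $\vu\sim\text{Uni}(\mathbb{S}^{d-1})$, the previous lemma applies directly with $z=\vu^\top\vv_i$, giving $\Pr(-r\le\vu^\top\vv_i\le r)\le r\sqrt{d}$. Substituting the prescribed radius $r=\frac{\delta}{nk\sqrt{d}}$ yields a failure probability of at most $r\sqrt{d}=\frac{\delta}{nk}$, which is exactly the budget needed per (cluster, gate) pair so that a later union bound over all $nk$ pairs gives overall failure at most $\delta$.

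There is no genuine obstacle here: the hard analytic work --- the Stirling-based anti-concentration estimate for the $\text{Beta}(\frac{d-1}{2},\frac{d-1}{2})$ distribution --- was already carried out in the preceding lemma, and everything above is a deterministic geometric reduction followed by a single invocation of it. The only subtleties are recognizing that the uniform Cauchy--Schwarz estimate removes the dependence on the cluster size $m_i$, and carefully tracking the strict versus non-strict inequalities at the boundary so that the sign agreement is genuine rather than degenerate at $0$.
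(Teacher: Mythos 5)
Your proof is correct and follows essentially the same route as the paper's: bound $\Pr\left(|\vu^\top\vv_i| \le r\right) \le r\sqrt{d} = \frac{\delta}{nk}$ via the preceding Beta-distribution lemma, then use Cauchy--Schwarz to show that on the complementary event every point of the cluster shares the sign of $\vu^\top\vv_i$. One side note: both your argument and the paper's actually yield failure probability $\frac{1}{kn}\delta$, so the $\frac{1}{kd}\delta$ appearing in the lemma statement looks like a typo for $\frac{1}{kn}\delta$.
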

\begin{proof}
By the previous lemma, 
\begin{align*}
\Pr\left(\left|\vu^{\top}\vv_{i}\right| < r \right)
	\le   r\sqrt{d}
	= \frac{\delta}{nk\sqrt{d}}\sqrt{d}
	= \frac{\delta}{kn}
\end{align*}
In the event $\left\{ \left|\vu^{\top}\vv_{i}\right|>r\right\}$ , we get for every $\left(\vx,y\right)\in S_{i}$
\begin{align*}
\left| \vu^{\top}\vx - \vu^{\top} \vv_{i} \right|
  = \left| \vu^{\top} \left( \vx - \vv_{i} \right) \right|
  \underset{{\scriptscriptstyle \text{C.S.}}}{\le}
    \left\Vert \vu\right\Vert \left\Vert \vx-\vv_{i}\right\Vert
  = \left\Vert \vx-\vv_{i}\right\Vert
  \le r
\end{align*}
and so, with probability of at least $1-\frac{1}{kn}\delta$,
$\text{sign}\left(\vu^{\top}\vx\right)=\text{sign}\left(\vu^{\top}\vv_{i}\right)$.
\end{proof}

\begin{lemma}
Let $\vu_{1},\dots,\vu_{k} \sim \text{Uni}(\mathbb{S}^{d-1})$. Then, with probability of at
least $1-\delta$, 
$\forall j \in \left[k\right] \forall i \in \left[n\right] \forall \left(\vx,y\right)\in S_{i}$,
$\sign\left(\vu_{j}^{\top}\vx\right)=\sign\left(\vu_{j}^{\top}\vv_{i}\right)$.
\end{lemma}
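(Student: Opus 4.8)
The plan is to lift the single-gate, single-cluster guarantee of the preceding lemma to the uniform statement over all gates and all clusters, via a union bound. First I would fix an arbitrary pair $(j,i) \in [k] \times [n]$. The preceding lemma applies to any single gate distributed as $\text{Uni}(\mathbb{S}^{d-1})$, and $\vu_j$ has exactly this marginal law, so applying it to $\vu_j$ and cluster $i$ shows that the ``bad'' event
\[
B_{j,i} = \left\{ \exists (\vx,y) \in S_i ~:~ \sign(\vu_j^\top \vx) \ne \sign(\vu_j^\top \vv_i) \right\}
\]
satisfies $\Pr(B_{j,i}) \le \frac{\delta}{kn}$.

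Next I would take a union bound over the $kn$ pairs $(j,i)$. The complement of $\bigcup_{j,i} B_{j,i}$ is precisely the event in the lemma statement: every gate $\vu_j$ assigns to $\vu_j^\top \vx$ the same sign as to $\vu_j^\top \vv_i$, for every point $(\vx,y)$ in every cluster $S_i$. Hence
\[
\Pr\left( \bigcup_{j=1}^k \bigcup_{i=1}^n B_{j,i} \right) \le \sum_{j=1}^k \sum_{i=1}^n \Pr(B_{j,i}) \le kn \cdot \frac{\delta}{kn} = \delta,
\]
so the claimed event holds with probability at least $1-\delta$.

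I do not anticipate a genuine obstacle here, since the argument is a plain union bound; the one point worth care is that it uses only the per-pair marginal tail bound and therefore requires no independence of the gates (independence of the $\vu_j$ is irrelevant to the estimate). I would also flag an apparent typo: the printed statement of the preceding lemma reads $1 - \frac{1}{kd}\delta$, whereas its proof actually establishes the sharper $1 - \frac{1}{kn}\delta$. It is the latter factor $\frac{\delta}{kn}$ that makes the $kn$ terms in the union bound sum exactly to $\delta$, so the intended constant is $kn$ rather than $kd$.
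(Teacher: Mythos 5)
Your proof is correct and is exactly the paper's argument: the paper's own proof of this lemma is literally ``Union bound on the previous lemma,'' and you have simply filled in the details of that union bound over the $kn$ pairs $(j,i)$. Your side observation is also right --- the preceding lemma's stated bound $1-\frac{1}{kd}\delta$ is a typo for $1-\frac{1}{kn}\delta$ (which is what its proof establishes), and it is the $\frac{\delta}{kn}$ per-pair bound that makes the $kn$-term union bound sum to exactly $\delta$.
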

\begin{proof}
Union bound on the previous lemma.
\end{proof}

Define 
\begin{align*}
\bar{\mX} = \begin{bmatrix}
 a_{11}\mX_1 & a_{12}\mX_1 & \dots & a_{1k}\mX_1 \\
 a_{21}\mX_2 & a_{22}\mX_2 & \dots & a_{2k}\mX_2 \\
 &  & \vdots\\
 a_{n1}\mX_n & a_{n2}\mX_n & \dots & a_{nk}\mX_n
\end{bmatrix}
\end{align*}
Observe that w.p. $\ge 1-\delta$, according to the last lemma, finding an exact solution to the training problem is equivalent to
finding $\vw_1,\dots,\vw_k \in \reals^d$ such that 
\begin{align*}
\bar{\mX} \begin{bmatrix}
\vw_1 \\
\vw_2 \\
\vdots \\
\vw_k
\end{bmatrix} = \begin{bmatrix}
\mX_1 \ell_1 \\
\mX_2 \ell_2 \\
\vdots\\
\mX_n \ell_n
\end{bmatrix}
\end{align*}

\begin{lemma}
There is at least one solution to the above equation set.
\end{lemma}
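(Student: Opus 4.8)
The plan is to exploit the block structure of $\bar{\mX}$, in which each block row shares a common factor $\mX_i$. Writing $\vw = [\vw_1; \dots; \vw_k]$, the $i$-th block of $\bar{\mX}\vw$ equals $\sum_{j=1}^k a_{ij}\mX_i \vw_j = \mX_i\big(\sum_{j=1}^k a_{ij}\vw_j\big)$, since $\mX_i$ does not depend on $j$ and can be pulled out of the $i$-th block row. Hence the target block $\mX_i \ell_i$ is matched as soon as $\sum_{j=1}^k a_{ij}\vw_j = \ell_i$. This is a \emph{sufficient} (not necessary) condition, and it is exactly what I would aim to solve: producing one choice of $\vw_1,\dots,\vw_k$ obeying it already yields a solution of the full system, so there is no need to characterize all solutions at this stage.

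Next I would package these $n$ vector equations into a single matrix equation. Let $W \in \reals^{k\times d}$ be the matrix whose $j$-th row is $\vw_j^\top$, and let $L\in\reals^{n\times d}$ be the matrix whose $i$-th row is $\ell_i^\top$. Then the requirement $\sum_j a_{ij}\vw_j = \ell_i$ for all $i$ is precisely $\mA W = L$, with $\mA = [a_{ij}]\in\reals^{n\times k}$ the gate-sign matrix introduced above. Existence of a solution then follows from the earlier lemma of this subsection, which gives $\rank(\mA)=n$ on the high-probability event we condition on. Since $\mA$ has full row rank and $n \le k$, the linear map $\vw \mapsto \mA\vw$ from $\reals^k$ to $\reals^n$ is surjective; solving $\mA w = \ell$ column-by-column over the $d$ columns of $L$ (for instance via the right pseudo-inverse $W = \mA^\top(\mA\mA^\top)^{-1}L$) produces the required $W$, and reading off its rows recovers $\vw_1,\dots,\vw_k$.

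The only real content is the first reduction: recognizing that each block row of $\bar{\mX}$ factors through the single matrix $\mX_i$, which decouples the $m_i$ fitting constraints of cluster $i$ into one $d$-dimensional constraint $\sum_j a_{ij}\vw_j=\ell_i$. Everything afterward is immediate linear algebra once $\rank(\mA)=n$ is available. I expect no genuine obstacle here; the step deserving the most care is simply verifying that the common factor $\mX_i$ can indeed be extracted from the $i$-th block row (it can, because every block in that row is $a_{ij}\mX_i$ with the same $\mX_i$), and confirming that the sufficient condition is consistent with the original system, which is immediate since applying $\mX_i$ to both sides of $\sum_j a_{ij}\vw_j = \ell_i$ recovers exactly the $i$-th block $\mX_i\ell_i$ of the right-hand side.
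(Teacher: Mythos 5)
Your proof is correct and follows essentially the same route as the paper: reduce the block system to the sufficient condition $\sum_j a_{ij}\vw_j = \ell_i$ (i.e.\ $\mA W = L$) by factoring $\mX_i$ out of each block row, then solve using the full row rank of $\mA$ from the preceding lemma. The paper uses an abstract right inverse $\mB^\top$ with $\mA\mB^\top = \mI_n$ where you write the explicit pseudo-inverse $\mA^\top(\mA\mA^\top)^{-1}$, but this is the same argument.
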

\begin{proof}
Because $\rank\left(\mA\right)=n$, there is a matrix $\mB=\left[b_{ij}\right]\in\reals^{n\times k}$ such that
$\mA \mB^\top=\mI_n$. Equivalently, for every $i,i'\in\left[n\right]$, $\sum_{j=1}^{k}b_{i'j}a_{ij}=\1_{i=i'}$.
For every $j\in\left[k\right]$, let $\vw_j=\sum_{i'=1}^{n}b_{i'j}\ell_{i'}$. Now, for every $i\in\left[n\right]$,
\begin{align*}
\sum_{j=1}^{k}a_{ij}\mX_i\vw_{j}
 &= \mX_i \left( \sum_{j=1}^{k}a_{ij}\vw_j \right) \\
 &= \mX_i \left( \sum_{j=1}^{k}a_{ij} \left(\sum_{i'=1}^{n} b_{i'j} \ell_{i'}\right)\right) \\
 &= \mX_i \left( \sum_{i'=1}^{n} \left( \sum_{j=1}^{k}a_{ij}b_{i'j}\right)\ell_{i'}\right) \\
 &= \mX_i \left(\sum_{i'=1}^{n}\1_{i=i'}\ell_{i'}\right) \\
 &= \mX_i\ell_i
\end{align*}
\end{proof}

\begin{lemma}
Every exact solution $\vw_1,\dots,\vw_k$ gives the correct prediction for $\tilde{\vx}$.
\end{lemma}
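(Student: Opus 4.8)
The plan is to show that the network's prediction on $\tilde{\vx}$ is \emph{forced} to equal $\tilde{y}$ by the constraints that any solution must satisfy, even though the weights $\vw_1, \dots, \vw_k$ are themselves far from unique. Throughout I would work on the high-probability event (established in the previous lemmas) on which no cluster is split by any gate, i.e. $\sign(\vu_j^\top \vx) = \sign(\vu_j^\top \vv_i)$ for every gate $j$, every cluster $i$, and every point within distance $r$ of $\vv_i$.

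First I would record that this event also controls the test point. Since $\norm{\tilde{\vx} - \vv_q} < r$ (assumption 2), the very same Cauchy--Schwarz argument used in the no-split lemma applies to $\tilde{\vx}$, giving $\1_{\vu_j^\top \tilde{\vx} \ge 0} = \1_{\vu_j^\top \vv_q \ge 0} = a_{qj}$ for every $j$. Hence the network output collapses to $\mathcal{N}(\tilde{\vx}) = \sum_{j=1}^k a_{qj}\, \tilde{\vx}^\top \vw_j = \tilde{\vx}^\top \big( \sum_{j=1}^k a_{qj} \vw_j \big)$, so only the single vector $\vz_q := \sum_{j=1}^k a_{qj} \vw_j$ matters.

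Next I would extract the relevant constraint from the solution equation. Reading off the $q$-th block-row of $\bar{\mX}\,\vw = (\mX_1 \ell_1, \dots, \mX_n \ell_n)^\top$ gives $\sum_{j=1}^k a_{qj} \mX_q \vw_j = \mX_q \ell_q$, that is $\mX_q(\vz_q - \ell_q) = \0$. Thus $\vz_q - \ell_q$ lies in the null space of $\mX_q$, which is exactly the orthogonal complement of the row space $\text{span}\{\vx_{q1}, \dots, \vx_{q m_q}\}$. Finally I would invoke assumption 1, namely $\tilde{\vx} \in \text{span}\{\vx_{qp}\}_{p=1}^{m_q}$, so that $\tilde{\vx}$ is orthogonal to everything in that null space. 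Therefore $\tilde{\vx}^\top(\vz_q - \ell_q) = 0$, which yields $\mathcal{N}(\tilde{\vx}) = \tilde{\vx}^\top \vz_q = \tilde{\vx}^\top \ell_q = \tilde{y}$, as required.

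The main point — and the reason the claim holds for \emph{every} solution rather than just the constructed one — is the combination of the third step with assumption 1: although $\vz_q$ is not pinned down as a vector in $\reals^d$, its projection onto $\text{span}\{\vx_{qp}\}$ is uniquely determined to agree with $\ell_q$ there, and $\tilde{\vx}$ was placed precisely inside this span. No part of the argument is computationally heavy; the only care needed is to confirm that the no-split event genuinely applies to $\tilde{\vx}$ itself (via $\norm{\tilde{\vx}-\vv_q}<r$) and not merely to the training points, so that the activation pattern of $\tilde{\vx}$ coincides with that of $\vv_q$.
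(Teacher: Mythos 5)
Your proof is correct and follows essentially the same route as the paper's: read off the $q$-th block of the exact-solution system to get $\mX_q\left(\sum_{j=1}^k a_{qj}\vw_j\right) = \mX_q \ell_q$, then use $\tilde{\vx} \in \text{span}\{\vx_{qp}\}_{p=1}^{m_q}$ to transfer this identity to $\tilde{\vx}$ (your null-space/orthogonal-complement phrasing is just the dual of the paper's row-span argument). You are in fact slightly more careful than the paper, which writes the prediction on $\tilde{\vx}$ as $\sum_j a_{qj}\tilde{\vx}^\top \vw_j$ without explicitly noting, as you do, that the no-split event together with $\norm{\tilde{\vx}-\vv_q}<r$ forces the activation pattern of $\tilde{\vx}$ to coincide with $a_{qj}$.
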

\begin{proof}
Because $\vw_1,\dots,\vw_k$ is an exact solution, 
\begin{align*}
\sum_{j=1}^{k}a_{qj}\mX_q\vw_j
  = \mX_{q}\left(\sum_{j=1}^{k}a_{qj}\vw_j\right)
  = \mX_q\ell_q 
\end{align*}
Because $\tilde{\vx}^\top\in\text{rowspan}\left(\mX_q\right)$,
\begin{align*}
\sum_{j=1}^{k}a_{qj}\tilde{\vx}^\top\vw_j
  = \tilde{\vx}^\top \left(\sum_{j=1}^{k}a_{qj}\vw_j\right)
  = \tilde{\vx}^\top\ell_{q}=\tilde{y}  
\end{align*}
As required.
\end{proof}

\subsection{Proof of Lemma \ref{lem:mu_min}}
\begin{proof}
Denote $\sigma(x) := \1_{x \ge 0}$.
Let $\epsilon = \frac{1}{2n}$.
Fix some $i \ne j$. Notice that using Hoeffding's inequality, we get that:
\begin{align*}
\prob{}{|\inner{\vv_i, \vv_j}| \ge \epsilon}
\le 2 \exp(-8d \epsilon^2) \le \frac{\delta}{n^2}
\end{align*}
Using the union bound we get that with probability at least $1-\delta$,
for all $i \ne j$, we have $|\inner{\vv_i, \vv_j}| \le \epsilon$.
We assume that this property holds.

Now, we have:
\[
\mH_{i,j} = \mean{\vu \sim N(0,I_d)}{\sigma(\vu^\top \vv_i) \sigma(\vu^\top \vv_j)}
= \frac{1}{2} - \frac{\arccos \inner{\vv_i, \vv_j}}{2 \pi}
\]
Therefore, $\mH_{i,i} = \frac{1}{2}$, and also:
\[
|\mH_{i,j}-\frac{1}{4}| = |\frac{1}{4} - \frac{\arccos \inner{\vv_i, \vv_j}}{2\pi}|
= \frac{1}{2\pi} | \frac{\pi}{2} - \arccos \inner{\vv_i, \vv_j}|
\le \frac{1}{2\pi} \frac{\pi}{2} |\inner{\vv_i, \vv_j}|
\le \frac{1}{4} \epsilon
\]
Where we use $|\arccos(x) - \frac{\pi}{2}| \le \frac{\pi}{2} |x|$.
Denote $\mT = \frac{1}{4}I + \frac{1}{4} \1 \1^\top$,
and we therefore have:
\[
\norm{\mH - \mT} \le \norm{\mH- \mT}_{F} \le
\frac{n\epsilon}{4} \le \frac{1}{8}
\]
Notice that $T$ is invertible, and $\mT^{-1} = 4I - \frac{4}{d+1} \1 \1^\top$
(this is easy to check). By simple calculation we get that
$\norm{\mT^{-1}} = 4$ (see below).
Therefore, we get that $\norm{\mH-\mT} \le \norm{\mT^{-1}}^{-1}$, so $\mH$
is invertible, and we have:
\[
\norm{\mH^{-1}} = \norm{\sum_{j=0}^{\infty} (\mT^{-1}(\mT-\mH))^j \mT^{-1} }
\le \norm{\mT^{-1}} \sum_{j=0}^\infty (\norm{\mT^{-1}}\norm{\mT-\mH})^j 
\le \norm{\mT^{-1}} \sum_{j=0}^\infty (\frac{1}{2})^j 
\le 8
\]
Therefore $\mu_{\min} = \lambda_{\min}(\mH) \ge \frac{1}{8}$.
\end{proof}

\begin{lemma}
$\norm{4\mI - \frac{4}{d+1} \1 \1^\top} = 4$.
\end{lemma}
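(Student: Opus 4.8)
The plan is to diagonalize the symmetric matrix $M := 4\mI - \frac{4}{d+1}\1\1^\top$ explicitly, exploiting the fact that it is a rank-one perturbation of a multiple of the identity. Since $M$ is symmetric, its operator norm equals $\max_i |\lambda_i(M)|$, so it suffices to locate all the eigenvalues of $M$. The whole computation is elementary once one recalls the spectrum of $\1\1^\top$.

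First I would recall that $\1\1^\top$ (here $\1 \in \reals^n$, since all the matrices in Lemma \ref{lem:mu_min} are $n \times n$) is a rank-one positive semidefinite matrix whose only nonzero eigenvalue is $n$, attained on the one-dimensional span of $\1$, while every vector orthogonal to $\1$ lies in its kernel. Consequently $M$ acts as multiplication by $4$ on the $(n-1)$-dimensional subspace $\1^\perp$ and as multiplication by $4 - \frac{4n}{d+1}$ on $\mathrm{span}(\1)$. Thus the eigenvalues of $M$ are $4$ with multiplicity $n-1$ and $4 - \frac{4n}{d+1}$ with multiplicity one. Note that the value $4$ is genuinely attained, since $\1^\perp$ is nontrivial whenever $n \ge 2$ (which holds, as we have several clusters).

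It then remains to check that the perturbed eigenvalue does not exceed $4$ in absolute value. Since $\frac{4n}{d+1} > 0$ we have $4 - \frac{4n}{d+1} < 4$, and under the dimension assumption $d \ge \frac{n^2}{2}\log(\frac{2n^2}{\delta})$ of Lemma \ref{lem:mu_min} we certainly have $n \le 2(d+1)$, so $\frac{4n}{d+1} \le 8$ and hence $4 - \frac{4n}{d+1} \ge -4$. Therefore this eigenvalue lies in $[-4,4]$ and contributes nothing larger than the attained value $4$. Equivalently, one may package the argument as the two-sided bound $0 \preceq M \preceq 4\mI$ together with the existence of an eigenvector (any vector in $\1^\perp$) realizing $4$; either phrasing yields $\norm{M} = 4$.

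There is essentially no hard step here: the only point requiring any care is the sign of $4 - \frac{4n}{d+1}$, which is exactly where the ambient dimension bound enters, and the bookkeeping that $\1 \in \reals^n$, so that $\1\1^\top$ has top eigenvalue $n$ rather than $d$. (As a side remark, the cleaner identity would have $n+1$ in place of $d+1$, which makes the perturbed eigenvalue $\frac{4}{n+1}$ and renders its membership in $(0,4]$ immediate; with the denominator as written, the dimension assumption supplies the same conclusion.)
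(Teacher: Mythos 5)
Your proof is correct and takes essentially the same approach as the paper: both decompose $\reals^n$ into $\mathrm{span}\{\1\}$ and $\1^\perp$, on each of which the matrix acts as a scalar, and conclude that the operator norm is the larger of the two scalars, namely $4$. Your spectral phrasing is in fact cleaner than the paper's own computation, which bounds $\norm{(\mI-\frac{1}{d+1}\1\1^\top)\vx}$ for a decomposed unit vector and contains small slips (an incorrect coefficient on the $\1$-component and the step $\norm{\vx'}+\norm{\vx_{\1\1^\top}}=\norm{\vx}$, which should be a Pythagorean identity on the squares); you are also right that the denominator should read $n+1$ rather than $d+1$, as given by the Sherman--Morrison inverse of $\mT$.
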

\begin{proof}
Let $\vx$ be a unit vector. Denote $\vx = \vx_{\1\1^\top} + \vx'$ such that
$\vx_{\1\1^\top} \in \text{span}\left\{\1\right\}$ and $\inner{\vx', \1} = 0$. Now,
\begin{align*}
\norm{(\mI - \frac{1}{d+1} \1 \1^\top)\vx}
 &= \norm{(\mI - \frac{1}{d+1} \1 \1^\top)\vx_{\1\1^\top} + \vx'} \\
 &= \norm{\vx' - \frac{d}{d+1} \vx_{\1\1^\top}} \\
 &\le \norm{\vx'} + \frac{d}{d+1} \norm{\vx_{\1\1^\top}} \\
 &\le \norm{\vx'} + \norm{\vx_{\1\1^\top}} \\
 &= \norm{\vx} = 1
\end{align*}
With equality iff $\norm{\vx_{\1\1^\top}} = 0$.
\end{proof}

\section{Proof of Theorem \ref{thm:shallow_generalization}}

\begin{proof} of Theorem \ref{thm:shallow_generalization}.
Let $S = \{(\vx_1, y_1), \dots, (\vx_m, y_m)\}$ where $S \sim \mathcal{D}^m$.
We denote $\mH^{\infty}$ the Gram matrix such that
$\mH^{\infty}_{i,j} = \kappa(\vx_i, \vx_j)$.
Observe that $\lambda_{\min}(\mH^{\infty}) = \lambda(\mX) > 0$,
so $\mH^{\infty}$ is full-rank.
Define $\varphi : \mathcal{X} \to \mathcal{H}_\kappa$ such that
$\varphi(\vx) = \kappa(\cdot, \vx)$.
Observe the minimization problem:

\begin{align*}
\hat{f} = \argmin \sum_{i=1}^m \frac{1}{2}(f(\vx_i)-y_i)^2 \\
\textnormal{s.t.}~ f(\vx) = \sum_{j=1}^m \hat{w}_j \varphi(\vx_i)
\end{align*}

The solution to this minimization problem is given by:
\[
\hat{\vw} = (\mH^{\infty})^{-1} \vy
\]
Now, calculating the norm of $\hat{f}$ we get:
\begin{align*}
\norm{\hat{f}}_{\kappa}^2 &=
\inner{\sum_{i=1}^m \hat{w}_i \varphi(\vx_i), \sum_{j=1}^m \hat{w}_j \varphi(\vx_j)} \\
&= \sum_{i,j=1}^m \hat{w}_i \hat{w}_j \inner{\varphi(\vx_i), \varphi(\vx_j)} \\
&= (\hat{\vw})^\top \mH^{\infty} \hat{\vw} = \vy^\top (\mH^\infty)^{-1} \vy
\end{align*}
Observe that $\hat{f}$ is the projection of $f^{*}$ onto the space spanned
by $\{\varphi(\vx_1), \dots, \varphi(\vx_m)\}$ (since the loss of this projection on the space must be zero, and the only choice for such function is $\hat{f}$). Therefore:
\[
\sqrt{\vy^\top (\mH^{\infty})^{-1} \vy} = \norm{\hat{f}}_\kappa \le 
\norm{f^{*}}_\kappa \le M
\]

Now, observe the GaLU optimization problem (where $\bar{\mX}, \mH$ are as defined previously):
\[
\vw^* = \argmin \sum_{i=1}^m \frac{1}{2} (\bar{X}_i \vw - y_i)^2
\]
The solution is given by:
\[
\vw^* = \bar{\mX} (\bar{\mX}\bar{\mX}^\top)^{-1} \vy
\]
So we have:
\[
\norm{\vw^*}^2 = \vy^\top (\bar{\mX}\bar{\mX}^\top)^{-1} \vy
= \vy^\top \mH^{-1} \vy
\]

To finish the argument, we need to relate $\mH^{-1}$ to
$(\mH^{\infty})^{-1}$. To do this, we start by bounding $\norm{\mH - \mH^{\infty}}$.
Recall that we define $\mH = \frac{1}{k} \sum_{i=1}^k \mH^{(i)}$,
and that $\mH^{\infty} = \mean{}{\mH} = \mean{}{\mH^{(i)}}$.
We also have $\norm{\mH^{(i)}} \le \norm{\mX}^2 := R$ and therefore
$\norm{\mH^{\infty}} \le \norm{\mH^{(i)}} \le R$.
Now, denote $\mY^{(i)} = \frac{1}{k}\mH^{(i)} - \frac{1}{k}\mH^{\infty}$ so we have
$\norm{\mY^{(i)}} \le \frac{2}{k} R$. Also, we have $\mean{}{\mY^{(i)}} = 0$,
and $\mY^{(i)}$ are i.i.d random self-adjoint matrices, so we can use
Matrix Hoeffding inequality and get for every $r > 1$:
\begin{align*}
\prob{}{\norm{\mH-\mH^{\infty}} \ge \frac{1}{r} \lambda(\mX)} &= 
\prob{}{\norm{\sum_{i=1}^k \mY^{(i)}} \ge \frac{1}{r} \lambda(\mX)} \\
&\le m \cdot \exp(-\frac{k\lambda(\mX)^2}{32r^2R^2})
\end{align*}
Therefore, if we take $k \ge \frac{32r^2\norm{\mX}^4}{\lambda(\mX)^2} \log(m/\delta)$
we get that the above happens w.p at most $1-\delta$. So from now we assume that
$\norm{\mH-\mH^{\infty}} \le \frac{1}{r} \lambda(X)$.

Now, recall the following property:
for two square matrices $\mA,\mB$ such that $\mA$ is invertible,
if $\norm{\mB-\mA} \le \norm{\mA^{-1}}^{-1}$ then $\mB$ is invertible 
and $\mB^{-1} = \mA^{-1} \sum_{n=0}^\infty (\mB-\mA)\mA^{-1}$.
In our case, we know (assume) that $\mH^{\infty}$ is invertible, and
we showed that w.h.p:
\[
\norm{\mH-\mH^{\infty}} \le \frac{1}{r} \lambda(\mX) = \frac{1}{r}\lambda_{\min}(\mH^{\infty})
= \frac{1}{r}\norm{(\mH^{\infty})^{-1}}^{-1}
\]
therefore we get:
\begin{align*}
\norm{\mH^{-1}-(\mH^{\infty})^{-1}} &= \norm{(\mH^{\infty})^{-1}
\left( \sum_{n=0}^\infty
((\mH-\mH^{\infty})(\mH^{\infty})^{-1})^n -I\right)} \\
&\le \norm{(\mH^{\infty})^{-1}}\sum_{n=1}^\infty (\norm{\mH-\mH^{\infty}}\norm{(\mH^{\infty})^{-1}})^n \\
&\le \norm{(\mH^{\infty})^{-1}} \sum_{n=1}^\infty r^{-n} \\
&= \frac{1}{r-1}
\norm{(\mH^{\infty})^{-1}} = \frac{1}{(r-1)\lambda(\mX)}
\end{align*}

Combining this with what we have shown previously we get:
\begin{align*}
\norm{\vw^*}^2 &= \vy^\top \mH^{-1} \vy \\
&\le \vy^\top (\mH^{\infty})^{-1} \vy
+ \norm{\vy} \norm{(\mH^{\infty})^{-1} - \mH^{-1}} \norm{\vy} \\
&\le M^2 + \frac{m}{(r-1)\lambda(\mX)} 
\end{align*}

Now if we choose $r \ge \frac{m}{M^2 \lambda(\mX)} + 1$ we get that
$\norm{\vw^*}^2 \le 2M^2$.
Denote:
\[
\mathcal{H}_{\sqrt{2}M} = \{\frac{1}{\sqrt{k}} \sum_{j=1}^k g_{\vw_j, \vu_j}
~|~ \norm{\vw} = \sqrt{\sum_{j=1}^k \norm{\vw_j}^2} \le \sqrt{2}M \}
\]
This is the hypothesis class of (normalized) GaLU networks with norm bounded by
$\sqrt{2}M$.
The Radamacher complexity of $\mathcal{H}_{\sqrt{2}M}$ is given by:
\begin{align*}
\mathcal{R}_m(\mathcal{H}_{\sqrt{2}M}) &=
\sup_{\vx_1, \dots, \vx_m\ \in \mathcal{X}}
\mean{\sigma \sim U(\{\pm1\}^m)}{\sup_{h\in \mathcal{H}_{\sqrt{2}M}}
\frac{1}{m}\sum_{i=1}^m h(\vx_i)\sigma_i} \\
&= \sup_{\vx_1, \dots, \vx_m\ \in \mathcal{X}}
\mean{\sigma \sim U(\{\pm1\}^m)}{\sup_{\norm{\vw} \le \sqrt{2}M}
\frac{1}{m}\sum_{i=1}^m \frac{1}{\sqrt{k}} \sigma_i \Phi_\vu(\vx_i)^\top \vw} \\
\end{align*}
Notice that we have $\norm{\frac{1}{\sqrt{k}}\Phi_\vu(\vx_i)} = 
\sqrt{\frac{1}{k} \sum_{j=1}^k \norm{\1_{\vu_j^\top \vx_i \ge 0} \vx_)}^2}
\le \norm{\vx_i} \le 1$. Therefore, from standard Rademacher analysis
for linear functions with bounded norm (for example in \cite{MLbook}), we get
that $\mathcal{R}_m(\mathcal{H}_{\sqrt{2}M}) \le \frac{\sqrt{2}M}{\sqrt{m}}$.
Notice that the square loss function $\ell(y,\hat{y}) = \frac{1}{2}(y-\hat{y})^2$
is $1$-smooth. Since for every $h \in \mathcal{H}_{\sqrt{2}M}$ and $\vx \in \mathcal{X}$
we have $|h(\vx)| \le \sqrt{2}M$, and we assume that $\mathcal{Y} \subseteq [-1,1]$,
we can assume that the loss function $\ell$ is defined over
$[-\sqrt{2}M, \sqrt{2}M] \times [-1,1]$.
Then for $\hat{y},\hat{y}' \in [-\sqrt{2}M,\sqrt{2}M], y \in [-1,1]$ we have:
\[
|\ell(\hat{y},y)-\ell(\hat{y}',y)| = \frac{1}{2}|\hat{y}^2 -\hat{y}y -\hat{y}'^2
-\hat{y}'y| \le 2M^2 + \sqrt{2}M
\]
Now, for $\hat{h} \in \mathcal{H}_{\sqrt{2}M}$, the GaLU network with weights $\vw^*$,
using Theorem 1 in \cite{srebro_optimistic_2010},
we get with probability at least $1-\delta$ a generalization bound of:
\[
L_{\mathcal{D}}(\hat{h}) \le C \left( \frac{2M^2 \log^3m + (2M^2 + \sqrt{2}M)
\log(1/\delta)}{m} \right)
\]
For some constant $C$.
\end{proof}

\section{Proofs of Section \ref{sec:relu_relation}}
\subsection{Proof of Theorem \ref{thm:failure_equivalence}}
\begin{proof}
Denote $\sigma(x) = \1_{x \ge 0}$ the gate of the GaLU network and
$\phi(x) = [x]_+ = \1_{x \ge 0} \cdot x$ the ReLU activation.
By our assumption, the output of the network is bounded in $[-1,1]$ upon initialization,
so:
\[
L_S(\mathcal{N}^G_{\mW,\mU, \alpha}) = -\frac{1}{m} \sum_{i=1}^m y_i \mathcal{N}^R_{\mU, \alpha}(\vx_i)
\]
Therefore we get for every $j$:
\begin{align*}
\frac{\partial}{\partial \vw_j} L_S(\mathcal{N}^G_{\mW, \mU, \alpha})
&= -\frac{1}{m\sqrt{k}} \sum_{i=1}^m y_i \alpha_j\sigma(\vx_i^\top \vu_j) \vx_i \\
&= -\frac{1}{m\sqrt{k}} \sum_{i=1}^m y_i \alpha_j\phi'(\vx_i^\top \vu_j) \vx_i \\
&= \frac{\partial}{\partial \vu_j}\mathcal{L}_S(\mathcal{N}^G_{\mU, \alpha})
\end{align*}
And the result immediately follows.
\end{proof}

\subsection{Proof of Theorem \ref{thm:relu_similarity}}
Recall that for some vectors $\vu_1, \dots, \vu_k \in \reals^d$,
we denote $\Phi: \mathcal{X} \to \reals^{dk}$ where
 $\Phi_{\mU}(\vx) = \frac{1}{\sqrt{k}}
[\1_{\vu_1^\top \vx \ge 0} \vx, \dots, \1_{\vu_j^\top \vx \ge 0} \vx]$.
For some $\mW = [\vw_1, \dots, \vw_k]$ where $\vw_i \in \reals^d$,
and $\alpha = [\alpha_1, \dots, \alpha_k]$ where $\alpha_i \in \reals$,
we define a vector
$v(\mW, \alpha) = [\alpha_1\vw_1 ~ \dots ~ \alpha_k \vw_k]\in \reals^{dk}$.
Now, we can write:
\[
\mathcal{N}_{\mW,\mU,\alpha}^G(\vx) =
\frac{1}{\sqrt{k}} \Phi_\mU(\vx)^{\top} v(\mW, \alpha), ~
\mathcal{N}_{\mU,\alpha}^R(\vx) =
\frac{1}{\sqrt{k}} \Phi_\mU(\vx)^{\top} v(\mU, \alpha)
\]
We start with the following lemma:
\begin{lemma}
\label{lem:gates_perturbation}
Fix $\delta > 0$, let $k \ge \frac{\pi}{\sqrt{6}d \epsilon^2} \left( \log(2/\delta)
+ d\log(3/\epsilon) \right)$,
and we assume $d > \log(2k/\delta)$.
Assume we draw $\vu_1, \dots, \vu_k \sim \mathcal{N}(0,I_d)$.
Let $\vw_1, \dots, \vw_k$ be some vectors such that for all $j \in [k]$
we have $\norm{\vu_j - \vw_j} \le \epsilon$, for some $\epsilon > 0$.
Then with probability at least $1-\delta$,
we have $\norm{\Phi_{\mU} - \Phi_{\mW}}_\infty \le \sqrt{\frac{5\sqrt{3d}\epsilon}{\sqrt{2\pi}}}$.
\end{lemma}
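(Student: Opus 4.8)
The plan is to reduce the claim to a uniform (over the input sphere) bound on the fraction of gates whose sign is flipped by the perturbation, and then to prove that bound by a covering argument combined with Gaussian anti-concentration. The deterministic reduction comes first. For any $\vx \in \mathbb{S}^{d-1}$ the $j$-th block of $\Phi_\mU(\vx) - \Phi_\mW(\vx)$ equals $\frac{1}{\sqrt{k}}(\1_{\vu_j^\top\vx\ge 0} - \1_{\vw_j^\top\vx\ge 0})\vx$, so
\[
\norm{\Phi_\mU(\vx) - \Phi_\mW(\vx)}^2 = \frac{1}{k}\sum_{j=1}^k \1_{\sign(\vu_j^\top\vx)\ne\sign(\vw_j^\top\vx)}\norm{\vx}^2 \le \frac{1}{k}\sum_{j=1}^k \1_{\sign(\vu_j^\top\vx)\ne\sign(\vw_j^\top\vx)}.
\]
Since $\norm{\vu_j-\vw_j}\le\epsilon$, Cauchy--Schwarz gives $|\vu_j^\top\vx - \vw_j^\top\vx|\le\epsilon$, so a sign disagreement forces $|\vu_j^\top\vx|\le\epsilon$. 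It therefore suffices to bound $\sup_{\vx\in\mathbb{S}^{d-1}}\frac{1}{k}\sum_{j=1}^k \1_{|\vu_j^\top\vx|\le\epsilon}$ by $\frac{5\sqrt{3d}\epsilon}{\sqrt{2\pi}}$ with high probability, since taking square roots then yields the lemma.

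Next I would control the gate norms. Each $\norm{\vu_j}^2\sim\chi^2_d$, so by a standard chi-square tail bound, a union bound over $j\in[k]$, and the hypothesis $d>\log(2k/\delta)$, the event $\gE := \{\max_j\norm{\vu_j}\le\sqrt{3d}\}$ holds with probability at least $1-\delta/2$. This is the only place the hypothesis $d>\log(2k/\delta)$ enters.

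For the uniform bound I would fix (in advance, hence independently of the $\vu_j$) an $\epsilon$-net $\mathcal{M}$ of $\mathbb{S}^{d-1}$ with $|\mathcal{M}|\le(3/\epsilon)^d$. Given $\vx$, choose $\vx_0\in\mathcal{M}$ with $\norm{\vx-\vx_0}\le\epsilon$; on $\gE$, whenever $|\vu_j^\top\vx|\le\epsilon$ we get $|\vu_j^\top\vx_0|\le\epsilon+\norm{\vu_j}\norm{\vx-\vx_0}\le\epsilon(1+\sqrt{3d})=:\epsilon'$, so the count at $\vx$ is dominated by the count at the net point under the enlarged threshold $\epsilon'$. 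For a fixed $\vx_0$ the variables $\1_{|\vu_j^\top\vx_0|\le\epsilon'}$ are i.i.d.\ Bernoulli; since $\vu_j^\top\vx_0\sim N(0,1)$ and the Gaussian density is at most $1/\sqrt{2\pi}$, their mean satisfies $p\le 2\epsilon'/\sqrt{2\pi}$. Applying Hoeffding's inequality at each net point and a union bound over $\mathcal{M}$, the empirical fraction exceeds $p+t$ somewhere with probability at most $(3/\epsilon)^d e^{-2kt^2}$; forcing this below $\delta/2$ needs $2kt^2\ge d\log(3/\epsilon)+\log(2/\delta)$, which is exactly what the stated lower bound on $k$ supplies, yielding $t=\sqrt{\sqrt{6}\,d/(2\pi)}\,\epsilon$.

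Finally I would combine the two good events: the uniform fraction is at most $p+t\le \frac{2(1+\sqrt{3d})\epsilon}{\sqrt{2\pi}}+\sqrt{\tfrac{\sqrt{6}\,d}{2\pi}}\,\epsilon$, and a one-line arithmetic check shows this is at most $\frac{5\sqrt{3d}\epsilon}{\sqrt{2\pi}}$ for all $d\ge 1$; a square root and a union of the two failure events (each $\delta/2$) give the claim. The main obstacle is precisely the uniform-over-$\vx$ control: the counting function is discontinuous, so one cannot Lipschitz-extend it off the net. The resolution is to absorb the net error into the enlarged threshold $\epsilon'=\epsilon(1+\sqrt{3d})$, and it is the dependence of this enlargement on $\max_j\norm{\vu_j}\asymp\sqrt{d}$ that is responsible for the $\sqrt{3d}$ factor in the final bound.
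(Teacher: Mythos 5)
Your proposal is correct and follows essentially the same route as the paper's own proof: the same deterministic reduction (a sign flip forces $|\vu_j^\top\vx|\le\epsilon$ by Cauchy--Schwarz), the same chi-square control of $\max_j\norm{\vu_j}$ using the hypothesis $d>\log(2k/\delta)$, and the same $(3/\epsilon)^d$-net with Hoeffding applied at each net point under a threshold enlarged by $\max_j\norm{\vu_j}\,\epsilon$ to absorb the discontinuity of the counting function, followed by a union bound over the net. The only differences are bookkeeping: you cap $\norm{\vu_j}$ by $\sqrt{3d}$ where the paper uses $B=\sqrt{\sqrt{6}d}\le\sqrt{3d}$, and you write the final bound as mean plus deviation $p+t$ where the paper writes $\frac{4B\epsilon}{\sqrt{2\pi}}+\frac{B\epsilon}{\sqrt{2\pi}}=\frac{5B\epsilon}{\sqrt{2\pi}}$, with the same use of the lower bound on $k$ in the Hoeffding exponent.
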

\begin{proof}
Fix $B = \sqrt{\sqrt{6}d}$, and from Lemma B.12 in \cite{MLbook},
we have that:
\[
\prob{}{\norm{\vu_j}^2 \ge \sqrt{6}d} \le e^{-d} \le \frac{\delta}{2k}
\]
Using the union bound, we have with probability at least $1-\frac{\delta}{2}$,
for all $j \in [k]$ we have $\norm{\vu_j} \le B$, so we assume this holds.
Let $\delta'=\frac{1}{2}(3/\epsilon)^{-d} \delta$.
Fix some $\vx \in \mathcal{X} = \mathbb{S}^{d-1}$, and fix some $j \in [k]$.
Notice that $\vu_j^\top \vx \sim \mathcal{N}(0,1)$, and therefore:
\[
\prob{\vu_j \sim \mathcal{N}}{|\vu_j^\top \vx| \le 2B\epsilon} \le \frac{4B\epsilon}{\sqrt{2\pi}}
\]
Denote $S_\vx = \frac{1}{k} \sum_{j=1}^k \1_{|u_j^\top \vx| \le 2B\epsilon}$,
so $\mean{}{S_\vx} \le \frac{4B\epsilon}{\sqrt{2\pi}}$,
and from Hoeffding's inequality we have:
\[
\prob{}{S_\vx \ge \frac{5B\epsilon}{\sqrt{2\pi}}}
=\prob{}{S_\vx \le \mean{}{S_\vx} + \frac{B\epsilon}{\sqrt{2\pi}}}
\le \exp\left(-2k \frac{B^2\epsilon^2}{2\pi} \right) \le \delta'
\]
For every $\vx' \in \mathcal{X}$
with $\norm{\vx-\vx'} \le \epsilon$, if $|\vu_j^\top \vx| > 2B\epsilon$ then we have:
\[
|\vu_j^\top\vx'| \ge |\vu_j^\top\vx| - \norm{\vx-\vx'}\norm{\vu_j}
\ge |\vu_j^\top\vx| - B\epsilon > B\epsilon \ge \epsilon
\]
For such $\vx'$ we have
$|\vu_j^\top\vx'-\vw_j^\top\vx'| \le \norm{\vu_j-\vw_j}\norm{\vx'} \le \epsilon$,
so $\sign(\vu_j^\top\vx') \ne \sign(\vw_j^\top \vx')$ only if
$|\vu_j^\top\vx'| \le \epsilon$.

Therefore, w.p at least $1-\delta'$ we have for every $\vx' \in \mathcal{X}$
with $\norm{\vx-\vx'} \le \epsilon$:
\begin{align*}
\norm{\Phi_\mU(\vx')-\Phi_\mW(\vx')}^2 &=
\frac{1}{k} \sum_{j=1}^k \1_{\sign(\vu_j^\top\vx') \ne \sign(\vw_j^\top \vx')}
\norm{\vx}^2 \\
&\le \frac{1}{k} \sum_{j=1}^k \1_{|u_j^\top \vx'| \le \epsilon} \\
&\le \frac{1}{k} \sum_{j=1}^k \1_{|u_j^\top \vx| \le 2B\epsilon} \\
&= S_\vx \le \frac{5B\epsilon}{\sqrt{2\pi}}
\end{align*}
Now, there is an $\epsilon$-net of $\mathcal{X}$ of size at most $(3/\epsilon)^d$,
and we denote this net by $N \subseteq \mathcal{X}$. From the union bound
we get that with probability at least $1-(3/\epsilon)^d\delta' = 1-\delta/2$
we have for all $\vx \in N$, and for every $\vx' \in \mathcal{X}$
with $\norm{\vx-\vx'} \le \epsilon$, that:
\[
\norm{\Phi_\mU(\vx')-\Phi_\mW(\vx')} \le \sqrt{\frac{5B\epsilon}{\sqrt{2\pi}}}
\]
In this case, the above inequality holds for every $\vx' \in \mathcal{X}$,
and we get the required.
\end{proof}

The above shows that small perturbation in $\vu_j$-s implies small perturbation
of the map $\Phi_\vu$.
Now, fix some $L$-Lipschitz loss $\ell : \reals \times \mathcal{Y} \to \reals$,
and denote $L_{\mathcal{D}}(f) = 
\mean{(x,y) \sim \mathcal{D}}{\ell(f(x),y)}$.
Fix some $\vv \in \reals^{dk}$ and observe the two functions
$h_{\vu,\vv}(\vx) = \Phi_{\vu}(\vx)^\top \vv$ and
$h_{\vw,\vv}(\vx) = \Phi_{\vw}(\vx)^\top \vv$.
Then we have:
\begin{align*}
|L_{\mathcal{D}}(h_{\vu,\vv})- L_{\mathcal{D}}(h_{\vw,\vv})|
&= |\mean{(x,y) \sim \mathcal{D}}{\ell(h_{\vu,\vv}(x),y)}
- \mean{(x,y) \sim \mathcal{D}}{\ell(h_{\vw,\vv}(x),y)}| \\
&= |\mean{(x,y) \sim \mathcal{D}}{\ell(h_{\vu,\vv}(x),y) - \ell(h_{\vw,\vv}(x),y)}| \\
&\le \mean{(x,y) \sim \mathcal{D}}{|\ell(h_{\vu,\vv}(x),y) - \ell(h_{\vw,\vv}(x),y)|} \\
&\le \mean{(x,y) \sim \mathcal{D}}{L|h_{\vu,\vv}(x) - h_{\vw,\vv}(x)|} \\
&= \mean{(x,y) \sim \mathcal{D}}{L|\Phi_\vu(\vx)^\top \vv - \Phi_\vw(\vx)^\top \vv|} \\
&\le L \norm{\vv} \norm{\Phi_\vu - \Phi_\vw}_\infty  \\
\end{align*}

And this gives the following:
\begin{align*}
L_{\mathcal{D}}(\mathcal{N}_{\mW^*,\mU,\alpha^*}^G) &\le 
L_{\mathcal{D}}(\mathcal{N}_{\mU^*,\mU,\alpha^{**}}^G) \\
&\le L_{\mathcal{D}}(\mathcal{N}_{\mU^*,\mU^*,\alpha^{**}}^G) 
+ L \norm{\frac{1}{\sqrt{k}} v(\mU^*,\alpha^{**})}
\norm{\Phi_{\vu^*} - \Phi_{\vu}}_\infty \\
&= L_{\mathcal{D}}(\mathcal{N}_{\mU^*,\mU^*,\alpha^{**}}^G) 
+ L \sqrt{\frac{1}{k} \sum_{i=1}^k \norm{\alpha_i^{**} \vu_i^*}^2}
\norm{\Phi_{\vu^*} - \Phi_{\vu}}_\infty \\
&\le L_{\mathcal{D}}(\mathcal{N}_{\mU^*,\mU^*,\alpha^{**}}^G) 
+ L \max_i \norm{\alpha_i^{**} \vu_i^*}
\norm{\Phi_{\vu^*} - \Phi_{\vu}}_\infty
\end{align*}
Now, observing that $g_{\vu^*, \vu^*} = f_{\vu^*}$, and using Lemma \ref{lem:gates_perturbation} completes the proof.

\clearpage

\section{Experiments}
\label{sec:experiments}
We showed theoretical results that establish the relation between ReLU
and GaLU. To complete the picture, we now turn to evaluate this relation empirically.
We start with a memorization experiment, where the task at hand is to memorize
a randomly generated sample (as described in \ref{sec:shallow_optimization}).
In this experiment, we draw $m$ examples in dimension $d$,
where both the input and the label are sampled from a Gaussian distribution.
Recall that for this case, we theoretically
showed that a GaLU network needs $\tilde{\Omega}(\frac{m}{d})$ neurons to reach
zero loss. We train both GaLU and ReLU network on this task,
with Adam optimizer, batch size 128 and learning rate of $0.001$
for $100k$ iterations.
Using binary search, we find the minimal $k$ to reach MSE loss $<0.01$.
Each experiment is repeated $5$ times.
We see that for both the ReLU and GaLU networks we get $k \simeq \frac{m}{d}$,
for different sample sizes. The results of this experiments are shown in
\figref{fig:memorization}.

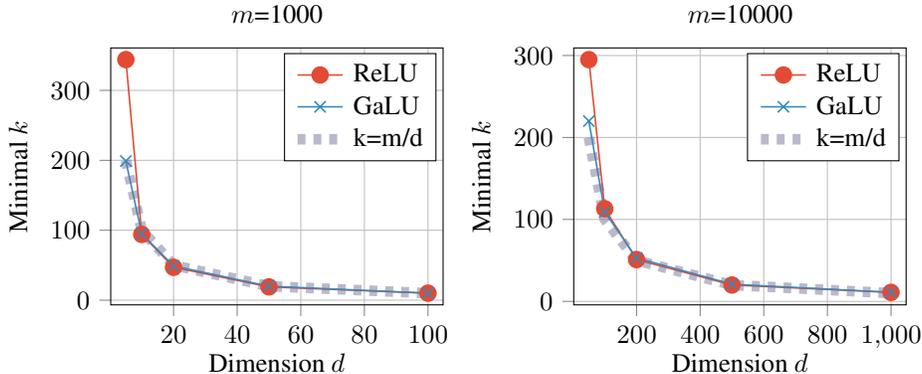
\begin{figure}
\begin{center}
% This file was created by matplotlib2tikz v0.6.18.
\begin{tikzpicture}

\definecolor{color1}{rgb}{0.203921568627451,0.541176470588235,0.741176470588235}
\definecolor{color0}{rgb}{0.886274509803922,0.290196078431373,0.2}
\definecolor{color2}{rgb}{0.596078431372549,0.556862745098039,0.835294117647059}

\begin{axis}[
width=6cm,
height=5cm,
legend cell align={left},
legend entries={{ReLU},{GaLU},{k=m/d}},
tick align=outside,
tick pos=left,
title={$m$=1000},
xlabel={Dimension $d$},
xmajorgrids,
xmin=0.25, xmax=104.75,
ylabel={Minimal $k$},
ymajorgrids,
ymin=-6.7, ymax=360.7
]
\addplot [semithick, color0, mark=*, mark size=3, mark options={solid}]
table [row sep=\\]{%
5	344 \\
10	94 \\
20	47 \\
50	19 \\
100	10 \\
};
\addplot [semithick, color1, mark=x, mark size=3, mark options={solid}]
table [row sep=\\]{%
5	199 \\
10	95 \\
20	49 \\
50	20 \\
100	10 \\
};
\addplot [line width=4pt, CadetBlue, opacity=0.5, dashed]
table [row sep=\\]{%
5	200 \\
10	100 \\
20	50 \\
50	20 \\
100	10 \\
};
\end{axis}

\end{tikzpicture}
% This file was created by matplotlib2tikz v0.6.18.
\begin{tikzpicture}

\definecolor{color1}{rgb}{0.203921568627451,0.541176470588235,0.741176470588235}
\definecolor{color0}{rgb}{0.886274509803922,0.290196078431373,0.2}
\definecolor{color2}{rgb}{0.596078431372549,0.556862745098039,0.835294117647059}

\begin{axis}[
width=6cm,
height=5cm,
legend cell align={left},
legend entries={{ReLU},{GaLU},{k=m/d}},
tick align=outside,
tick pos=left,
title={$m$=10000},
xlabel={Dimension $d$},
xmajorgrids,
xmin=2.5, xmax=1047.5,
ylabel={Minimal $k$},
ymajorgrids,
ymin=-4.25, ymax=309.25
]
\addplot [semithick, color0, mark=*, mark size=3, mark options={solid}]
table [row sep=\\]{%
50	295 \\
100	113 \\
200	51 \\
500	20 \\
1000	11 \\
};
\addplot [semithick, color1, mark=x, mark size=3, mark options={solid}]
table [row sep=\\]{%
50	220 \\
100	109 \\
200	53 \\
500	21 \\
1000	11 \\
};
\addplot [line width=4pt, CadetBlue, opacity=0.5, dashed]
table [row sep=\\]{%
50	200 \\
100	100 \\
200	50 \\
500	20 \\
1000	10 \\
};
\end{axis}

\end{tikzpicture}
\end{center}
\caption{Minimal number of neurons to reach MSE $< 0.01$,
for different sample sizes $m$.}
\label{fig:memorization}
\end{figure}

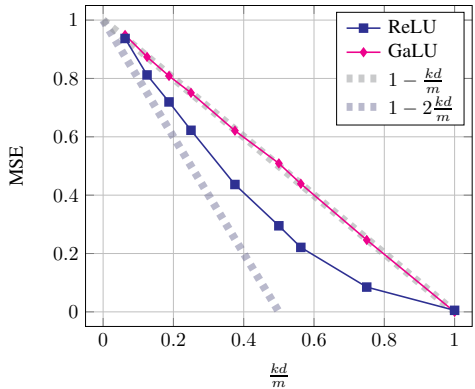
\begin{wrapfigure}{r}{0.5\textwidth}
  \begin{center}
    \begin{tikzpicture}[scale=0.75]
      \begin{axis}[
        xlabel={$\frac{kd}{m}$},
        xmajorgrids,
        xmin=-0.05, xmax=1.05,
        ylabel={MSE},
        ymajorgrids,
        ymin=-0.05, ymax=1.05,
        legend cell align={left},
        reverse legend
      ]
        \addplot [line width=4pt, CadetBlue, opacity=0.5, dashed] table [row sep=\\]{%
          0	  1 \\
          0.5	0 \\
        };
        \addplot [line width=4pt, Gray, opacity=0.5, dashed] table [row sep=\\]{%
          0	1 \\
          1	0 \\
        };
        \addplot [thick, Magenta, mark=diamond*] table [row sep=\\]{%
          0.0625	0.949 \\
          0.125 	0.874 \\
          0.1875	0.8085 \\
          0.25	  0.750666666666667 \\
          0.375	  0.621 \\
          0.5	    0.509 \\
          0.5625	0.439 \\
          0.75	  0.2465 \\
          1	      4.14e-05 \\
        };
        \addplot [thick, Blue, mark=square*] table [row sep=\\]{%
          0.0625	0.937 \\
          0.125	  0.812 \\
          0.1875	0.7195 \\
          0.25	  0.622333333333333 \\
          0.375	  0.4365 \\
          0.5	    0.295 \\
          0.5625	0.221 \\
          0.75	  0.08495 \\
          1	      0.00527 \\
        };
        \legend{$1 - 2\frac{kd}{m}$, $1-\frac{kd}{m}$, GaLU, ReLU};
      \end{axis}
    \end{tikzpicture}
  \end{center}
  \caption{Comparison of GaLU and ReLU networks with a single hidden layer 
           in the under-parametrized case.}
  \label{fig:r1_regression}
\end{wrapfigure}

Next, we turn to observing a memorization task in the under-parametrized case.
In this experiment we observe the loss of the network different choices of $k$
where $k \le \frac{m}{d}$. In this case, the loss of the GaLU network
behaves like $1-\frac{kd}{m}$, as predicted by our theoretical analysis.
A ReLU network, on the other hand, achieves slightly better performance than the GaLU
network in this regime, but its loss is lower bounded by
$1-2\frac{kd}{m}$. In other words, a GaLU network with $2k$ neurons
achieves the same performance as a ReLU network with $k$ neurons,
so a ReLU network gives only a constant gain in parameter utilization.
The results of this experiments are shown in \figref{fig:r1_regression}.

Going beyond a pure memorization task, we observe the behavior of ReLU
and GaLU on linearly separable data. It has been shown \cite{brutzkus2017sgd}
that linearly separable data is learnable by neural-networks,
with sample complexity similar to a linear classifier.
Therefore, this task is an interesting benchmark to compare the performance
of ReLU and GaLU networks. In this experiment we draw examples from a
Gaussian distributions in $\reals^{100}$ and uniformly choose a vector
$w$ on the sphere, to be the linear separator. 
We use $50k$ examples for train and $10k$ examples for test,
filtering only examples with margin $\ge 0.01$.
Here we train both GaLU and ReLU networks with the Adam optimizer,
using learning rate $0.001$, for $100k$ iterations and batch size $128$,
comparing different network widths.
Each experiment is repeated $3$ times, and the results are averaged over the experiments.
Figure \ref{fig:parity_linear} shows the accuracy on the test set in this
experiment. Note that both ReLU and GaLU achieve very high accuracy,
with visible advantage to the ReLU network.

Next, we observe the performance of GaLU and ReLU on MNIST and
Fashion-MNIST datasets. Training is performed as described previously
A comparison of the performance of various network widths
on the test data is shown in Figure \ref{fig:mnist}.
Again, we observe similar behavior, with ReLU networks
performing slightly better than GaLU.

\begin{figure}
\begin{center}
% This file was created by matplotlib2tikz v0.6.18.
\begin{tikzpicture}

\definecolor{color1}{rgb}{0.203921568627451,0.541176470588235,0.741176470588235}
\definecolor{color0}{rgb}{0.886274509803922,0.290196078431373,0.2}

\begin{groupplot}[group style={group size=2 by 1, horizontal sep=2cm, vertical sep=2cm}]
\nextgroupplot[
width=6cm,
height=5cm,
tick align=outside,
tick pos=left,
title={MNIST},
xlabel={$k$},
xmajorgrids,
xmin=10.4, xmax=133.6,
ylabel={1 - Test Accuracy},
ymajorgrids,
ymin=0, ymax=0.2,
yticklabels={0,0,0.05,0.1,0.15,0.2}
]
\addplot [semithick, color0, mark=*, mark size=3, mark options={solid}, forget plot]
table [row sep=\\]{%
16	0.0656 \\
32	0.0368 \\
64	0.028 \\
128	0.0239 \\
};
\addplot [semithick, color1, mark=x, mark size=3, mark options={solid}, forget plot]
table [row sep=\\]{%
16	0.117 \\
32	0.101 \\
64	0.0612 \\
128	0.0465 \\
};
\nextgroupplot[
width=6cm,
height=5cm,
legend cell align={left},
legend entries={{ReLU},{GaLU}},
legend pos={south east},
tick align=outside,
tick pos=left,
title={Fashion MNIST},
xlabel={$k$},
xmajorgrids,
xmin=10.4, xmax=133.6,
ylabel={1 - Test Accuracy},
ymajorgrids,
ymin=0, ymax=0.2,
yticklabels={0,0,0.05,0.1,0.15,0.2}
]
\addplot [semithick, color0, mark=*, mark size=3, mark options={solid}]
table [row sep=\\]{%
16	0.155 \\
32	0.147 \\
64	0.132 \\
128	0.117 \\
};
\addplot [semithick, color1, mark=x, mark size=3, mark options={solid}]
table [row sep=\\]{%
16	0.18 \\
32	0.177 \\
64	0.164 \\
128	0.146 \\
};
\end{groupplot}

\end{tikzpicture}
\end{center}
\caption{Performance on MNIST and Fashion-MNIST.}
\label{fig:mnist}
\end{figure}
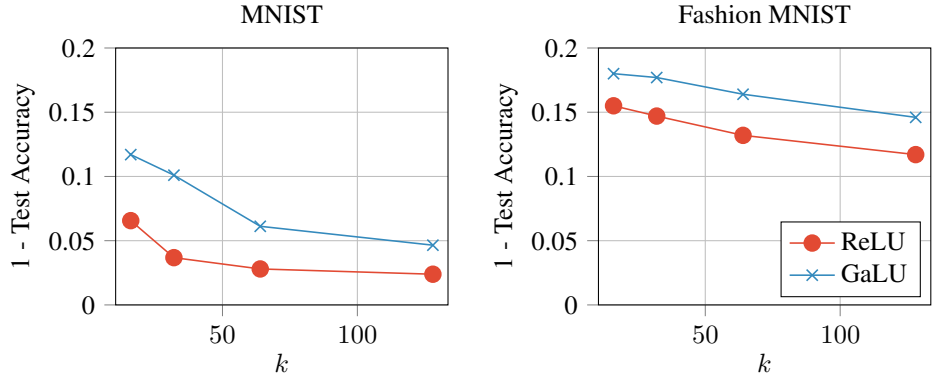

Finally, we move to observing a failure case.
We test GaLU and ReLU networks on the parity task,
which is known to be a hard task for neural-networks in general
\cite{shalev2017failures}.
In this task, we draw uniformly examples s.t $x \sim Uni(\{\pm1\}^{100})$,
and setting the labels to be $y = \prod_{i=1}^{100} x_i$.
So the label of the example $x$ is $1$ if the number of $-1$-s in the 
example is even. Using again $50k$ examples for a training set and
$10k$ examples as a test set, with a training scheme similar to before,
we observe that both GaLU and ReLU networks completely fail in this
task, achieving only chance-level performance.
This is shown in Figure \ref{fig:parity_linear}.

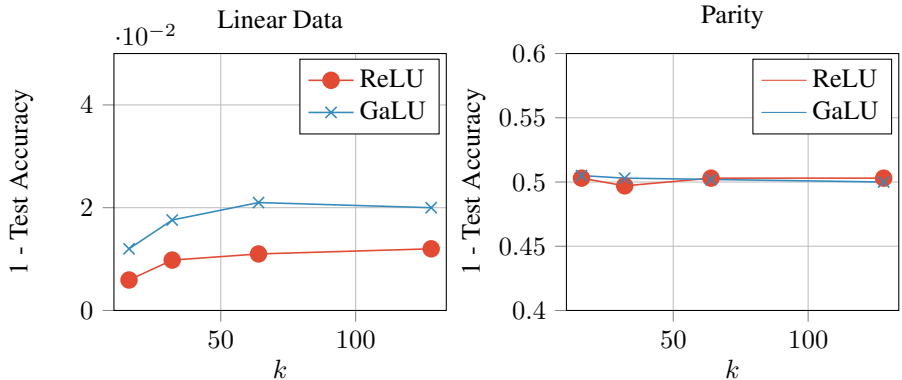
\begin{figure}[h]
\begin{center}
% This file was created by matplotlib2tikz v0.6.18.
\begin{tikzpicture}

\definecolor{color1}{rgb}{0.203921568627451,0.541176470588235,0.741176470588235}
\definecolor{color0}{rgb}{0.886274509803922,0.290196078431373,0.2}

\begin{axis}[
title={Linear Data},
width=6cm,
height=5cm,
legend cell align={left},
legend entries={{ReLU},{GaLU}},
tick align=outside,
tick pos=left,
xlabel={$k$},
xmajorgrids,
xmin=10.4, xmax=133.6,
ylabel={1 - Test Accuracy},
ymajorgrids,
ymin=0, ymax=0.05
]
\addplot [semithick, color0, mark=*, mark size=3, mark options={solid}]
table [row sep=\\]{%
16	0.00593 \\
32	0.0098 \\
64	0.011 \\
128	0.012 \\
};
\addplot [semithick, color1, mark=x, mark size=3, mark options={solid}]
table [row sep=\\]{%
16	0.012 \\
32	0.0176 \\
64	0.021 \\
128	0.02 \\
};
\end{axis}

\end{tikzpicture}
% This file was created by matplotlib2tikz v0.6.18.
\begin{tikzpicture}

\definecolor{color1}{rgb}{0.203921568627451,0.541176470588235,0.741176470588235}
\definecolor{color0}{rgb}{0.886274509803922,0.290196078431373,0.2}

\begin{axis}[
title={Parity},
width=6cm,
height=5cm,
legend cell align={left},
legend entries={{ReLU},{GaLU}},
tick align=outside,
tick pos=left,
xlabel={$k$},
xmajorgrids,
xmin=10.4, xmax=133.6,
ylabel={1 - Test Accuracy},
ymajorgrids,
ymin=0.4, ymax=0.6
]
\addlegendimage{no markers, color0}
\addlegendimage{no markers, color1}
\addplot [semithick, color0, mark=*, mark size=3, mark options={solid}]
table [row sep=\\]{%
16	0.503 \\
32	0.497 \\
64	0.503 \\
128	0.503 \\
};
\addplot [semithick, color1, mark=x, mark size=3, mark options={solid}]
table [row sep=\\]{%
16	0.505 \\
32	0.503 \\
64	0.502 \\
128	0.5 \\
};
\end{axis}

\end{tikzpicture}
\end{center}
\caption{Performance on linearly separable data and on the
parity task.}
\label{fig:parity_linear}
\end{figure}

\end{document}